\newcommand*{\citet}[1]{\AtNextCite{\AtEachCitekey{\defcounter{maxnames}{2}}} \textcite{#1}}
\newcommand*{\citep}[1]{\cite{#1}}
\newcommand{\lp}{\left}
\newcommand{\rp}{\right}
\newcommand{\me}{\mathrm{e}}
\renewcommand{\phi}{\varphi}
\newcommand{\Var}{\mathrm{Var}}
\newcommand{\R}{\mathbb{R}}
\newcommand{\E}{\mathbb{E}}
\def \Exp{{\mathbb{E}}}
\def \Pr{{\mathbb{P}}}
\def \R{{\mathbb{R}}}
\def \S{{\mathbb{S}}}
\newcommand{\va}{\mathbf{a}}
\newcommand\numberthis{\addtocounter{equation}{1}\tag{\theequation}}
\newcommand{\vtanh}{\mathbf{tanh}}
\newcommand{\fudge}{\xi}
\theoremstyle{plain}
\newtheorem{theorem}{Theorem}
\newtheorem*{theorem*}{Theorem}
\newtheorem{lemma}{Lemma}
\newtheorem{definition}{Definition}
\newtheorem{corollary}{Corollary}
\theoremstyle{definition}
\newtheorem{remark}{Remark}
\newtheorem{setting}{Setting}
\newtheorem{assumptions}{Assumptions}
\begin{document}

\title{Statistical Estimation from Dependent Data}

\author{
    Yuval Dagan 
    \\EECS \& CSAIL, MIT\\
    \tt{dagan@mit.edu}
    \and
	Constantinos Daskalakis
	\\EECS \& CSAIL, MIT\\
	\tt{costis@csail.mit.edu}
	\and
	Nishanth Dikkala
	\\GOOGLE RESEARCH \\
	\tt{nishanthd@google.com}
	\and
	Surbhi Goel
	\\ Microsoft Research NYC\\
	\tt{surbgoel@microsoft.com}
	\and
	Anthimos Vardis Kandiros
	\\EECS \& CSAIL, MIT\\
	\tt{kandiros@mit.edu}
	}

\maketitle
\begin{abstract}
We consider a general statistical estimation problem wherein binary labels across different observations are not independent conditioned on their feature vectors, but dependent, capturing  settings where e.g.~these observations are collected on a spatial domain, a temporal domain, or a social network, which induce dependencies. We model these dependencies in the language of Markov Random Fields and, importantly, allow these dependencies to be substantial, i.e.~do not assume that the Markov Random Field capturing these dependencies is in high temperature. As our main contribution we provide algorithms and statistically efficient estimation rates for this model,  giving several instantiations of our bounds in logistic regression, sparse logistic regression, and neural network settings with dependent data. Our estimation guarantees follow from novel results for estimating the parameters (i.e.~external fields and interaction strengths) of Ising models from a {\em single} sample. {We evaluate our estimation approach on real networked data, showing that it outperforms standard regression approaches that ignore dependencies, across three text classification datasets: Cora, Citeseer and Pubmed.}

\end{abstract}


\section{Introduction} \label{sec:intro}

The standard supervised learning framework assumes access to a collection $(x_i,y_i)_{i=1}^n$ of observations, where the labels $y_1,\ldots,y_n \in {\cal Y}$ are  independent conditioning on the feature vectors $x_1,\ldots,x_n \in {\cal X}$. 
Further, it is common to assume that each label $y_i$ is  independent of $\{x_j\}_{j \ne i}$ conditioning on $x_i$,  i.e.~that
$$\Pr[y_{1\ldots n} \mid x_{1\dots n}] = \prod_{i=1}^n \Pr[y_i\mid x_i],$$
and, moreover, that the observations share the same generative process $\Pr[y \mid x]$  sampling a label conditioning on a feature vector. Under these assumptions,
a common goal is to identify a model $\Pr_{\theta}[y \mid x]$ from some parametric class, which approximates the true generative process $\Pr[y \mid x]$ in some precise sense, or, under realizability assumptions, to estimate the  parameter $\theta$ of the true generative process. A special case of this problem is the familiar logistic regression problem, where each label lies in ${\cal Y}=\{\pm 1\}$, each feature vector lies in $\mathbb{R}^d$ and for some $\theta \in  \mathbb{R}^d$ it is assumed that
\begin{align}\Pr[y_{1\ldots n} \mid x_{1\dots n}] = \prod_{i=1}^n \frac{1}{1+\exp(-2(\theta^{\top}x_i) y_i)}. \label{eq:logistic regression model}
\end{align}

The standard assumptions outlined above are, however, too strong and almost never truly hold in practice. Indeed, they become especially prominent when it comes to observations collected in a temporal domain, a spatial domain or a social network, which naturally induce dependencies among the observations. 
Such dependencies could arise from physical constraints, causal relationships among observations, or peer effects in a social network. They have been studied extensively in many practical fields, and from a theoretical standpoint in econometrics and statistical learning theory. See section~\ref{sec:related} for further discussion. 

In this paper we study such  dependencies  conforming to the following general class of models: 

\begin{equation}
\Pr[y_{1\ldots n} \mid x_{1\dots n}] \propto \exp(-\beta \cdot H(\vec{y})) \cdot \prod_{i=1}^n\exp(f_\theta(x_i,y_i))
\equiv
\exp\left(-\beta \cdot H(\vec{y}) + \sum_{i=1}^n f_\theta(x_i,y_i)\right) , \label{eq:very general}
\end{equation}

where $f_\theta$ is an (unknown) function from some parametric class, $H$ is a (known) function that captures the dependency structure and $\beta$ is an (unknown) parameter that captures the strengths of dependencies. It should be appreciated that Model~\eqref{eq:very general} is more general than the standard supervised learning problem without dependencies, which results from setting $\beta=0$. Once we allow $\beta\ne 0$, Model~\eqref{eq:very general} becomes more expressive in capturing the dependencies among the observations, which become stronger with higher values of $\beta$. The challenging estimation problem that arises, which motivates our work, is whether the model parameters $\theta$ and/or $\beta$ can  be identified, and at what rates, in the presence of the intricate dependencies arising from this model. Importantly, while the labels are intricately dependent, we do not have access to multiple independent samples from the conditional distribution~\eqref{eq:very general},  but a {\em single} sample from that distribution!

We focus here on a special case of Model~\eqref{eq:very general} wherein the labels are binary and the function $H$ is pairwise separable, studying models of the following form:

\begin{equation}
\Pr_{\theta,\beta}[y_{1\ldots n} \mid x_{1\dots n}] = \frac{\exp(\beta y^{\rm T} A y) \prod_{i=1}^n\exp(y_if_\theta(x_i))}{Z_{\theta,\beta}}\\ \equiv
\frac{1}{Z_{\theta,\beta}}\exp\left(\beta y^{\rm T} A y + \sum_i y_i f_\theta(x_i)  \right), \label{eq:general binary}
\end{equation}

where $f_\theta$ is an unknown function from some parametric class $\mathcal{F} = \{f_\theta: {\cal X}\rightarrow \mathbb{R}~|~ \theta \in \Theta\}$, $A$ is a known, symmetric \emph{interaction matrix} with zeros on the diagonal, $\beta$ is an unknown parameter, and $Z_{\theta,\beta}$ is the normalizing constant.
In other words, under~\eqref{eq:general binary}, the labels $y_1,\ldots,y_n$ are sampled from an $n$-variable \emph{Ising model} with \emph{external field} $f_\theta(x_i)$ on variable~$i$, \emph{interaction strength} $A_{ij}\equiv A_{ji}$ between variables $i$ and~$j$, and \emph{inverse temperature} $\beta$. 
Notice that $A_{ij}$ encourages $y_i$ and $y_j$ to have the same or opposite values, depending on its sign, however, this ``local encouragement'' can be overwritten by indirect interactions through other values of $y_k$. Such indirect interactions make this model rich in spite of the simple form of $H(y) = y^\top A y$ and as a consequence, it has found profound applications in a range of disciplines, including Statistical Physics, Computer Vision, Computational Biology, and the Social Sciences; see e.g.~\cite{GemanG86,Ellison93,Felsenstein04,chatterjee2005concentration,DaskalakisMR11,daskalakis2017concentration}.

It is clear that Model~\eqref{eq:general binary} generalizes~\eqref{eq:logistic regression model}, which can be obtained by setting $\beta=0$, ${\cal X}=\mathbb{R}^d$, and $f_\theta(x)=\theta^{\rm T}x$. It also generalizes the model studied by~\citet{daskalakis2019regression}, which results from setting $f_\theta(x)=\theta^{\rm T}x$ and $0 \le \beta \le O(1)$, as well as the model studied by~\citet{ghosal2018joint,bhattacharya2018inference,chatterjee2007estimation}, which results from taking $f_\theta(x)$ to be a constant function.

We study under what conditions on the function class $\mathcal{F}$, the interaction matrix $A$, and the feature vectors $(x_i)_i$, and at what rates can the parameters $\theta$ and/or $\beta$ of Model~\eqref{eq:general binary} be estimated given a collection $(x_i,y_i)_{i=1}^n$ of observations, where the labels $y_1,\ldots,y_n$ are sampled from~\eqref{eq:general binary} conditioning on the feature vectors $x_1,\ldots,x_n$. As explained earlier, in comparison to the standard supervised learning setting without dependencies, the statistical challenge that arises here is that, while the labels $y_1,\ldots,y_n$ are intricately dependent, we do not have access to multiple independent samples from the conditional distribution~\eqref{eq:general binary}, but a single sample from that distribution. Thus, it is not clear how to extract good estimates of the parameters from our observations and where to find statistical power to bound the error of these estimates from the true parameters. As a consequence, only limited theoretical results in this area are known.

\subsection{Overview of Results} \label{sec:results} We provide a general algorithmic approach which yields efficient statistical rates for the estimation of $\theta$ and/or $\beta$ of Model~\eqref{eq:general binary} for general function classes $\mathcal{F}$, in terms of the metric entropy of $\mathcal{F}$. We also prove information theoretic lower bounds, which combined with our upper bounds  characterize the min-max estimation rate of the problem up to a certain factor, discussed below. Before stating our general result as Theorem~\ref{thm:costis:general upper}, we present some corollaries of this theorem in more familiar settings. All the theorems that follow are also presented and proved in more detail in the Supplementary Material. Finally, in all statements below we use the following notation and assumptions, which summarize the already described setting.
\begin{assumptions}[and useful Notation]\label{assump}
We are given observations $(x_i,y_i)_{i=1}^n$, where $y_1,\ldots,y_n$ are sampled from~\eqref{eq:general binary} conditioning on $x_1,\ldots,x_n$, using some unknown parameters $\theta^*\in \Theta$ and $\beta^*\in [-B,B]$, and some known $A$, normalized such that $\|A\|_\infty = 1$. We further assume that $|f_\theta(x_i)| \le M$, for all $i$ and $\theta \in \Theta$. In all statements below, $\hat{\theta}$ and $\hat{\beta}$ refer to the estimates produced by the algorithm described in Section~\ref{sec:algorithm}, i.e.~the Maximum Pseudo-Likelihood Estimator (MPLE). Moreover, we let $\lesssim$ denote an inequality up to factors that are  singly-exponential in $M$ and $B$, a necessary dependence on those parameters when $\lesssim$ is used, and are independent of all other parameters. In particular, when $M,B=O(1)$, $\lesssim$ denotes inequality up to a constant. 
\end{assumptions}
Under the assumptions on our observations, and notation introduced above, we consider two settings to illustrate our general result (Theorem~\ref{thm:costis:general upper}), namely linear classes (Setting~\ref{setting:our maing setting}) and neural network classes (Setting~\ref{set2}).
\begin{setting}[Linear Classes] \label{setting:our maing setting}
Make Assumptions~\ref{assump}, suppose  $x_i \in \mathbb{R}^d$ and $\|x_i\|_2 \le M$, for all $i$, and suppose that $f_\theta$ is linear, i.e.~$f_\theta(x_i) = x_i^\top \theta$, for some $\theta \in \mathbb{R}^d$ and  $\|\theta\|_2 \le 1$.
Denote by $X$ the matrix whose rows are $x_1,\ldots,x_n$ and by $\kappa$ the minimum eigenvalue of $\frac{X^TX}{n}$, or its minimum restricted eigenvalue in the sparse setting of Theorem~\ref{thm:costis:sparse linear}. We suppress from our bounds of  Theorems~\ref{thm:costis:linear} and~\ref{thm:costis:sparse linear} a factor~of~$1/\kappa$.
\end{setting}

\begin{restatable}[Linear Class]{theorem}{linear} \label{thm:costis:linear}
Suppose Setting~\ref{setting:our maing setting}.
Then, with probability $\ge 1-\delta$,
\[
\|\hat{\theta}-\theta^*\|_2^2 + |\hat{\beta}-\beta^*|^2
\lesssim \frac{d \log n + \log(1/\delta)}{\|A\|_F^2}.
\]
\end{restatable}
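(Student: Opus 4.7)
The plan is to analyze the Maximum Pseudo-Likelihood Estimator (MPLE), which minimizes the convex objective $\ell_n(\theta,\beta) = \sum_{i=1}^n \log\bigl(1 + \exp(-2 y_i \mu_i(\theta,\beta))\bigr)$ over the feasible parameter set, where $\mu_i(\theta,\beta) := x_i^\top\theta + \beta\,(Ay)_i$. This sidesteps the intractable normalizer $Z_{\theta,\beta}$ since, conditionally on $y_{-i}$, $y_i$ is a $\pm 1$ coin with mean $\tanh(\mu_i^\ast)$. Writing $\psi=(\theta,\beta)$, $\phi_i=(x_i,(Ay)_i) \in \R^{d+1}$, $\Phi \in \R^{n\times(d+1)}$ with rows $\phi_i$, and $g_i := y_i - \tanh(\mu_i^\ast)$, the gradient is $\nabla\ell_n(\psi^\ast) = -\Phi^\top g$. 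Crucially, because $A$ has zero diagonal, each $\phi_i$ is $y_{-i}$-measurable and $\E[g_i\mid y_{-i}]=0$, so the gradient is a sum of \emph{conditionally mean-zero}, vector-valued increments — this is the lever that replaces independence.

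For any fixed unit $u \in \R^{d+1}$, the linear statistic $u^\top \Phi^\top g = \sum_i g_i (\phi_i^\top u)$ is a linear functional of an Ising sample, and satisfies Chatterjee-type sub-Gaussian concentration around $0$ with variance proxy $\sum_i (\phi_i^\top u)^2 = \|\Phi u\|_2^2$ (via the Glauber exchangeable-pair construction applied to $\sum_i g_i c_i$ with deterministic, $y_{-i}$-measurable coefficients $c_i = \phi_i^\top u$). Discretizing with an $\varepsilon$-net of the unit ball in $\R^{d+1}$ and taking a union bound gives, with probability $\ge 1-\delta$,
\[
\nabla\ell_n(\psi^\ast)^\top (\Phi^\top\Phi)^{-1} \nabla\ell_n(\psi^\ast) \;=\; g^\top P_\Phi\, g \;\lesssim\; d\log n + \log(1/\delta),
\]
where $P_\Phi$ is the orthogonal projection onto the column span of $\Phi$.

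For the Hessian, a direct computation yields $\nabla^2\ell_n(\psi) = \Phi^\top D(\psi)\Phi$ with $D(\psi)=\operatorname{diag}(\operatorname{sech}^2(\mu_i(\psi)))$. Boundedness of the parameter domain gives $|\mu_i(\psi)|\le M+B$ so $D(\psi)\succeq c(M,B)\,I_n$, hence $\nabla^2\ell_n \succeq c\,\Phi^\top\Phi$ everywhere on the domain. Convexity of $\ell_n$, Taylor's theorem along $[\psi^\ast,\hat\psi]$, and optimality of $\hat\psi$ then combine with the gradient bound to yield
\[
\|\hat\psi - \psi^\ast\|_{\Phi^\top\Phi}^2 \;\lesssim\; \nabla\ell_n(\psi^\ast)^\top (\Phi^\top\Phi)^{-1}\nabla\ell_n(\psi^\ast) \;\lesssim\; d\log n + \log(1/\delta).
\]
Dividing by $\lambda_{\min}(\Phi^\top\Phi)$ turns this energy bound into a Euclidean bound on $\hat\psi - \psi^\ast$.

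The hard part is the final lower bound $\lambda_{\min}(\Phi^\top\Phi) \gtrsim \kappa\|A\|_F^2$ (with $\kappa$ suppressed per the Setting). The $\theta$-block of $\Phi^\top\Phi$ is $X^\top X \succeq n\kappa\, I_d \ge \kappa\|A\|_F^2\, I_d$, using $\|A\|_F^2 \le n$ (since $\|A\|_\infty=1$ controls row sums, so $\sum_j A_{ij}^2 \le 1$). The $\beta$-block is $\|Ay\|_2^2 = y^\top A^\top A y$, and I must show $\|Ay\|_2^2 \gtrsim \|A\|_F^2$ with high probability \emph{without} a high-temperature assumption on $\beta^\ast$; this is the technical crux and is handled via a Hanson--Wright-type quadratic-form concentration for Ising models, exploiting that the smoothed gradient of this quadratic is bounded to get concentration around an expectation of order $\|A\|_F^2$. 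The cross-block $X^\top Ay$ is then controlled via Schur complement, since $(Ay)^\top P_X (Ay) \le \|Ay\|_2^2$ and both are of comparable order to $\|A\|_F^2$. Combining gives $\lambda_{\min}(\Phi^\top\Phi) \gtrsim \kappa\|A\|_F^2$, and the stated rate follows.
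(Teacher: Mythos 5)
There is a genuine gap, and it sits exactly where you flag "the technical crux," plus one step earlier. First, the gradient bound: for a direction $u=(u_\theta,u_\beta)$ with $u_\beta\neq 0$, the statistic $\sum_i g_i(\phi_i^\top u)=\sum_i g_i\bigl(x_i^\top u_\theta+u_\beta (Ay)_i\bigr)$ does \emph{not} have deterministic coefficients: $(Ay)_i$ is a function of $y_{-i}$, so the functional is a quadratic polynomial of the Ising sample, and the Glauber exchangeable-pair lemma with fixed coefficients (the paper's Lemma~\ref{lem:conc-from-cha}) does not apply as stated — resampling $y_j$ perturbs every coefficient $c_i$ as well as $g_i$. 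The paper's corresponding statement (Lemma~\ref{lem:derivative-one-concentration}, adapted from \cite{dagan2020estimating}) has a genuinely different shape: a two-regime tail $\max\bigl(t(\|\beta A\|_F+\max_j\|\E[\beta A\sigma+h\mid\sigma_{-I_j}]\|_2),\,t^2\|\beta A\|_2\bigr)$ obtained via an $O(\log n)$-block decomposition, not sub-Gaussianity with the realized proxy $\|\Phi u\|_2^2$. Second, and more seriously, the curvature step: (i) Hanson--Wright-type concentration for Ising models is a high-temperature (Dobrushin-type) phenomenon, and the theorem makes no such assumption — the paper's only quadratic-variance bound (Theorem~\ref{thm:var-bound}) is invoked solely in the lower-bound section under $\|\beta A\|_\infty\le 1-\alpha$; moreover you would still need $\E\|Ay\|_2^2\gtrsim\|A\|_F^2$, which is not addressed. (ii) The Schur-complement remark is vacuous: $(Ay)^\top P_X(Ay)\le\|Ay\|_2^2$ gives no lower bound on the complement, and nothing prevents $Ay$ from lying (nearly) in the column span of $X$, in which case $\lambda_{\min}(\Phi^\top\Phi)$ can be arbitrarily smaller than $\|A\|_F^2$. (iii) Even granting the anti-concentration fact that \emph{is} true (the paper's Lemma~\ref{lem:bnd-by-Forb-general}: $\|A\sigma+h\|_2\ge c\|A\|_F$ for each fixed shift $h$, failing with probability $\exp(-c\|A\|_F^2/\|A\|_2^2)$), a union bound over an $\varepsilon$-net of $(d+1)$-dimensional directions cannot deliver "$\lambda_{\min}(\Phi^\top\Phi)\gtrsim\|A\|_F^2$ with probability $1-\delta$" in the regimes the theorem covers, since $\|A\|_F/\|A\|_2$ may be $O(1)$ and the per-direction failure probability is then a constant.

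The paper's proof avoids a minimum-eigenvalue statement altogether. It lower-bounds, for each \emph{fixed} candidate $(\theta,\beta)$, the random quantity $\|X(\theta-\theta^*)+(\beta-\beta^*)Ay\|_2$ by the non-random mean-field quantity $\psi$ of \eqref{eq:defpsi} (Lemmas~\ref{lem:bnd-by-Forb} and~\ref{lem:bnd-by-mean}, the latter via exchangeable pairs applied to the conditionally centered residual), with a failure probability that shrinks with the size of the deviation being excluded; it then unions over a covering net of the class at the level of pseudo-likelihood values (Lemmas~\ref{lem:one-direction-final} and~\ref{lem:upperbd-kappa}), so that a small $\|A\|_F$ degrades the rate rather than the confidence, and finally converts prediction error to parameter error through $\kappa$ and $\mathcal{C}_1\le 1/(4\|A\|_F^2)$ (Lemma~\ref{lem:C-to-Forb}). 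Your Taylor/convexity skeleton and the Hessian computation $\nabla^2\ell_n=\Phi^\top D\Phi\succeq c(M,B)\,\Phi^\top\Phi$ are consistent with the paper, but to close the argument you would need to replace both the "deterministic-coefficient" concentration claim and the $\lambda_{\min}$/Hanson--Wright step with machinery of this kind.
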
 

\begin{theorem}[Sparse Linear Class] \label{thm:costis:sparse linear}
Suppose Setting~\ref{setting:our maing setting} and additionally  that $\|\theta\|_1 \le s$.
Then, w.pr. $\ge 1-\delta$,

\[
\|\hat{\theta}-\theta^*\|_2^2 + |\hat{\beta}-\beta^*|^2
\lesssim  \frac{(n^2 s\log(d))^{1/3} + \log(1/\delta)}{\|A\|_F^2}.
\]
\end{theorem}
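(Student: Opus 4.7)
The plan is to derive this bound as a specialization of the general upper bound Theorem~\ref{thm:costis:general upper} to the sparse linear function class $\mathcal{F} = \{x \mapsto \theta^\top x : \|\theta\|_1 \le s\}$, combined with a Maurey-type covering estimate for this class and a restricted-eigenvalue argument to pass from prediction-level error back to parameter-level error in $\ell_2$.

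First, I would invoke Theorem~\ref{thm:costis:general upper} on $\mathcal{F}$. That theorem expresses the MPLE error as an infimum over a scale parameter $\epsilon$, trading off a metric-entropy term $\log \mathcal{N}(\epsilon,\mathcal{F})$ (with respect to the empirical sup-norm over $x_1,\ldots,x_n$) against a term linear in $n\epsilon$, giving schematically
\[
\|A\|_F^2 \cdot \bigl(\|\hat\theta - \theta^*\|_2^2 + |\hat\beta - \beta^*|^2\bigr) \;\lesssim\; \inf_{\epsilon>0}\Bigl[\,\log \mathcal{N}(\epsilon,\mathcal{F}) + n\epsilon\,\Bigr] + \log(1/\delta).
\]
A consistency check is that the full $d$-dimensional class has $\log \mathcal{N}(\epsilon) \asymp d\log(1/\epsilon)$, and the optimal $\epsilon \sim 1/n$ recovers the $d\log n$ rate of Theorem~\ref{thm:costis:linear}.

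Second, I would estimate the covering number of the sparse class via Maurey's empirical-approximation lemma: for $\|\theta\|_1 \le s$ and $\|x_i\|_2 \le M$, any $f_\theta$ is $\epsilon$-approximated uniformly on the data by a combination of $O(1/\epsilon^2)$ atoms drawn from the $2d$ signed coordinate directions, and a union bound over supports yields $\log \mathcal{N}(\epsilon,\mathcal{F}) \lesssim s\log(d)/\epsilon^2$ (absorbing the $M$-dependence into $\lesssim$). Substituting this into the general bound and balancing $s\log d/\epsilon^2$ against $n\epsilon$ at $\epsilon^\ast = (s\log d/n)^{1/3}$ yields exactly the leading $(n^2 s\log d)^{1/3}$ term, with the additive $\log(1/\delta)$ surviving unchanged.

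Finally, the conversion of prediction discrepancy into $\|\hat\theta-\theta^*\|_2^2$ is handled by the minimum restricted eigenvalue $\kappa$ of $X^\top X/n$, which multiplies the right-hand side by $1/\kappa$ and is suppressed by the notational convention of Setting~\ref{setting:our maing setting}. The main obstacle I anticipate is tightening the covering-entropy estimate: a naive Maurey bound scales like $s^2/\epsilon^2$ and, after balancing, would produce the weaker rate $(n^2 s^2\log d)^{1/3}$. Recovering the linear-in-$s$ exponent requires either a sharper sup-norm cover tailored to the $\ell_1$-ball geometry, or exploiting that, thanks to the restricted-eigenvalue/cone condition, $\hat\theta-\theta^*$ lies in an effectively $s$-sparse cone (as in standard Lasso-type analyses), so that one factor of $s$ can be absorbed when translating the prediction-level oracle inequality into the $\ell_2$ parameter bound.
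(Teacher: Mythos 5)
Your proposal follows the paper's proof essentially step for step: invoke Theorem~\ref{thm:costis:general upper} with $\mathcal{C}_1 \le 1/\|A\|_F^2$, reduce the covering number of $\mathcal{F}$ to that of the $\ell_1$-ball in $\ell_2$ (using $\|x_i\|_2 \le M$ so $d(f_{\theta_1},f_{\theta_2}) \le M\|\theta_1-\theta_2\|_2$), use $\log N(\mathcal{F},X,\epsilon) \lesssim s\log d/\epsilon^2$, balance at $\epsilon = (s\log d/n)^{1/3}$ to get the $(n^2 s\log d)^{1/3}$ term, and pass to $\|\hat\theta-\theta^*\|_2^2$ via the restricted eigenvalue $\kappa$ suppressed by Setting~\ref{setting:our maing setting}. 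The only divergence is how the entropy bound is obtained: the paper does not run a Maurey argument or any Lasso-style cone-condition step, but simply quotes the $\ell_1$-ball metric-entropy bound $Cs\log d/\epsilon^2$ from \cite{raskutti2011minimax}, so the $s$-versus-$s^2$ obstacle you anticipate is resolved there purely by citation (and the radius-versus-sparsity bookkeeping in that quoted bound is indeed the one place where care is needed).
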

Theorem~\ref{thm:costis:linear} is proved in Section~\ref{sec:pr:linear} and Theorem~\ref{thm:costis:sparse linear} is proved in Section~\ref{sec:pr:sparse}.

Both bounds above are obtained by minimizing a convex function over a convex domain, which can be performed in polynomial time.
We note that the bound of Theorem~\ref{thm:costis:linear} generalizes the main result of~\citet{daskalakis2019regression}, which makes the additional assumption that $\|A\|_F = \Omega(\sqrt{n})$. We need no such assumption and our bound gracefully degrades as $\|A\|_F$ decreases.  Theorem~\ref{thm:costis:sparse linear} extends these results to the sparse linear model, for which no prior results exist. Note that our bound is non-vacuous as long as $\|A\|_F = \Omega(n^{1/3})$, which is a reasonable expectation, given that $A$ is $n \times n$. Moreover, it is possible to remove the appearance of $n^2$ from the bound of this theorem, if our model class satisfies $|\theta|_0 \le s$.
Finally, we note that the factor $1/\|A\|_F^2$ which appears in our error bounds is tight, as per the following.
\begin{theorem}[Lower bound]\label{thm:costis:lowerbound}
For any $n$ and $r\in [1,n]$ there exists an instance of a $d=1$-dimensional linear class that satisfies the assumptions of Theorems~\ref{thm:costis:linear} and \ref{thm:costis:sparse linear} and further $\|A\|_F^2 = r$, such that any estimator $(\theta',\beta')$ satisfies with probability $\ge 0.49$,
\[
|\theta'-\theta^*|^2 \gtrsim \frac{1}{\|A\|_F^2}, \quad |\beta'-\beta^*|^2 \gtrsim \frac{1}{\|A\|_F^2}.
\]
\end{theorem}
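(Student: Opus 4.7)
The plan is to exhibit an explicit hard instance whose dependence structure factorizes into $m \asymp r$ independent pairs, reducing the problem to parameter estimation in i.i.d.\ samples from a four-state exponential family, and then to apply Le Cam's two-point method twice (once with a perturbation in $\theta$ and once in $\beta$) to lower-bound each error separately. For given $n$ and $r \in [1,n]$, set $m = \lceil r/2 \rceil$, $w = \sqrt{r/(2m)} \in (0,1]$, $d=1$, let $A$ be the symmetric $n \times n$ matrix with $A_{2k-1,2k} = A_{2k,2k-1} = w$ for $k = 1, \ldots, m$ and zero elsewhere, and set $x_i = 1$ for $i \le 2m$ and $x_i = 0$ otherwise. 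Then $\|A\|_F^2 = 2mw^2 = r$, $\|A\|_\infty \le 1$, and the remaining conditions of Setting~\ref{setting:our maing setting} hold. Under this instance the $y_i$ with $i > 2m$ are uniform $\pm 1$ and independent of $(\theta,\beta)$, while the pairs $(y_{2k-1}, y_{2k})_{k=1}^m$ are i.i.d.\ with
\[
P_{\theta,\beta}(a,b) \propto \exp\bigl(2\beta w\,ab + \theta(a+b)\bigr), \qquad (a,b) \in \{\pm 1\}^2,
\]
a two-parameter exponential family with natural parameters $(\theta, 2\beta w)$ and sufficient statistics $(a+b, ab)$.

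For the $\theta$-lower bound I compare $(\theta^*,\beta^*) = (0,0)$ against $(c/\sqrt{r}, 0)$ for a small absolute constant $c$. The exponential-family identity $\mathrm{KL}(P_{\eta_0} \| P_{\eta_1}) = \tfrac{1}{2}(\eta_1-\eta_0)^{\top} I(\tilde\eta)(\eta_1-\eta_0)$ for some $\tilde\eta$ on the segment, combined with the uniform bound $I_{\theta\theta}(\cdot) = \mathrm{Var}(a+b) \le 4$, gives per-pair KL at most $2c^2/r$, so the total KL across the $m$ pairs is at most $2c^2$. For $c$ sufficiently small the total-variation distance is below $0.02$, and Le Cam's two-point lemma yields $|\theta' - \theta^*|^2 \gtrsim 1/\|A\|_F^2$ with probability $\ge 0.49$ under at least one of the two hypotheses. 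Symmetrically, for the $\beta$-lower bound I compare $(0,0)$ against $(0, c/\sqrt{r})$: the natural-parameter shift is $2wc/\sqrt{r}$, $I_{(\beta w)(\beta w)}(\cdot) \le \mathrm{Var}(ab) \le 1$, and the total KL is at most $m \cdot 2w^2 c^2/r = c^2$ by the identity $2mw^2 = r$; the same Le Cam argument then gives $|\beta' - \beta^*|^2 \gtrsim 1/\|A\|_F^2$ with probability $\ge 0.49$.

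The construction and the Fisher-information computation are elementary once the factorization is in place; the only subtlety is keeping the alternative parameter values inside the admissible box ($|\theta| \le 1$, $|\beta| \le B$, $|f_\theta(x_i)| \le M$), which is automatic for $r \ge 1$ once $c$ is taken to be a sufficiently small absolute constant, and this also guarantees the uniform bound on the Fisher information along the segment. A cleaner but less illuminating alternative is to compute $\mathrm{KL}(P_{(0,0)} \| P_{(t,s)})$ in closed form for this four-state family and extract the constants directly, bypassing the exponential-family expansion altogether.
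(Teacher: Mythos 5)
Your argument is correct, but it takes a genuinely different route from the paper. The paper instantiates its general lower-bound machinery: it takes $A$ to be a block matrix ($r$ blocks of size $n/r$ with entries $r/n$), features $x=\mathbf{1}$, base point $(\theta_0,\beta_0)=(1,1/2)$, and perturbation direction $(a,-1)$ with $a$ the fixed point of $\tanh(1+a/2)=a$, chosen precisely so that the functional $\psi$ of \eqref{eq:defpsi} collapses to $\|A\|_F^2$; the KL divergence is then bounded by $\psi$ via Lemma~\ref{lem:kl-lb} (which rests on the variance bound for quadratic forms of high-temperature Ising models), and Pinsker plus Le Cam finish, with $\theta$ and $\beta$ both moving by $\Theta(1/\|A\|_F)$ in a single two-point comparison. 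You instead pick a perfect-matching $A$ and zero out the features off its support, so the conditional law factorizes into $m\asymp r$ i.i.d.\ four-state exponential families with natural parameters $(\theta,2\beta w)$; the per-pair KL is controlled by an elementary Hessian (Fisher-information) bound, and you run two separate two-point tests, one per parameter. Your computations check out (per-pair KL at most $2c^2/r$, resp.\ $2w^2c^2/r$; totals at most $2c^2$, resp.\ $c^2$, using $2mw^2=r$ and $m\le r$), the isolated uniform spins contribute zero KL, and taking $c\le\min(0.02,B)$ keeps the alternatives in the admissible box. Your route is more elementary and self-contained---no $\psi$, no mean-field variance bound---and it cleanly decouples the $\theta$- and $\beta$-perturbations. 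What the paper's construction buys is a stronger witness of tightness: with $x=\mathbf{1}$ one has $X^\top X/n=1$, so its instance matches the upper bound of Theorem~\ref{thm:costis:linear} (which hides a $1/\kappa$ factor) up to logarithms, whereas your design has $X^\top X/n=2m/n\approx r/n$, so it satisfies the letter of the statement but is a weaker certificate of optimality when $r\ll n$; the paper's proof also exercises the general quantity $\mathcal{C}_1$, which is the point of the surrounding section. Two small technical notes: your $\|A\|_\infty=w\le 1$ equals $1$ only when $r$ is an even integer (and $r=n$ odd would need $n+1$ vertices), so strictly meeting the normalization $\|A\|_\infty=1$ requires the same kind of rounding the paper itself performs by assuming $r$ is a power of two; and, like the paper's proof, your two-point arguments yield for each of $\theta$ and $\beta$ a (possibly different) bad true parameter, which is the form in which the theorem is actually established.
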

Theorem~\ref{thm:costis:lowerbound} is proved in Section~\ref{sec:pr:lower-Forb}.
While Theorem~\ref{thm:costis:lowerbound} shows that a dependence in $\frac{1}{\|A\|_F^2}$ is unavoidable in the worst case, under favorable assumptions we can remove such dependence as per the following theorem.
\begin{theorem}[Linear Class, Random Features] \label{thm:costis:linear theta only}
In the same setting as Theorem~\ref{thm:costis:linear}, remove all assumptions involving the feature vectors and suppose instead that $x_1,\ldots,x_n \overset{i.i.d.}{\sim} {\cal N}(0,I_d)$.  Then, with probability $\ge 1-\delta$,
\[
\|\hat{\theta}-\theta^*\|_2^2 
\lesssim \fudge(n, {1 /\delta}) \frac{d + \log(1/\delta)}{n}\left( 1 + \frac{d + \log(1/\delta)}{\|A\|_F^2/\|A\|_2^2}\right),
\]
where $\fudge(n, {\frac{1}{\delta}})$ is linear in $\log \log(\frac{1}{\delta})$ and sub-polynomial (i.e.~asymptotically smaller than any polynomial) in $n$.
\end{theorem}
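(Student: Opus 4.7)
As in the proof of Theorem~\ref{thm:costis:linear}, let $\ell(\theta,\beta)$ denote the negative log pseudo-likelihood of~\eqref{eq:general binary}. Its Hessian satisfies $\nabla^2\ell\succeq c\,Z^\top Z$ with $Z=[X\mid Ay]\in\R^{n\times(d+1)}$ and $c>0$ depending only on $M,B$; combining this with optimality of $(\hat\theta,\hat\beta)$ and Taylor expansion at $(\theta^*,\beta^*)$ gives the basic inequality $\|\Delta\|_H^2\lesssim\|\nabla\ell(\theta^*,\beta^*)\|_{H^{-1}}^2$ for $\Delta=(\hat\theta-\theta^*,\hat\beta-\beta^*)$. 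The new ingredient relative to Theorem~\ref{thm:costis:linear} is the isotropic Gaussian law of the rows of $X$: $\kappa=\lambda_{\min}(X^\top X/n)\to 1$, and more substantially, one may hope to replace the worst-case $1/\|A\|_F^2$ factor in the $\theta$-error by the parametric rate $(d+\log(1/\delta))/n$ whenever the stable rank $\|A\|_F^2/\|A\|_2^2$ of $A$ is not too small.

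\textbf{Profiling $\beta$ out via the Schur complement.} To extract $\|\hat\theta-\theta^*\|^2$ from the joint error I would profile $\beta$ out. Setting $\widehat v=Ay/\|Ay\|_2$ and $S=X^\top P_{Ay}^\perp X=X^\top X-(X^\top\widehat v)(X^\top\widehat v)^\top$, we have $\|\hat\theta-\theta^*\|_2^2\le\lambda_{\min}(S)^{-1}\|\Delta\|_H^2/c$. Gaussian random-matrix concentration yields $\lambda_{\min}(X^\top X)\ge n(1-O(\sqrt{(d+\log(1/\delta))/n}))^2$, so the key step is to upper bound $\|X^\top\widehat v\|_2^2$. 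In parallel, the gradient $\nabla_\theta\ell(\theta^*,\beta^*)=X^\top\xi$ with $\xi_i=\tanh(\beta^*(Ay)_i+x_i^\top\theta^*)-y_i$ satisfies $\E[\xi_i\mid y_{-i},X]=0$; if $\xi$ and $\widehat v$ were independent of $X$, standard $\chi^2_d$ tails would give $\|X^\top\widehat v\|^2\lesssim d+\log(1/\delta)$ and $\|X^\top\xi\|^2\lesssim n(d+\log(1/\delta))$, immediately producing the leading $(d+\log(1/\delta))/n$ term of the target rate.

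\textbf{Main obstacle: $(X,y)$ decoupling and the stable-rank factor.} The hard part is that $\widehat v$ and $\xi$ are \emph{not} independent of $X$: $y$ is drawn from the Ising conditional of~\eqref{eq:general binary} whose external fields are $x_i^\top\theta^*$, so both quantities are coupled to $X$ through the projection of $X$ onto $\theta^*$. I would handle this by decomposing each row $x_i=(x_i^\top\theta^*/\|\theta^*\|^2)\theta^* + x_i^\perp$; since the law of $y\mid X$ depends on $X$ only through the parallel components $\{x_i^\top\theta^*\}_i$, the perpendicular components $\{x_i^\perp\}$ remain iid isotropic Gaussian on $(\theta^*)^\perp$ conditionally on $y$. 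This gives clean $\chi^2_{d-1}$ tails for the $\theta^{*\perp}$-parts of $X^\top\widehat v$ and $X^\top\xi$ with no union bound. The parallel direction is handled separately using Ising concentration (Chatterjee-style exchangeable pairs) for the scalar quadratic forms $(X\theta^*)^\top Ay$, $(X\theta^*)^\top\xi$, and $\|Ay\|_2^2=y^\top A^\top Ay$, the last of which is $\gtrsim\|A\|_F^2$ with high probability under Assumptions~\ref{assump}. The stable-rank denominator $\|A\|_F^2/\|A\|_2^2$ in the stated bound arises because the $\beta$-component of the joint error, of order $1/\|A\|_F$ coming from the $\beta$-part of $\|\nabla\ell\|_{H^{-1}}$, is transferred into the $\theta$-direction through the off-diagonal $(Ay)\Delta_\beta$ term in $Z\Delta$, with transfer efficiency governed by $\|A\|_2^2/\|A\|_F^2$. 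Uniformizing over the residual $(X,y)$-coupling through a chaining or $\epsilon$-net argument on typical Ising configurations introduces the sub-polynomial $\fudge(n,1/\delta)$ factor; feeding the resulting gradient and Schur-complement estimates into the strong-convexity inequality then yields the stated rate.
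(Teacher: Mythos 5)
Your outline takes a genuinely different route from the paper (which derives this theorem from the general Theorem~\ref{thm:costis:general upper} by lower-bounding the quantity $\mathcal{C}_1'$ via Lemma~\ref{l:aux_for_log}, Gaussian isoperimetry, and $\epsilon$-nets over $\lambda$ and over directions), and parts of it are sound: the observation that the law of $y$ given $X$ depends on $X$ only through $X\theta^*$, so that the components of $X$ orthogonal to $\theta^*$ stay i.i.d.\ Gaussian conditionally on $y$, is correct and does give clean conditional $\chi^2$ control of the perpendicular parts of $X^\top\xi$ and $X^\top \widehat v$. But there are genuine gaps at exactly the hardest points. First, the mixed terms in the parallel direction and in the $\beta$-coordinate, chiefly $(Ay)^\top\xi=\sum_i (Ay)_i\bigl(\tanh(\beta^*(Ay)_i+x_i^\top\theta^*)-y_i\bigr)$, are \emph{not} covered by the exchangeable-pairs lemma you invoke: that lemma handles $\sum_i v_i(\sigma_i-\tanh(\cdot))$ for a \emph{fixed} vector $v$, whereas here $v=Ay$ is random and correlated with $\sigma$. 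Controlling this quadratic-type statistic is the technical core of the problem and in the paper requires the index-set machinery imported from \cite{dagan2020estimating} (Lemmas~\ref{lem:derivative-one-concentration} and~\ref{lem:hessian-oneparam}), whose bounds come out in terms of $\|A\|_F$ and $\|A\|_2$, not $\|Ay\|$; your sketch essentially assumes this step. Second, the event $\|Ay\|_2^2\gtrsim\|A\|_F^2$ holds only with probability $1-\exp(-c\,\|A\|_F^2/\|A\|_2^2)$ (Lemma~\ref{lem:bnd-by-Forb-general}); when the stable rank is below $\log(1/\delta)$ this is not a $1-\delta$ event, yet the theorem claims a $1-\delta$ guarantee for every admissible $A$ and is non-vacuous in that regime. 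The paper never lower-bounds the random quantity $\|Ay\|$: it keeps the deterministic term $\|\lambda A\|_F$ inside $\psi$ and does a case analysis over $\lambda$, which is what produces the graceful $\min/\max$ degradation; your route needs an analogous fallback in the small-stable-rank regime, and without it the Schur-complement transfer step does not close.

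A further problem is in your very first display of constants: this theorem explicitly removes the assumption $|f_\theta(x_i)|\le M$, and with Gaussian features $\max_i|x_i^\top\theta|\asymp\sqrt{\log n}+\sqrt{\log(1/\delta)}$, so the $\mathrm{sech}^2$ weights in the Hessian degrade like $e^{-\Theta(\sqrt{\log n})}$ and the strong-convexity constant is \emph{not} ``depending only on $M,B$.'' This degradation is precisely the source of the paper's sub-polynomial factor $\fudge(n,1/\delta)$ (the $e^{\sqrt{\log n}}$ term, with the $\log\log(1/\delta)$ coming from the union bound over the net of directions), whereas you attribute the fudge factor to a chaining step; as written, the constants in your strong-convexity inequality and the claimed form of $\fudge$ would not come out correctly without reworking this part of the argument.
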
 

Theorem~\ref{thm:costis:linear theta only} is proved in Section~\ref{sec:pr:linear}.
Noticing that $\|A\|_F^2/\|A\|_2^2 \ge 1$, Theorem~\ref{thm:costis:linear theta only} shows that no lower bound on $\|A\|_F$ is necessary at all, if we are only looking to estimate $\theta^*$, which answers a main problem left open by~\citet{daskalakis2019regression}. Moreover, when $\|A\|_F^2/\|A\|_2^2 \ge d$, which is a reasonable expectation in our setting since $\|A\|_2 \le 1$ and $A$ is $n \times n$, our bound here essentially matches the estimation rates known for the familiar logistic regression problem, which corresponds to the case $\beta=0$, even though we make no such assumption, and hence our labels are dependent.

Beyond linear and sparse linear function classes, our main result (Theorem~\ref{thm:costis:general upper}) provides estimation rates for neural network regression, as in the following setting.

\begin{setting}[Neural Networks]\label{set2}
Make Assumptions~\ref{assump} and suppose that the function $f_\theta$ in~\eqref{eq:general binary} is a neural network parameterized by~$\theta$. We adopt the setting and terminology of \cite{bartlett2017spectrally}. In particular, we assume that the neural network takes the form: 
\begin{align}f_{\theta}(x)=\sigma_L(W_L\sigma_{L-1}(W_{L-1}\cdots\sigma_1(W_1x)\cdots)), \label{eq:DNN}
\end{align}
where the depth $L$ of the network is fixed, $\sigma_1,\ldots,\sigma_L: \mathbb{R} \rightarrow \mathbb{R}$ are some fixed non-linearities, and $W_1,\ldots,W_L$ are (unknown) weight matrices. In particular, $\theta=(W_1,\ldots,W_L)$.

We denote by $\rho_1,\ldots,\rho_L$ the Lipschitz constants of the non-linearities, and when, abusing  notation, we apply some non-linearity $\sigma_i$ to a vector $v$, the result $\sigma_i(v)$ is a vector whose $j$-th coordinate is $\sigma_i(v_j)$. We also adopt from~\cite{bartlett2017spectrally} the notion of \emph{spectral complexity} $R_\theta$ of a neural network $f_\theta$ with respect to reference matrices $M_1,\ldots,M_L$ (of the same dimensions as $W_1,\ldots,W_L$ respectively), defined in terms of different matrix norms as follows:
$$R_\theta = \left(\prod_{i=1}^L \rho_i \|W_i\|_2 \right)\left(\sum_{i=1}^L\frac{\|W_i^{\rm T} - M_i^{\rm T} \|_{2,1}^{2/3}}{\|W_i\|_2^{2/3}}\right)^{3/2},$$
where $\|M\|_{2,1} = \sum_{j=1}^n \sqrt{\sum_{i=1}^n M_{ij}^2}$.
Assuming a fixed bound on each matrix norm involved in the above expression, we take 
 $\mathcal{F}=\{f_\theta \colon \theta \in \Theta\}$ to be the collection of all neural networks of Form~\eqref{eq:DNN}, whose weight matrices satisfy those bounds. Suppose $R$ is the resulting bound on the spectral norm of all networks in our family, implied by our assumed bounds on the various matrix norms.
Finally, we assume that the widths of all networks $f_\theta \in \mathcal{F}$ are bounded by $d$. 
\end{setting}
\begin{restatable}{theorem}{neural}\label{t:neural_net}
Suppose Setting ~\ref{set2}, and let $K^2 = \frac{1}{n}\sum_i \|x_i\|_2^2$. Then, with probability $\geq 1 - \delta$,

\[
\frac{1}{n}\sum_{i=1}^n (f_{\hat{\theta}}(x_i)-f_{\theta^*}(x_i))^2 + |\hat{\beta}-\beta^*|^2
\quad\lesssim \frac{(n^2 K^2 R^2 \log d)^{1/3} +\log\lp(\frac{n}{\delta}\rp)  }{\|A\|_F^2}.
\]

\end{restatable}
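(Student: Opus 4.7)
The plan is to derive the bound as an instantiation of Theorem~\ref{thm:costis:general upper}, the paper's master upper bound for the MPLE, by plugging in a suitable covering-number estimate for the neural-network class $\mathcal{F}$ of Setting~\ref{set2}. The general theorem should furnish an error guarantee of schematic form
\[
\frac{1}{n}\sum_i (f_{\hat\theta}(x_i)-f_{\theta^*}(x_i))^2 + |\hat\beta-\beta^*|^2 \;\lesssim\; \frac{\inf_{\epsilon > 0}\!\bigl[\log\mathcal{N}(\epsilon,\mathcal{F},\|\cdot\|_n) + \epsilon n\bigr] + \log(n/\delta)}{\|A\|_F^2},
\]
where $\|f\|_n^2 = \tfrac{1}{n}\sum_i f(x_i)^2$ is the empirical $L^2$ norm. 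This template already underlies Theorems~\ref{thm:costis:linear} and~\ref{thm:costis:sparse linear}: the first follows by using a parametric bound $\log\mathcal{N}(\epsilon)\lesssim d\log(1/\epsilon)$ and taking $\epsilon\asymp 1/n$, while the second follows by using a Maurey-type bound $\log\mathcal{N}(\epsilon)\lesssim s\log(d)/\epsilon^2$ and optimizing at $\epsilon\asymp(s\log d/n)^{1/3}$.

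For the neural-network family I would invoke the spectrally-normalized covering bound of~\cite{bartlett2017spectrally} (their Theorem~3.3): for any fixed sample $x_1,\ldots,x_n$,
\[
\log\mathcal{N}\bigl(\tau,\mathcal{F}|_{x_{1:n}},\|\cdot\|_{X,2}\bigr) \;\lesssim\; \frac{\|X\|_F^2\,R^2\log d}{\tau^2} \;=\; \frac{n K^2 R^2 \log d}{\tau^2},
\]
where $\|f\|_{X,2}^2 = \sum_i f(x_i)^2 = n\|f\|_n^2$. Substituting $\tau=\epsilon\sqrt{n}$ converts this into the empirical-norm statement
\[
\log\mathcal{N}\bigl(\epsilon,\mathcal{F}|_{x_{1:n}},\|\cdot\|_n\bigr) \;\lesssim\; \frac{K^2 R^2\log d}{\epsilon^2}.
\]
Plugging this into the general bound and balancing the two terms $\frac{K^2 R^2\log d}{\epsilon^2}$ and $\epsilon n$ yields $\epsilon^\star\asymp(K^2R^2\log d/n)^{1/3}$ and
\[
\inf_{\epsilon>0}\!\bigl[\log\mathcal{N}(\epsilon,\mathcal{F},\|\cdot\|_n)+\epsilon n\bigr] \;\asymp\; n^{2/3}\bigl(K^2R^2\log d\bigr)^{1/3} \;=\; \bigl(n^2 K^2 R^2 \log d\bigr)^{1/3},
\]
which is exactly the rate claimed.

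The main obstacle is verifying that the hypotheses of Theorem~\ref{thm:costis:general upper} are fulfilled for the neural-network family: concretely, that the uniform magnitude bound $|f_\theta(x_i)|\le M$ of Assumptions~\ref{assump} is inherited from the matrix-norm bounds imposed in Setting~\ref{set2} (which follows because Lipschitz non-linearities composed with operator-norm-bounded linear maps produce a bounded output), and that the $\|\cdot\|_{X,2}$ covering bound of~\cite{bartlett2017spectrally} indeed controls the metric entropy in the empirical norm used by Theorem~\ref{thm:costis:general upper}. A secondary point is that the pseudo-likelihood is non-convex in the neural-network parameters $\theta$; the statistical rate above should therefore be read as an information-theoretic guarantee for any global maximizer of the pseudo-likelihood, whose computational tractability is a separate issue. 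The $\beta$ direction remains concave in the pseudo-likelihood and is handled jointly with $\theta$ by the general theorem, so no extra argument is needed for the $|\hat\beta-\beta^*|^2$ part of the bound.
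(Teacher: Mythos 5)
Your proposal matches the paper's proof essentially verbatim: both invoke the general MPLE bound of Theorem~\ref{thm:costis:general upper} with $\mathcal{C}_1,\mathcal{C}_2 \le 1/\|A\|_F^2$, plug in the spectrally-normalized covering bound of Theorem~3.3 of \cite{bartlett2017spectrally} rescaled to the empirical norm ($\log N(\mathcal{F},X,\epsilon) \lesssim K^2R^2\log d/\epsilon^2$), and balance $\epsilon n$ against the entropy at $\epsilon \asymp (K^2R^2\log d/n)^{1/3}$ to obtain the $(n^2K^2R^2\log d)^{1/3}$ rate. The verification points you flag (boundedness of $f_\theta$ under Setting~\ref{set2}, the norm conversion, and treating $\hat\theta$ as a global pseudo-likelihood optimizer) are consistent with how the paper treats them.
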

Theorem~\ref{t:neural_net} is proved in Section~\ref{sec:pr:neural}.
Notice that, in this case, we do not provide guarantees for the estimation of $\theta$. Since these networks are often overparametrized, it might be impossible to recover $\theta$. 

All estimation results  above, namely Theorems~\ref{thm:costis:linear}--\ref{t:neural_net}, are corollaries of our general estimation result given below.

\begin{theorem}[General Estimation Result] \label{thm:costis:general upper}
Make Assumptions~\ref{assump}, where $f_{\theta^*}$ lies in some general class ${\cal F}=\{f_\theta\}_\theta$. Then, w.pr. $\ge 1-\delta$,

\begin{equation}
\label{eq:thm-gen-bnd}
\frac{1}{n} \sum_{i=1}^n (f_{\hat\theta}(x_i)-f_{\theta^*}(x_i))^2
\lesssim \mathcal{C}_1(\mathcal{F},X,\beta^*,\theta^*)\inf_{\epsilon \ge 0} \lp(\log \frac{n}{\delta} + \epsilon n + \log N(\mathcal{F},X,\epsilon)\rp),\notag
\end{equation}

where $X$ denotes the collection of feature vectors, $N\lp(\mathcal{F},X,\epsilon\rp)$ is the $\epsilon$-covering number of $\cal F$ under distance $d(f,f')=\sqrt{\sum_{i=1}^n (f(x_i)-f'(x_i))^2/n}$  and $\mathcal{C}_1 \le 1/\|A\|_F^2$ is a quantity that has a simple formula (both quantities are formally defined in Section~\ref{sec:formal-defs}). Further, if $\mathcal{F}$ is convex and closed under negation,\footnote{We say that $\mathcal{F}$ is convex if for any $f,f'\in \mathcal{F}$ and any $\lambda \in [0,1]$ the function $\tilde{f}(x) = (1-\lambda)f(x) +\lambda f'(x)$ belongs to $\mathcal{F}$. We say that $\mathcal{F}$ is closed under negation if $-f \in \mathcal{F}$ for all $f \in \mathcal{F}$.} for any estimator $(\theta',\beta')$ there exists $(\theta^*,\beta^*)$, s.t. w.pr. $\ge 0.49$,
\[
\frac{1}{n} \sum_{i=1}^n (f_{\theta'}(x_i)-f_{\theta^*}(x_i))^2
\gtrsim \mathcal{C}_1(\mathcal{F},X,\beta^*,\theta^*).
\]
Similar upper and lower bounds hold for estimating $\beta^*$, with $\mathcal{C}_1$ replaced with a different quantity $\mathcal{C}_2 \le 1/\|A\|_F^2$.
\end{theorem}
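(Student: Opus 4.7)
My plan is to analyze the Maximum Pseudo-Likelihood Estimator (MPLE) directly. For the Ising model in~\eqref{eq:general binary}, the conditional law of $y_i$ given $y_{-i}$ is $\Pr[y_i\mid y_{-i}] = \sigma(2 y_i m_i(\theta,\beta))$ where $m_i(\theta,\beta) := f_\theta(x_i) + \beta \sum_{j\ne i} A_{ij} y_j$ and $\sigma$ is the sigmoid. The MPLE minimizes the convex loss
\[
\ell(\theta,\beta) \;=\; \sum_{i=1}^n \log\!\bigl(1 + \exp(-2 y_i\, m_i(\theta,\beta))\bigr).
\]
The key algebraic identity is that the residuals $r_i(\theta^*,\beta^*) := y_i - \tanh(m_i(\theta^*,\beta^*))$ satisfy $\Exp[r_i \mid y_{-i}] = 0$, i.e.\ they form a collection of bounded conditional martingale differences indexed by $i$. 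This is the statistical-power handle that replaces i.i.d.\ sampling.

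For the upper bound I would carry out a three-step argument. First, by optimality $\ell(\hat\theta,\hat\beta)\le\ell(\theta^*,\beta^*)$, and a second-order Taylor/Bregman expansion around $(\theta^*,\beta^*)$ shows that the empirical Bregman divergence of $\ell$ lower bounds $\tfrac{1}{n}\sum_i (f_{\hat\theta}(x_i) - f_{\theta^*}(x_i))^2 + c\,(\hat\beta-\beta^*)^2\cdot\tfrac{1}{n}\|Ay\|_2^2$ up to the singly-exponential factor in $M,B$ coming from the curvature of $\log\cosh$. This reduces the bound to controlling the linear term $\langle \nabla\ell(\theta^*,\beta^*), (\hat\theta-\theta^*, \hat\beta-\beta^*)\rangle$, whose components are essentially $\sum_i r_i(\theta^*,\beta^*)(f_{\hat\theta}(x_i)-f_{\theta^*}(x_i))$ and $\sum_i r_i(\theta^*,\beta^*)\sum_j A_{ij} y_j$. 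Second, I would bound these sums via an Azuma/Freedman concentration inequality for conditional martingale differences on the Ising model; this is where the $1/\|A\|_F^2$ scaling will emerge, because the variance proxy of the $\beta$-component of the gradient is of order $\sum_{i}(\sum_j A_{ij}y_j)^2 \lesssim \|A\|_F^2$, giving $\mathcal{C}_2 \lesssim 1/\|A\|_F^2$; a similar computation for the $f$-component gives $\mathcal{C}_1$. Third, to handle a general function class $\mathcal{F}$, I would pass to an $\epsilon$-cover of $(\mathcal{F}, d)$, apply the concentration bound uniformly via a union bound over the $N(\mathcal{F},X,\epsilon)$ centers, and absorb the error from replacing $f_{\hat\theta}$ by its nearest net member into a term of order $\epsilon n$; optimizing over $\epsilon$ yields the $\inf_\epsilon$ expression.

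For the lower bound under convexity and closure under negation, I would use Le Cam's two-point method. Because $\mathcal{F}$ is convex and closed under negation, for any direction $f-f'\in\mathcal{F}-\mathcal{F}$ I can construct two candidate parameters $\theta^*,\theta'$ such that $f_{\theta^*}-f_{\theta'}$ has prescribed $\ell_2$-norm along $x_1,\ldots,x_n$. The KL between the two resulting Ising models can be bounded by a quadratic form that is at most the empirical squared $\ell_2$-distance times $\|A\|_F^2$ (plus the $\beta$-direction contribution), up to singly exponential factors. Choosing the two points a distance $\Theta(1/\|A\|_F)$ apart yields constant KL, hence constant testing error by Pinsker, which gives the $\mathcal{C}_1$ lower bound; the analogous two-point construction in the $\beta$ direction (perturbing $\beta^*$ by $\Theta(1/\|A\|_F)$) handles $\mathcal{C}_2$.

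The main obstacle will be the gradient concentration step. Unlike the i.i.d.\ case, Azuma on the sequence $(r_i)$ requires an ordering under which each $r_i$ is mean-zero conditional on the past, and a naive ordering of $1,\ldots,n$ makes $\Exp[r_i \mid r_1,\ldots,r_{i-1}]$ nonzero because $r_i$ depends on all of $y_{-i}$, not just the earlier indices. Two ways around this: (i) use a Glauber/exchangeable-pairs coupling as in \cite{chatterjee2005concentration} to obtain concentration of $\sum_i a_i r_i$ with variance proxy $\sum_i a_i^2$; or (ii) condition on $y_{-i}$ for each $i$ and exploit independence across $i$ only after further decomposition. I expect route (i) to be cleanest, as it handles arbitrary linear test functions of $r$ uniformly and plugs directly into the covering-number union bound.
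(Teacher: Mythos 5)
Your skeleton (MPLE optimality, second-order Taylor expansion, residual concentration via exchangeable pairs, an $\epsilon$-net union bound, and Le Cam plus Pinsker for the lower bound) matches the paper's high-level strategy, but it is missing the step that actually carries the theorem, and the step you substitute for it is incorrect. The Hessian/Bregman term you get from optimality is, up to the curvature constants, the quadratic form in the \emph{joint} direction, $\frac1n\|\Delta h + \Delta\beta\, Ay\|_2^2$ with $\Delta h = f_{\hat\theta}(X)-f_{\theta^*}(X)$ and $\Delta\beta=\hat\beta-\beta^*$; it does \emph{not} lower bound $\frac1n\|\Delta h\|_2^2 + c\,\Delta\beta^2\frac1n\|Ay\|_2^2$ as you claim, because $\Delta h$ and $\Delta\beta\,Ay$ can nearly cancel. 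Ruling out (or quantifying) this cancellation is the whole identifiability problem for $(\theta,\beta)$, and it is exactly what the quantity $\psi$ in the definition of $\mathcal{C}_1,\mathcal{C}_2$ encodes. The paper's resolution is a de-randomization/self-consistency argument (Lemma~\ref{l:substitute}): with high probability $\|\Delta h + \Delta\beta\,Ay\|_2^2 \gtrsim \psi(\hat h,\hat\beta;h^*,\beta^*)$, proved by (i) anti-concentration of the quadratic form, $\|h+\beta A\sigma\|_2\gtrsim|\beta|\|A\|_F$ (Lemma~\ref{lem:bnd-by-Forb}) — note you need this \emph{lower} bound, not the upper bound $\sum_i(Ay)_i^2\le\|A\|_F^2$ you invoke — and (ii) the mean-field step (Lemma~\ref{lem:bnd-by-mean}): $y\approx\tanh(\beta^*Ay+h^*)$ via exchangeable pairs, so that if $Ay\approx -\Delta h/\Delta\beta$ then $\|\Delta h+\Delta\beta\,Ay\|$ is forced to be at least the deterministic mean-field expression appearing in $\psi$. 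Without this your argument only controls the combined residual $\|\Delta h+\Delta\beta\,Ay\|$, which yields neither the stated bound in terms of $\mathcal{C}_1$ nor even the weaker $1/\|A\|_F^2$ rate. A secondary issue: the $\beta$-component of the gradient, $\sum_i r_i (Ay)_i$, is quadratic in $y$, so fixed-coefficient linear concentration (your route (i)) does not apply directly; the paper needs the partitioned-conditioning machinery of \cite{dagan2020estimating} here, which is also where the $\log n$ factors come from.

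The lower-bound sketch has the analogous gap. The KL between two models is not bounded by ``empirical squared $\ell_2$-distance times $\|A\|_F^2$''; the correct bound (Lemma~\ref{lem:kl-lb}, valid only in high temperature and relying on variance bounds for quadratic polynomials of Ising models) is $D_{KL}\le C\,\psi(h_1,\beta_1;h_0,\beta_0)$, again featuring the mean-field term. Consequently, perturbing only $f$ by $\Theta(1/\|A\|_F)$ in normalized distance does not keep the KL constant (a pure-$h$ perturbation has $D_{KL}\lesssim\|h_1-h_0\|_2^2$, so it only yields a $1/n$ separation), and perturbing only $\beta$ says nothing about estimating $f$. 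To obtain the stated $\gtrsim\mathcal{C}_1$ lower bound one must perturb $h$ and $\beta$ \emph{jointly} along the direction achieving the supremum in the definition of $\mathcal{C}_1$ — i.e., exploit precisely the cancellation $\Delta h\approx-\Delta\beta\,\E[A\sigma]$ — rescaling along that segment (this is where convexity and closure under negation of $\mathcal{F}$ are used) until $\psi$ is a small constant. So both halves of your plan need the mean-field quantity $\psi$ as the central object; as written, the proposal never produces it.
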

Theorem~\ref{thm:costis:general upper} is proved in Section~\ref{sec:ising-formulation}.
Theorem~\ref{thm:costis:general upper} is used to derive Theorems~\ref{thm:costis:linear}, \ref{thm:costis:sparse linear}, \ref{thm:costis:linear theta only},  and~\ref{t:neural_net} by bounding the covering numbers of linear, sparse linear and neural network classes. It is also used to derive Theorem~\ref{thm:costis:lowerbound} in a straight-forward way. It is worth emphasizing that we obtain separate general estimation rates for $\beta$ and $\theta$, which are tight or near-tight in a variety of settings. 



\subsection{Related Work} \label{sec:related}

Data dependencies are pervasive in many applications of Statistics and Machine Learning, e.g.~in financial, meteorological, epidemiological, and geographical applications, as well as social-network analyses, where peer effects have been studied in topics as diverse as criminal activity~\cite{glaeser1996crime}, welfare participation~\cite{bertrand2000network}, school achievement~\cite{sacerdote2001peer},  retirement plan participation~\cite{duflo2003role}, and obesity~\cite{christakis2013social,trogdon2008peer}. These applications have motivated substantial work in Econometrics (see~e.g.~\citet{manski1993identification,bramoulle2009identification} and their references), where identification results have been pursued and debated, mostly in linear auto-regressive models; see also~\citet{daskalakis2019regression}. In Statistical Learning Theory, learnability and uniform convergence bounds have been shown in the presence of sample dependencies; see e.g.~\citet{yu1994rates,gamarnik2003extension,berti2009rate,mohri2009rademacher,pestov2010predictive,mohri2010stability,ShaliziK13,london2013collective,kuznetsov2015learning,london2016stability,McDonald2017,dagan2019learning}. Those learnability frameworks are not applicable to our setting due to  exchangeability, fast-mixing, or weak-dependence properties that they are exploiting.

Close to our setting, recent work of~\citet{daskalakis2019regression} considers a special case of our problem, where function $f_\theta$ in Model~\eqref{eq:general binary} is assumed linear. We obtain stronger estimation bounds, under weaker assumptions, our bounds gracefully degrading with~$\|A\|_F$, as we have already discussed. Similarly, earlier work by~\citet{chatterjee2007estimation,bhattacharya2018inference,ghosal2018joint,dagan2020estimating}, motivated by single-sample estimation of Ising models, considers a special case of our problem where function~$f_\theta$ in Model~\eqref{eq:general binary} is assumed constant. Our bounds in this simple setting are as tight as the tightest bounds in that line of work. Overall, in comparison to these works, our general estimation result (Theorem~\ref{thm:costis:general upper}) covers arbitrary classes $\cal F$, characterizing the estimation rate up to a factor that depends on the metric entropy of $\cal F$. We thus obtain rates for sparse linear classes (Theorem~\ref{thm:costis:sparse linear}), neural network classes (Theorem~\ref{t:neural_net}), and Lipschitz classes (discussed in the Supplementary Material), which had not been shown before. Finally, our bounds disentangle our ability of estimating $\theta$ and $\beta$, allowing for the estimation of $\theta$ even when the estimation of $\beta$ is impossible, as shown in Theorem~\ref{thm:costis:linear theta only} for linear classes, answering a main open problem left open by~\cite{daskalakis2019regression}.


At a higher level, single-sample statistical estimation is both a classical and an emerging field~\cite{besag1974spatial,bresler2018optimal,valiant2019worstcase,dagan2020estimating} with intimate connections to Statistical Physics, Combinatorics, and High-Dimensional Probability.


\textbf{Roadmap.} 
We present the estimator used to derive all our upper bounds in Section~\ref{sec:algorithm}. We present a sketch of our proof of Theorem~\ref{thm:costis:general upper} in Section~\ref{sec:proof sketch}. We do this in two steps. First we present a sketch for the toy case of Theorem~\ref{thm:costis:linear}, i.e.~the single-dimensional case. This illustrates some of the main ideas of the proof. We then provide the modifications necessary for the multi-dimensional case, which naturally lead us to the formulation of Theorem~\ref{thm:costis:general upper}. While the main technical ideas are already illustrated in Section~\ref{sec:proof sketch} in sufficient detail, the complete details can be found in the supplementary material. We conclude with experiments in Section~\ref{sec:experiments}, where we apply our estimator on citation datasets and compare its prediction accuracy to supervised learning approaches that do not take into account label dependencies. 


\section{The Estimation Algorithm} \label{sec:algorithm}
In all our theorems, the estimator we use is the Maximum Pseudo-Likelihood Estimator (MPLE), first proposed by \citet{besag1974spatial} and defined as follows

\begin{equation}
\label{eq:MPLE}
(\hat\theta,\hat \beta) := \arg\max_{\theta,\beta} \prod_{i=1}^n \Pr_{\theta,\beta}[y_i |x,y_{-i}]
\equiv \prod_{i=1}^n \frac{\exp\lp(y_i \lp(f_{\theta}(x_i) + \beta \sum_{j=1}^n A_{ij}y_j\rp)\rp)}{2\cosh\lp(f_{\theta}(x_i) + \beta \sum_{j=1}^n A_{ij}y_j\rp)}\enspace,
\end{equation}

where $\Pr_{\theta,\beta}$ is defined in \eqref{eq:general binary}, $x=(x_1,\dots,x_n)$ and $y_{-i} = (y_1,\dots, y_{i-1},y_{i+1},\dots,y_n)$.
We optimize the MPLE over $\theta \in \Theta$ and $\beta\in [-B,B]$, for $\Theta,B$ as per Assumption~\ref{assump}. 

In comparison to MPLE, the more common Maximum Likelihood Estimator (MLE) optimizes $\Pr_{\theta,\beta}[y_{1\dots n}\mid x_{1\dots n}]$. Notice that the MPLE coincides with the MLE in the case $\beta = 0$, which corresponds to $y_1,\dots,y_n$ being independent conditioned on $x_{1\dots n}$. When $\beta \ne 0$, this conditional independence ceases to hold and the two methods target different objectives. In this case, the objective function of MLE, which is~\eqref{eq:general binary}, involves the normalizing factor $Z_{\theta,\beta}$, which is in general computationally hard to approximate~\cite{sly2014counting}. In contrast, the MPLE is efficiently computable in many cases.
For example, in the linear case where $f_\theta(x_i) = x_i^\top\theta$, the logarithm of~\eqref{eq:MPLE} is a convex function of $\theta$ and $\beta$.
Hence, we can use a variety of convex optimization algorithms to find the optimal solution. Even in cases where it is not a convex function, we can always use generic optimization techniques such as gradient-based methods to find a local optimum fast, since the derivative is easy to compute. 
Thus, the MPLE is a very appealing choice for various models. 
In all the results that follow, both theoretical and practical, the algorithm used will be the MPLE.

\section{Proof overview} \label{sec:proof sketch}
In this section, we will briefly describe the most important contributions of this work at the technical level. 
We start by discussing the case where $f_\theta$ is linear and $\theta$ is a one-dimensional  parameter.  We describe in detail the obstacles that had to be overcome to obtain tight rates for the estimation of $\theta$ and $\beta$ in this case and highlight some of the most important features of the proof.
In particular, we use the \emph{mean field approximation}, a tool from statistical physics, to derive the bounds. Later, we sketch the proof of the general Theorem~\ref{thm:costis:general upper}.


\textbf{Notation: Matrix Norms.}
We use the Forbenius norm $\|A\|_F$, the spectral norm $\|A\|_2$ and the infinity-to-infinity norm $\|A\|_\infty$. In our setting $A$ is symmetric, so one has $\|A\|_2 \le \|A\|_\infty = 1$ and $\|A\|_F \le \sqrt{n}\|A\|_2 \le \sqrt{n}$.


\subsection{Single-dimensional linear classes}\label{sec:single-dim-upper}
We consider the setting of Theorem~\ref{thm:costis:linear}, when the dimension is $d=1$. We denote $x=(x_1,\dots x_n)$ and $y = (y_1,\dots,y_n)$. To simplify the presentation, we assume $\kappa\ge \Omega(1)$, which implies that $\|x\|_2\ge \Omega(\sqrt{n})$, and further that $M,B = O(1)$. 
In this sketch we focus on estimating $\theta$ while the bound on $\beta$ is similarly obtained, and our goal is to show the special case of Theorem~\ref{thm:costis:linear} for dimension $d=1$, namely, that with probability $\ge 1-\delta$:
\begin{equation}\label{eq:bnd-oneparam-combined}
|\hat{\theta}-\theta^*| \lesssim \frac{\sqrt{\log \frac n\delta}}{\|A\|_F}.
\end{equation}
In fact, we will show the tighter bound of:
\begin{align}\label{eq:bnd-one-theta}
|\hat\theta - \theta^*| \lesssim\sup_{\lambda \in \R}\frac{\sqrt{\log \frac{n}{\delta}}}{\|\lambda A\|_F + \left\|x - \lambda A\vtanh\left(\frac{\beta^*x}{\lambda} + \theta^* x\right)\right\|_2}
\end{align}
where $\vtanh(z_1,\dots,z_n) = (\tanh(z_1),\dots,\tanh(z_n))$.
We note that this bound is tight up to the factor of $\sqrt{\log\frac n\delta}$ (after a small tweak to these bounds that we omit for simplicity), and it can be obtained from our general bound of Theorem~\ref{thm:costis:general upper} with respect to the quantity $\mathcal{C}_1$ (see Section ~\ref{sec:formal-defs}.

Before establishing~\eqref{eq:bnd-one-theta}, we note that it is stronger than the right hand side of \eqref{eq:bnd-oneparam-combined}. This follows from a simple exercise,
considering cases for $\lambda$ and
utilizing the fact that under the assumptions stated above, $\|\lambda A\vtanh((\beta^*/\lambda) x + \theta^* x)\|_2 \le O(\lambda \sqrt{n})$, while $\|x\|_2 \ge \Omega(\sqrt{n})$.


We proceed with sketching the proof of \eqref{eq:bnd-one-theta}.
Let $\phi(\theta,\beta)$ be the negative pseudo log-likelihood for the pair $(\theta,\beta)$, namely, minus the log of the quantity in \eqref{eq:MPLE}. This is a convex function whose minimum equals $(\hat\theta,\hat\beta)$ and our goal is to show that $(\theta^*,\beta^*)$ lies in proximity to this minimum. In order to show this, it suffices to prove that the gradient of $\phi$ at $(\theta^*,\beta^*)$ is small, while the function is strongly convex in its neighborhood. For a more rigorous proof, we  write $\phi(\hat\theta,\hat\beta)$ using a Taylor sum around $(\theta^*,\beta^*)$. Denoting  $v= (v_\theta,v_\beta)=(\hat \theta - \theta^*, \hat \beta - \beta^*)$, we get:
\begin{equation*}
    \varphi(\hat\theta,\hat\beta)
    = \varphi(\theta^*,\beta^*)
    + v^\top \nabla \varphi(\theta^*,\beta^*) + \frac{1}{2}v^\top \nabla^2 \varphi(\theta',\beta') v,
\end{equation*}
for some $(\theta',\beta')$ in the segment connecting $(\theta^*,\beta^*)$ and $(\hat \theta,\hat \beta)$. Since $(\hat\theta,\hat\beta)$ is the minimizer of the MPLE, one has $\varphi(\hat\theta,\hat\beta) \le \varphi(\theta^*,\beta^*)$, which implies that
\begin{equation}\label{eq:321}
\frac{1}{2} v^\top \nabla^2 \varphi(\theta',\beta') v
\le - v^\top \nabla \varphi(\theta^*,\beta^*)
\le |v^\top \nabla \varphi(\theta^*,\beta^*)|.
\end{equation}
Using concentration inequalities from \cite{dagan2020estimating}, we can show that w.pr. $\ge 1-\delta$ (w.r.t.~the randomness of the $y_1,\ldots,y_n$ which are implicit arguments of $\phi$), any $u \in \mathbb{R}^2$ satisfies
\begin{equation}\label{eq:322}
\frac{|u^\top \nabla \varphi(\theta^*,\beta^*)|}{u^\top \nabla^2 \varphi(\theta',\beta') u}
\lesssim \frac{\sqrt{\log n/\delta}}{\|u_\beta A\|_F + \|u_\theta x + u_\beta Ay\|}.
\end{equation}
After substituting $u=v$, it follows from \eqref{eq:321} that the left hand side of \eqref{eq:322} is lower bounded by $1/2$. We derive that
\[
1 \lesssim \frac{\sqrt{\log n/\delta}}{\|v_\beta A\|_F + \|v_\theta x + v_\beta Ay\|}.
\]
Multiplying by $v_\theta$, and writing $\lambda=-v_\beta/v_\theta$, we have

\begin{equation}
\label{eq:random-bnd}
|\hat\theta-\theta^*|
= |v_\theta| \le \frac{\sqrt{\log n/\delta}}{\|\lambda A\|_F + \|x - \lambda Ay\|}
\le \sup_{\lambda \in \mathbb{R}} \frac{\sqrt{\log n/\delta}}{\|\lambda A\|_F + \|x - \lambda Ay\|}.
\end{equation}

At this point, we have bounded the rate by the solution to an optimization problem. However, notice that the right hand side contains $y$ which is a random variable. We would like to show that the whole expression is bounded by a nonrandom quantity and, in particular, by \eqref{eq:bnd-one-theta}.
This statement requires new insights and, as a result, a significant part of the proof is devoted to it. Here, we first sketch the main idea and then give a more technical explanation for it. 

We would like to bound the optimization problem in  \eqref{eq:random-bnd} by that in
\eqref{eq:bnd-one-theta}, which corresponds to showing
\begin{equation}\label{eq:bnd-rand-nonrand}
\|\lambda A\|_F + \|x - \lambda Ay\| \gtrsim
\|x - \lambda A\vtanh((\beta^*/\lambda) x + \theta^* x)\|.
\end{equation}
We start by describing a rough and informal intuition for proving \eqref{eq:bnd-rand-nonrand}, and later proceed with a more formal derivation. 
We use an approach from statistical physics that is called \emph{mean-field approximation}: we
can substitute each $y_i$ with $\mathbb{E}[y_i \mid x, y_{-i}] = \tanh(\beta^* \sum_j A_{ij}y_j + \theta^* x)$. Applying this substitution for all $i$, we obtain that 
\begin{equation}\label{eq:333}
y\approx \vtanh(\beta^* Ay + \theta^* x).
\end{equation}
We assume towards contradiction that \eqref{eq:bnd-rand-nonrand} does not hold, and in this case we make the (false) substitution $\|x-\lambda Ay\|_2 \approx 0$, which implies that
$Ay \approx x/\lambda$. Substituting this in the right hand side of \eqref{eq:333}, we obtain that
$y \approx \vtanh(\beta^* x/\lambda + \theta^* x).$
Making this substitution in $\|x-\lambda Ay\|$, we obtain \eqref{eq:bnd-rand-nonrand}.


Now, we will argue more formally about the previous claims to derive \eqref{eq:bnd-rand-nonrand}.
Using the triangle inequality, we get
\begin{multline}\label{eq:three-terms}
\|x - \lambda A\vtanh(\beta^*x/\lambda + \theta^* x)\| \leq
+\|x - \lambda A y\|+
 \|\lambda A y - \lambda A \vtanh(\beta^* A y + \theta^* x)\|\\
+ \|\lambda A \vtanh(\beta^* A y + \theta^* x) - \lambda A\vtanh(\beta^*x/\lambda + \theta^* x)\|.
\end{multline}
We would like to bound each of the three terms on the right hand side by a constant times the left hand side of \eqref{eq:bnd-rand-nonrand}. For the first term, this is trivial. Further, we can show that the third term on the right hand side of \eqref{eq:three-terms} is bounded by the first term, using the Lipschitzness of $\tanh$:
\begin{multline*}
 \|\lambda A \vtanh(\beta^* A y + \theta^* x) - \lambda A\vtanh((\beta^*/\lambda) x + \theta^* x)\|
 \leq \|\lambda A \beta^* Ay - A \beta^* x\| \leq \|\beta^* A\|_2 \|x - \lambda A y\|\\
 \le O(\|x-\lambda Ay\|),
\end{multline*}
where $\|\beta^*A\|_2 \le O(1)$ using the assumptions of this paper.
As for the second term, it represents the error of the mean field approximation for $y$, which corresponds to the substitution in \eqref{eq:333}. In order to bound this error term, we use the method of
exchangeable pairs developed in \cite{chatterjee2005concentration}, which provides a strong and general concentration inequality for non-independent random variables. We can show that with high probability, this term will be  $O(\|\lambda \beta^* A\|_2) \le O(\lambda\|A\|_2) \le O(\lambda\|A\|_F)$, since $B = O(1)$.
Combining the above bounds we derive \eqref{eq:bnd-rand-nonrand}, as required.



\subsection{Definitions of the terms in Theorem~\ref{thm:costis:general upper}}\label{sec:formal-defs}

We now sketch the proof of our general upper bound of Theorem~\ref{thm:costis:general upper}. We first define the notions of covering numbers and the quantities $\mathcal{C}_1$ and $\mathcal{C}_2$ in the theorem statement.

\begin{definition}\label{def:cover}
Given a metric space $(\Omega,d)$ and $\epsilon >0$, a subset $\Omega'\subseteq \Omega$ is an \emph{$\epsilon$-net} for $\Omega$ if for any $\omega\in \Omega$ there exists $\omega'\in \omega$ such that $d(\omega,\omega') \le \epsilon$. The covering number at scale $\epsilon$, $N(\Omega,\epsilon)$, is the smallest size of an $\epsilon$-net.
\end{definition}
For a function class $\mathcal{F}$ and collection of feature vectors $X=(x_1,\ldots,x_n)$, we denote by $N(\mathcal{F},X,\epsilon)$  the covering number at scale $\epsilon$ of $\cal F$ w.r.t.~the distance $d(f,g) = \sqrt{\|f(X)-g(X)\|_2^2/n}$, where we use the convenient notation $f(X)=(f(x_1),\ldots,f(x_n))$ and similarly for $g(X)$.

Next, we define the quantities $\mathcal{C}_1$ and $\mathcal{C}_2$. We start by defining the following  as a function of $\beta,\beta' \in \mathbb{R}$ and $h,h' \in \mathbb{R}^n$:

\begin{equation}\label{eq:defpsi}
\psi(h,\beta;h',\beta') = (\beta-\beta')^2\|A\|_F^2 + 
\lp\|h - h' + (\beta-\beta') A \vtanh\lp(\frac{\beta'}{\beta-\beta'}(h' - h) + h'\rp) \rp\|_2^2
\end{equation}

where $\vtanh((z_1,\dots,z_n)) = (\tanh(z_1),\dots,\tanh(z_n))$.
Now, for some universal constant $c \ge 0$, we define
\begin{equation}\label{eq:def-C1}
\mathcal{C}_1(\mathcal{F},X,\theta^*,\beta^*)
:=
\sup_{(\theta,\beta) \in \Theta \times [-B,B]}
\min\lp(\frac{\|f_\theta(X)-f_{\theta^*}(X)\|_2^2/n}{\psi(f_\theta(X),\beta;f_{\theta^*}(X),\beta^*)}, \|f_\theta(X)-f_{\theta^*}(X)\|_2^2/n\rp).
\end{equation}
Similarly, $\mathcal{C}_2$ is defined in an analogous way, by replacing  $\|f_\theta(X)-f_{\theta^*}(X)\|_2^2/n$ with $(\beta-\beta^*)^2$. 
Conveniently, we can use the following upper bound on $\mathcal{C}_1$:
\begin{equation}\label{eq:def-C1prime}
\mathcal{C}_1'(\mathcal{F},X,\theta^*,\beta^*):= \sup_{(\theta,\beta) \in \Theta\times[-B,B]} \frac{\|f_\theta(X)-f_{\theta^*}(X)\|_2^2/n}{\psi(f_\theta(X),\beta;f_{\theta^*}(X),\beta^*)}.
\end{equation}
At this point, we can explain how the rate in \eqref{eq:bnd-one-theta} for $d=1$ is derived from the bound of Theorem~\ref{thm:costis:general upper}. 
In this case, $(x_{1},\ldots, x_n)$ is simply a vector $x\in \R^n$ and $f_\theta(x_i) = \theta x_i$. Substituting $\mathcal{C}_1\le \mathcal{C}'_1$ into \eqref{eq:thm-gen-bnd}, substituting $f_\theta(x_i) = \theta x_i$ and substituting $\lambda = -(\beta - \beta^*)/(\theta - \theta^*)$, \eqref{eq:bnd-one-theta} follows.

\subsection{Sketch of the upper bound in Theorem~\ref{thm:costis:general upper}}\label{sec:proof-general}

Here, we  sketch the proof of the upper bound in Theorem~\ref{thm:costis:general upper}, but a weaker one where $\mathcal{C}_1$ is replaced by its upper bound $\mathcal{C}_1'$ defined in \eqref{eq:def-C1}.
In particular, we sketch that w.pr. $\ge 1-\delta$,

\begin{equation}
\label{eq:toprove-general}
\frac{1}{n}  \|f_{\hat\theta}(X)-f_{\theta^*}(X)\|_2^2
\lesssim \mathcal{C}_1'(\mathcal{F},X,\beta^*,\theta^*)\inf_{\epsilon \ge 0} \lp(\log \frac{n}{\delta} + \epsilon n + \log N(\mathcal{F},X,\epsilon)\rp).
\end{equation}

It is possible to prove that $\mathcal{C}_1'\le O(1/\|A\|_F^2)$, similarly to the corresponding argument in Section~\ref{sec:single-dim-upper} and we focus below on proving \eqref{eq:toprove-general}.

Notice that in the definition of $\mathcal{C}_1$ and $\mathcal{C}_1'$, we do not need the set $\mathcal{F}$ itself, but only the vectors $f_\theta(X)$ for every $\theta$ in the class $\mathcal{F}$. Hence, if we define the set $\mathcal{H} = \{f_\theta(X): f_\theta \in \mathcal{F}\}$, we immediately observe that $\mathcal{C}_1$ is in fact a function of $\mathcal{H}$.
In this setting, we can similarly define $h^* = f_{\theta^*}(X)$ and $\hat{h} = f_{\hat\theta}(X)$ and define the covering numbers $N(\mathcal{H},\epsilon)$ with respect to the distance $d(h,h') = \sqrt{\|h-h'\|_2^2/n}$. In this language, \eqref{eq:toprove-general} translates to

\begin{equation}
\label{eq:generalH}
\frac{1}{n}  \|\hat{h}-h^*\|_2^2 \lesssim
\mathcal{C}_1'(\mathcal{H},h^*,\beta^*)\inf_{\epsilon \ge 0} \lp(\log \frac{n}{\delta} + \epsilon n + \log N(\mathcal{H},\epsilon)\rp)\enspace.
\end{equation}

In the remainder of the proof, we will focus on proving \eqref{eq:generalH}, dividing the proof to multiple steps.

\paragraph{Step 1: A single dimensional $\mathcal{H}$.}
In this case, $\mathcal{H}$ is a single dimensional subspace of $\R^n$, namely, there exists $v\in \mathbb{R}^n$ such that $\mathcal{H} = \{h^*+t v \colon t \in \mathcal{T}\subseteq \mathbb{R}\}$. This is clearly reminiscent of the setting on a one-dimensional function-class discussed in Section~\ref{sec:single-dim-upper}.
Hence, using the exact same approach and using the calculation of Section~\ref{sec:formal-defs}, we can prove that w.pr. $1-\delta'$
$$
\frac{1}{n}  \|\hat{h}-h^*\|_2^2 \lesssim
\mathcal{C}_1'(\mathcal{H},h^*,\beta^*) \log\frac{n}{\delta '}.
$$

\paragraph{Step 2: A union of single-dimensional classes.}
Now, suppose that we have a finite set of directions (unit vectors) $v_1,\ldots, v_N$ and denote
$\mathcal{H}_i = \{h^* + tv_i: h^* + tv_i \in \mathcal{H}\}$. In other words, $\mathcal{H}_i$ is the restriction of $\mathcal{H}$ on a specific line passing through $h^*$ with direction $v_i$. 
Suppose we run MPLE on each direction, producing an output $\hat h_i$ for each direction. 
The calculations of Step 1 suggest that for all $i \in [N]$, w.pr. $1  - \delta'$:
$$
\frac{1}{n}  \|\hat{h}_i-h^*\|_2^2 \lesssim
\mathcal{C}_1'(\mathcal{H},h^*,\beta^*) \log\frac{n}{\delta'}.
$$
With a simple union bound over these $N$ events, we can set $\delta ' = \delta/ N$ and obtain that w.pr.~$\ge 1 - \delta$, for all $i \in [N]$,
\begin{equation}\label{eq:directions}
\frac{1}{n}  \|\hat{h}_i-h^*\|_2^2 \lesssim
\mathcal{C}_1'(\mathcal{H},h^*,\beta^*) \lp(\log\frac{n}{\delta}+\log N\rp).
\end{equation}
This essentially means that, if we run MPLE on the original set $\mathcal{H}$ and it ends up lying in any of the ${\cal H}_i$'s, it will lie close to the optimal point $h^*$. 

Since we don't know in which direction the MPLE will lie, we have to establish a statement like \eqref{eq:directions} for all directions in $\mathcal{H}$. The problem is that usually there are infinity directions, so the union bound approach doesn't automatically work. 

However, we can approximate the set of directions by a finite subset  of directions that form an $\epsilon$-net. Since any point $h \in \mathcal{H}$ defines a direction $h -h^*$, we can take an $\epsilon$-net $\mathcal{U}$ with respect to $\mathcal{H}$, which has size $N=N(\mathcal{H},\epsilon)$, which corresponds to the covering number defined in Definition~\ref{def:cover} of Section~\ref{sec:formal-defs}. Due to Lipschitzness of the optimization target, one can prove that the MPLE over $\mathcal{U}$ is close to the MPLE over $\mathcal{H}$. By selecting $\epsilon$ appropriately and substituting $N=N(\mathcal{H},\epsilon)$ in \eqref{eq:directions}, we derive \eqref{eq:generalH}.

\if 0
\subsection{Applying Theorem~\ref{thm:costis:general upper}}\label{sec:pr-apply}
Here, we sketch how Theorems~\ref{thm:costis:linear},~\ref{thm:costis:sparse linear},~\ref{thm:costis:linear theta only}, and \ref{t:neural_net} are derived from the general guarantee of Theorem~\ref{thm:costis:general upper}.
For Theorems~\ref{thm:costis:linear},~\ref{thm:costis:sparse linear} and \ref{t:neural_net} we first substitute $\mathcal{C}_1(\mathcal{F},X,\beta^*,\theta^*) \le 1/\|A\|_F^2$. Then, we use known bounds for the covering numbers of linear functions, sparse linear functions and neural networks, to substitute $\log N(\mathcal{F},X,\epsilon)\lesssim d\log(1/\epsilon), \ s\log d/\epsilon^2$ and $K^2R^2\log d/\epsilon^2$,
respectively \cite{vershynin2018high,raskutti2009model,bartlett2017spectrally}. Optimizing for $\epsilon$ derives the desired bound. For Theorem~\ref{thm:costis:linear theta only}, we use the same bound on the covering numbers of linear classes, however, analyze the complexity $\mathcal{C}_1(\mathcal{F},X,\theta^*,\beta^*)$ for random feature vectors.
\fi
\if 0
As we will see, these follow simply by computing the covering numbers in each case.

Let's start with Theorem~\ref{thm:costis:linear}.
Since $\mathcal{C}_1(\mathcal{F},X,\beta^*,\theta^*) \leq 1/\|A\|_F^2$, we only need to deal with $\inf_{\epsilon \ge 0} \lp(\epsilon n + \log N(\mathcal{F},\epsilon)\rp)$.
We would like to calculate the covering numbers of $\mathcal{F} = \{x \mapsto x^\top \theta: \|\theta\|_2 \le 1\}$. By Cauchy-Schwarz,
$$
\|x^\top \theta_2 - x^\top\theta_1\| \leq \|x\|\|\theta_2 - \theta_1\| = O(\|\theta_2 - \theta_1\|)
$$
Thus, $N(\mathcal{F},\epsilon)$ is bounded by the size of an $\epsilon$-net for the space of $\theta$, which is an $l_2$-ball of constant radius since $\|\theta\|$ is bounded. We know from standard results(e.g. \cite{vershynin2018high}) that the size of this net will be $O((1/\epsilon)^d)$. Hence, selecting $\epsilon = 1/\sqrt{n}$, we can bound the numerator by $d\log n$. This is exactly the rate we get from Theorem~\ref{thm:costis:linear}.

Now consider the sparse setting of Theorem~\ref{thm:costis:sparse linear}. Since we have the same bounds for $x_i, \theta$, to bound $N(\mathcal{F},\epsilon)$ we have to bound the size of an $\epsilon$-net in the space
of $\theta$'s with respect to the $l_2$ norm. Since $\theta$ has $l_1$ norm bounded by $s$, we need the log of the covering number of the $l_1$ ball with respect to the $l_2$-norm. 
This is known  \cite{raskutti2009model} to be of size $O(s \log d /\epsilon^2)$. Hence, we need to find the optimal $\epsilon$ that minimizes the numerator. This happens when
$$
\epsilon n = s \log d /\epsilon^2 \implies \epsilon = \lp(\frac{s\log d}{n}\rp)^{1/3}
$$
Substituting this value to the expression of Theorem~\ref{thm:costis:general upper} we get the desired bound. 

Finally, consider the setting where $f_\theta$ is a neural network. In that case, we can imagine that the features $x_i$ are the rows of a matrix $\mathbf{X} \in \R^{n\times d}$. Each row $x_i$ is given as input to the neural network, which produces a scalar output $f_\theta(x_i)$. Hence, the whole output of the network is a vector $\{f_\theta(x_i)\}_{i=1}^n$. The metric $d$ is just the $l_2$-norm of that vector divided by $\sqrt{n}$. 
We thus want a bound for the size of an $\epsilon\sqrt{n}$-net of the output of the neural net with respect to the $l_2$-norm. 
Such a bound is given in \cite{bartlett2017spectrally} for networks with spectral complexity bounded by $R$. 
Specifically, the log of the covering number is $O(\|\mathbf{X}\|_F^2 R^2 \log d /\epsilon^2)$. So, substituting for $\epsilon = \epsilon \sqrt{n}$, we get
the optimality condition
$$
\epsilon n = \frac{\|\mathbf{X}\|_F^2 R^2 \log d}{\epsilon^2 n} \implies \epsilon = \lp(\frac{\|\mathbf{X}\|_F^2 R^2 \log d }{n^2}\rp)^{1/3}.
$$
Since each row of $\mathbf{X}$ is bounded in norm, we get the rate of Theorem~\ref{t:neural_net}. 

\fi

\begin{figure*}[!ht]
\begin{center}
\begin{tabular} {ccc}
  \includegraphics[width=0.3\textwidth]{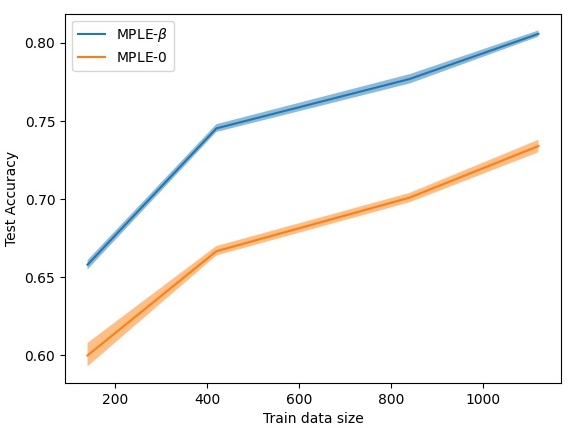} &   \includegraphics[width=0.3\textwidth]{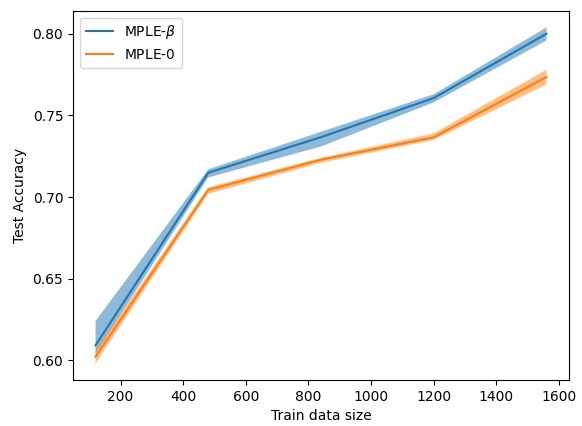} &    \includegraphics[width=0.3\textwidth]{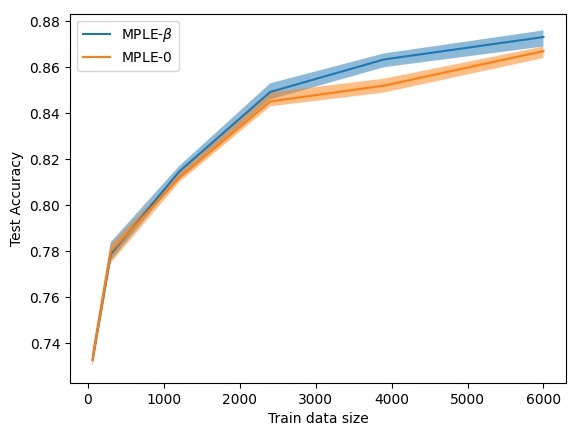} \\
  \end{tabular}
\end{center}
  \caption{From Left to Right: Plots of the accuracy of MPLE-$\beta$ (\textit{blue}) vs MPLE-$0$ (\textit{orange}) for Cora, Citeseer, Pubmed respectively as we increase the training data size gradually while maintaining the class probabilities.}
  \label{fig:increasing-training}
\end{figure*}

\section{Experiments} \label{sec:experiments}
When there is network information about dependencies between samples, we can use it to significantly boost the performance of supervised learning approaches. We demonstrate such improvements of MPLE-$\beta$ from including the dependency structure compared to assuming the data is i.i.d. We call this MPLE-$0$ (i.e. setting $\beta = 0$). We observe that MPLE-$\beta$ consistently outperforms MPLE-$0$ by a significant margin.

\noindent \textbf{Datasets.} We utilize three public citation datasets - Cora, Citeseer and Pubmed \cite{yang2016revisiting}. These datasets consist of a network where each node corresponds to a publication and the edges correspond to citation links. Each node contains a bag-of-words representation of the publication and a corresponding label indicating the area of the publication. Table \ref{tab:dataset} gives the statistics of the datasets.

\begin{table}[t]
\caption{Datasets: Cora and Citeseer have probability vectors as features. Pubmed has TF-IDF frequencies as features.}\label{tab:dataset}
\label{sample-table}
\vskip 0.15in
\begin{center}
\begin{small}
\begin{sc}
\begin{tabular}{lcccc}
\toprule
Dataset & Classes & Nodes  & Edges & Features\\
\midrule
Cora & 7 & 2708 & 5429 & 1433 \\
Citeseer & 6 & 3327 & 4732 & 3703 \\
Pubmed & 3 & 19717 & 44338 & 500 \\
\bottomrule
\end{tabular}
\end{sc}
\end{small}
\end{center}
\vskip -0.1in
\end{table}

\noindent\textbf{Experimental Setup.} The datasets we use are common benchmarks used for semi-supervised and fully-supervised learning on graph structured data. The state of the art on a lot of these is graph neural network (GNN) \cite{chen2020simple} based approaches. The setups considered in prior literature on these datasets differ from ours in the following sense: these works consider the {\em transductive} setting, that is, they assume access to the adjacency matrix of the entire graph as well as the features of the entire dataset (including those in the test set) at train time. In contrast, we work in the {\em inductive} setting, where we do not assume access to any information about the test set. However, at test time, our hypothesis uses the labels in the validation set (not the features). 

We perform three different experiments on each dataset where we measure the accuracy of prediction on the test labels. We run each experiment with 10 fixed random seeds and report the average and standard deviation. 

1. \textit{Sparse-data}: Following the semi-supervised setup of \cite{kipf2016semi, feng2020graph} and others, we compare performance of MPLE-$0$ and MPLE-$\beta$ over a public split which includes only 20 nodes per class as training, 500 nodes for validation and 1000 nodes for testing.

2. \textit{Increasing training data}: We compare the gap in performance of the two methods when training data is gradually increased from the semi-supervised setting towards the full-supervised setting.

3. \textit{Full-supervised}: We consider the fully-supervised setup from \cite{pei2020geom}. In this setup, we consider 10 random splits of the entire dataset. Each split maintains class distribution by splitting the set of nodes of each class into 60\%(train)-20\%(val)-20\%(test). For this experiment, we compare against an inductive variant of GCNII we denote GCNII-In. We disable access to the test set features during training in order to have a fair comparison with our inductive setting. 

\textbf{Model Details.} Since our classification task is multi-class, we extend the MPLE-$\beta$ algorithm for Ising models to its natural Pott's model generalization. For number of classes $K$, the probability of label $y_i = k^*$ conditioned on the other data and labels is computed as follows:

\[
\Pr_{\theta,\beta}[y_i = k^* |x,y_{-i}]
= \frac{\exp\lp(f_{\theta}(x_i)_{k^*} + \beta \sum_{j=1}^n A_{ij}\mathbbm{1}[y_j = k^*]\rp)}{\sum_{k=1}^K\exp\lp(f_{\theta}(x_i)_k + \beta \sum_{j=1}^n A_{ij}\mathbbm{1}[y_j = k]\rp)}.
\]

Using this we compute the MPLE-$\beta$ objective.

For both MPLE-$0$ and MPLE-$\beta$, our underlying model $f_{\theta}:\mathbb{R}^{\mathsf{\#features}} \rightarrow \mathbb{R}^{\mathsf{\#classes}}$ is a 2-layer neural network with 32 units in the hidden layer and ReLU activations. The difference between the two models is just the use of $\beta$. For comparison with the graph neural networks (GNNs), we use the GCNII \cite{chen2020simple} model which is a state-of-the-art GNN with depth 64 and hidden layer size of 64. We run our code on a GPU and use Adam to train all our models. We use the tuned hyper-parameters for GCNII however for our algorithms we do not perform a hyper-parameter search but use the parameters used in prior work \cite{feng2020graph}.




\textbf{Results.}
On the sparse-data experiment, for Cora MPLE-$\beta$ gives an accuracy of $65.8\pm 0.09\% $ vs $60\pm0.4\%$ given by MPLE-$0$. For Citeseer, MPLE-$\beta$ gets $60.9\pm 0.7\%$ vs MPLE-$0$ which gets $60.2\pm 0.3\%$. For Pubmed, both approaches get $73.3\pm 0.2\%$. As we increase the train data size as shown in Figure~\ref{fig:increasing-training} our gains also tend to increase. Finally for the fully-supervised setting we again outperform MPLE-$0$ {\em and} GCNII-In.
On Pubmed, our gains are smaller as the TF-IDF feature vector already implicitly encodes some network information from the neighbors.
Moreover, MPLE-$\beta$ runs much faster than any of the GNN approaches and is simpler with a low overhead of a scalar parameter on any given model, while remaining competitive in performance. However, it should be noted
that we do not compare performance in the transductive setting, in which GCNII was probably intended to run. 
Finally, our experiments are based on an approach with provable end-to-end guarantees, in contrast with the GNN approaches. 

\begin{table}[t]
\caption{Accuracy comparison between MPLE-$0$, MLPE-$\beta$ and GCNII-In for full-supervised experiment.}
\label{fig:results}
\vskip 0.15in
\begin{center}
\begin{small}
\begin{sc}
\begin{tabular}{lccc}
\toprule
Dataset & MPLE-0  & MPLE-$\beta$ &GCNII-In\\
\midrule
Cora & $74.5\pm 1.8 $ & $\textbf{85.3}\pm 1.7$ & $\textbf{85.3}\pm 1.3$\\
Citeseer & $72.3\pm1.7 $ & $\textbf{76.3}\pm1.0$ & $68.6 \pm 0.3$\\
Pubmed & $87.3\pm 0.2$ & $\textbf{89.0}\pm 0.2$ & $83.3 \pm 0.6$ \\
\bottomrule
\end{tabular}
\end{sc}
\end{small}
\end{center}
\vskip -0.2in
\end{table}

\subsection*{Acknowledgements.}
Costis Daskalakis was supported by NSF Awards IIS-1741137, CCF-1617730, and CCF-1901292, by a Simons Investigator Award, by the Simons Collaboration on the Theory of Algorithmic Fairness, by a DSTA grant, and by the DOE PhILMs project (No. DE-AC05-76RL01830). Vardis Kandiros was supported by the Onassis Foundation-Scholarship ID: F ZP 016-1/2019-2020.

\appendix
\section{Preliminary: formulation as an Ising model}\label{sec:ising-formulation}
We start by repeating some of the definitions from Section~\ref{sec:formal-defs}, and proceed by presenting a modified notation, that we will use in the proof of Theorem~\ref{thm:costis:general upper}, as it is easier to handle.

\paragraph{Central definitions from Section~\ref{sec:formal-defs}.}
\begin{definition}
Let $h,h' \in \R^n$ and $\beta,\beta' \in \R$. 
 We define
 $$
 \psi(h,\beta;h',\beta') = (\beta-\beta')^2\|A\|_F^2 + 
\lp\|h - h' + (\beta-\beta') A \vtanh\lp(\frac{\beta'}{\beta-\beta'}(h' - h) + h'\rp) \rp\|_2^2
 $$
\end{definition}
Next, we define the quantities $\mathcal{C}_1,\mathcal{C}_2$ that appear in the rate of Theorem~\ref{thm:costis:general upper}.
\begin{definition}
For a function class $\mathcal{F} = \{f_\theta: \theta \in \Theta\}$ and a collection of feature vectors $X \in \R^{n\times d}$, where the $i$-th row of $X$ is the feature $x_i$, we define
\begin{align*}
&\mathcal{C}_1(\mathcal{F},X,\theta^*,\beta^*)
:=
\sup_{(\theta,\beta) \in \Theta \times [-B,B]}
\min\lp(\frac{\|f_\theta(X)-f_{\theta^*}(X)\|_2^2/n}{\psi(f_\theta(X),\beta;f_{\theta^*}(X),\beta^*)}, \|f_\theta(X)-f_{\theta^*}(X)\|_2^2/n\rp)\\
&\mathcal{C}_2(\mathcal{F},X,\theta^*,\beta^*)
:=
\sup_{(\theta,\beta) \in \Theta \times [-B,B]}
\min\lp(\frac{(\beta - \beta^*)^2}{\psi(f_\theta(X),\beta;f_{\theta^*}(X),\beta^*)}, (\beta - \beta^*)^2\rp)
\end{align*}
In some parts of the proof, it will be more convenient to work with the following simplified upper bounds to $\mathcal{C}_1$ and $\mathcal{C}_2$:
\begin{align*}
&\mathcal{C}_1'(\mathcal{F},X,\theta^*,\beta^*)
:=
\sup_{(\theta,\beta) \in \Theta \times [-B,B]}
\frac{\|f_\theta(X)-f_{\theta^*}(X)\|_2^2/n}{\psi(f_\theta(X),\beta;f_{\theta^*}(X),\beta^*)}\\
&\mathcal{C}_2'(\mathcal{F},X,\theta^*,\beta^*)
:=
\sup_{(\theta,\beta) \in \Theta \times [-B,B]}
\frac{(\beta - \beta^*)^2}{\psi(f_\theta(X),\beta;f_{\theta^*}(X),\beta^*)}
\end{align*}

\end{definition}

\begin{definition}
Given $\epsilon > 0$, $\mathcal{F}$ and $X$, denote by $\mathcal{N}(\epsilon,\mathcal{F})$ the $\epsilon$-covering number of $\mathcal{F}$ with respect to the distance 
\[
d(f,f') = ||f(X)-f'(X)||/\sqrt n
= \sqrt{\frac{1}{n} \sum_{i=1}^n (f(x_i)-f'(x_i))^2}.
\]
In other words, $\mathcal{N}(\epsilon,\mathcal{F})$ is the minimal cardinality of a set $\mathcal{G}\subseteq \mathcal{F}$, such that for any $f \in \mathcal F$ there exists $g \in \mathcal G$ such that
$
d(f,f') \le \epsilon.
$
\end{definition}

\paragraph{A modified notation for the proof.}
To prove Theorem~\ref{thm:costis:general upper}
we use a slightly modified setting, replacing the family of vectors $\{f_\theta(X) \colon \theta \in \Theta\}$ with a family $\mathcal{H}$ of elements of $\mathbb{R}^n$.
This comes from the realization that the only way a function $f_\theta$ influences the outcome is thought the vector $(f_\theta(x_1),\ldots,f_\theta(x_n)) \in \R^n$. Hence, it is more convenient to consider the set of all these vectors that are produced for various $\theta$ as our main object of interest. 
We have a fixed matrix $A$ with $\|A\|_\infty = 1$. We also replace $(y_1,\dots,y_n)$ by a vector $\sigma \in \{-1,1\}^n$ and for any $h \in \mathbb{R}^n$ and $\beta \in \mathbb{R}$, denote
\[
\Pr_{h,\beta}[\sigma] = 
\frac{1}{Z_{h,\beta}}\exp\left(\beta \sigma^{\rm T} A \sigma + \sum_{i=1}^n \sigma_i h_i  \right).
\]
We can also write the MPLE Eq.~\eqref{eq:MPLE} in this language. Define the negative log of the optimized quantity by
\[
\phi(h,\beta;\sigma) := -\log \prod_{i=1}^n \Pr_{h,\beta}[\sigma_i |\sigma_{-i}]
\]
and by $(\hat h,\hat\beta)$ the MPLE, namely,
\[
(\hat h,\hat\beta)
= \arg\min_{h \in \mathcal{H},\beta \in [-B,B]}
\phi(h,\beta;\sigma).
\]
We define $\mathcal{C}_1$ and $\mathcal{C}_2$ analogously with respect to $\mathcal{H}$ instead of $\mathcal{F}$:
\begin{definition}
Suppose $\mathcal{H} \subseteq \R^n$ and let $\beta^* \in \R,h^*\in \R^n$ be fixed. We define
\begin{align*}
&\mathcal{C}_1(\mathcal{H},h^*,\beta^*)
:=
\sup_{(h,\beta) \in \mathcal{H} \times [-B,B]}
\min\lp(\frac{\|h - h^*\|_2^2/n}{\psi(h,\beta;h^*,\beta^*)}, \|h - h^*\|_2^2/n\rp)\\
&\mathcal{C}_2(\mathcal{H},h^*,\beta^*)
:=
\sup_{(h,\beta) \in \mathcal{H} \times [-B,B]}
\min\lp(\frac{(\beta - \beta^*)^2}{\psi(h,\beta;h^*,\beta^*)}, (\beta - \beta^*)^2\rp)
\end{align*}
\end{definition}
Similarly, we define $\mathcal N(\epsilon, \mathcal H)$:
\begin{definition}
Given $\mathcal{H}$ and $\epsilon > 0$, denote by $\mathcal{N}(\epsilon,\mathcal H)$ the $\epsilon$-covering number of $\mathcal{H}$ with respect to the distance $d(h,h') = \|h-h'\|/\sqrt{n}$.
\end{definition}
In various parts of the proof, we will use symbols like $c,C,C'$. These denote constants that possibly depend exponentially in $M$ and $B$ and are independent of the other parameters.

\paragraph{Re-writing the main theorem in the modified notation.}
We can rewrite Theorem~\ref{thm:costis:general upper} in the modified notation. We split it to Theorem~\ref{t:general} and Theorem~\ref{thm:general-lower}, proving the upper and lower bounds, respectively.

The following theorem is proved in Section~\ref{sec:proof-upper}:
\begin{theorem}\label{t:general}
Let $A$ be a matrix of $\|A\|_\infty = 1$, let $M,B>0$, let $\mathcal{H} \subseteq [-M,M]^n$ and let $\beta^* \in [-B,B]$. 
Let $(\hat h, \hat{\beta})$ denote the MPLE over $\mathcal{H}\times [-M,M]$. Then, there is a constant $C(M,B)$ that depends singly exponentially on $M,B$, such that for any $\delta \in (0,1/2)$, with probability at least $1-\delta$, we have
\begin{align*}
\frac{\|\hat h - h^*\|^2}{n} \leq  C(M,B)\inf_{\epsilon\ge 0}\lp(\log \frac n\delta + \epsilon n  + \log \mathcal{N}\lp(\epsilon,  \mathcal{H}\rp)\rp)\mathcal{C}_1(\mathcal{H},h^*,\beta^*)
\end{align*}
and 
\begin{align*}
(\hat \beta - \beta^*)^2 \leq  C(M,B)\inf_{\epsilon\ge 0}\lp(\log \frac n\delta + \epsilon n  + \log \mathcal{N}\lp(\epsilon, \mathcal{H}\rp)\rp)\mathcal{C}_2(\mathcal{H},h^*,\beta^*).
\end{align*}
\end{theorem}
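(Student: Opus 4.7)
The plan is to follow the three-step decomposition sketched in Section~\ref{sec:proof-general}, which lifts the one-parameter argument of Section~\ref{sec:single-dim-upper} to an arbitrary function class via a covering bound. I will focus on the bound for $\|\hat h - h^*\|_2^2/n$; the bound for $(\hat\beta-\beta^*)^2$ is obtained in parallel with $\mathcal{C}_1$ replaced by $\mathcal{C}_2$. It suffices to prove the claim with $\mathcal{C}_1'$ in place of $\mathcal{C}_1$, since one can always fall back to the trivial estimate $\|\hat h - h^*\|_2^2/n\le O(M^2)$ coming from $\|h\|_\infty\le M$. \textbf{Step 1 (single direction):} fix a unit vector $v\in\mathbb{R}^n$ and restrict MPLE to the one-dimensional subfamily $\mathcal{H}_v = \{h^*+tv\in\mathcal{H}\}\times[-B,B]$. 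On this line, the restricted minimizer $(\tilde h_v,\tilde\beta_v)$ satisfies $\phi(\tilde h_v,\tilde\beta_v;\sigma)\le\phi(h^*,\beta^*;\sigma)$, so second-order Taylor expansion around $(h^*,\beta^*)$ gives $\tfrac12 u^\top\nabla^2\phi\,u \le |u^\top \nabla\phi(h^*,\beta^*;\sigma)|$ for $u=(\tilde h_v-h^*,\tilde\beta_v-\beta^*)$. I bound the gradient term via the concentration inequalities of~\cite{dagan2020estimating} and then perform the mean-field substitution $\sigma\to \vtanh(\beta^*A\sigma+h^*)$ using exchangeable pairs~\cite{chatterjee2005concentration}, exactly as in the derivation of Eq.~\eqref{eq:bnd-one-theta}. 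The outcome is that, with probability $\ge 1-\delta'$, $\|\tilde h_v - h^*\|_2^2/n \lesssim \mathcal{C}_1'(\mathcal{H},h^*,\beta^*)\log(n/\delta')$.

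\textbf{Step 2 (covering and union bound):} let $\mathcal{G}\subseteq \mathcal{H}$ be an $\epsilon$-net of $\mathcal{H}$ in the normalized distance $d(h,h')=\|h-h'\|_2/\sqrt n$, so that $|\mathcal{G}|=\mathcal{N}(\epsilon,\mathcal{H})$. Each $g\in\mathcal{G}\setminus\{h^*\}$ defines the direction $v_g=(g-h^*)/\|g-h^*\|_2$. Applying Step~1 to each $\mathcal{H}_{v_g}$ with failure probability $\delta/|\mathcal{G}|$ and taking a union bound, we get that with probability $\ge 1-\delta$, for every $g\in\mathcal{G}$,
\[
\|\tilde h_{v_g} - h^*\|_2^2/n \lesssim \mathcal{C}_1'\lp(\log(n/\delta)+\log\mathcal{N}(\epsilon,\mathcal{H})\rp).
\]
\textbf{Step 3 (pass to full $\mathcal{H}$):} let $(\hat h,\hat\beta)$ be the true MPLE and pick $g^\star\in\mathcal{G}$ with $\|\hat h - g^\star\|_2\le\epsilon\sqrt n$. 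The pseudo-log-likelihood $\phi(\cdot,\beta;\sigma)$ is a sum of $n$ terms each $O(1)$-Lipschitz in a single coordinate of $h$ under the bounds $|h_i|,|h^*_i|\le M$ and $|\beta|\le B$, so $|\phi(\hat h,\hat\beta;\sigma)-\phi(g^\star,\hat\beta;\sigma)|\le C(M,B)\epsilon n$. Feeding this $O(\epsilon n)$ slack into the Taylor inequality for the direction $v_{g^\star}$ yields $\|\hat h-h^*\|_2^2/n\lesssim C(M,B)(\log(n/\delta)+\epsilon n+\log\mathcal{N}(\epsilon,\mathcal{H}))\mathcal{C}_1'$, and infimizing over $\epsilon\ge 0$ completes the argument.

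\textbf{Main obstacle.} The delicate part is Step~1's mean-field substitution. A naive concentration bound on the random quantity $\|u_h+u_\beta A\sigma\|_2$ appearing in the gradient would fluctuate at scale $\sqrt n\,\|A\|_2$, far too large to be absorbed by the $\|\lambda A\|_F$ term inside $\psi$. The exchangeable-pairs machinery of~\cite{chatterjee2005concentration} must be instantiated carefully so that the deviation is controlled by $\|\beta^*A\|_2=O(1)$ instead, which is what ultimately makes the final rate scale as $1/\|A\|_F^2$. A related subtlety is that this mean-field replacement becomes loose when $\psi$ is very small; strengthening $\mathcal{C}_1'$ to $\mathcal{C}_1$ (via the $\min$ with $\|h-h^*\|_2^2/n$) is what handles that regime, since the trivial bound $\|h-h^*\|_2^2/n\le O(M^2)$ is always available.
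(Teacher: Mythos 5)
Your high-level architecture (one-dimensional Taylor/concentration/mean-field argument, then a covering/union-bound lift, then Lipschitzness of $\phi$ to pass from the net to the full class) matches the paper's strategy, but there is a genuine gap in how you claim to recover the theorem as stated. You reduce to proving the bound with $\mathcal{C}_1'$ and assert that the strengthening to $\mathcal{C}_1$ follows "since the trivial bound $\|h-h^*\|_2^2/n \le O(M^2)$ is always available." That reduction is false: the $\min$ in the definition of $\mathcal{C}_1$ is taken \emph{pointwise inside the supremum}, which is not the same as taking a global minimum of your $\mathcal{C}_1'$-bound with a trivial bound. Concretely, take $A$ of Curie--Weiss type ($\|A\|_F^2=\Theta(1)$, $\|A\|_2=\Theta(1)$), $h^*,\beta^*$ as in the paper's lower-bound construction, and let $\mathcal{H}$ consist of $h^*$, a short segment of length $\tau=1/\sqrt n$ along the cancellation direction (where $h-h^*$ nearly cancels against $(\beta-\beta^*)A\vtanh(\cdot)$, so the ratio $\|h-h^*\|_2^2/(n\psi)$ is $\Theta(1)$ while $\psi\le \tau^2<1$), and one far point $h_2$ with $\|h_2-h^*\|_2^2=n$ orthogonal to the cancellation direction (so $\psi(h_2,\beta)\gtrsim n$ for all $\beta$). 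Then $\mathcal{C}_1'=\Theta(1)$ but $\mathcal{C}_1=\Theta(1/n)$, so the theorem asserts $\|\hat h-h^*\|_2^2\lesssim \log(n/\delta)$ and in particular rules out $\hat h=h_2$, whereas your route only yields $\|\hat h-h^*\|_2^2/n\lesssim \min\lp(\mathcal{C}_1'\log(n/\delta),\,M^2\rp)=\Theta(\log(n/\delta))$, which does not. The correct way to get $\mathcal{C}_1$ (and this is what the paper does) is to let the net/union-bound machinery deliver the intermediate deterministic statement $\psi(\hat h,\hat\beta)\le R$ with $R=\inf_{\epsilon}\lp(\log\frac n\delta+\epsilon n+\log\mathcal{N}(\epsilon,\mathcal{H})\rp)$, and only then convert: if $\psi(\hat h,\hat\beta)\le 1$ the second arm of the $\min$ evaluated at $(\hat h,\hat\beta)$ gives $\|\hat h-h^*\|_2^2/n\le\mathcal{C}_1\le R\,\mathcal{C}_1$; otherwise $\|\hat h-h^*\|_2^2/n=\psi(\hat h,\hat\beta)\cdot\frac{\|\hat h-h^*\|_2^2/n}{\psi(\hat h,\hat\beta)}\le R\,\mathcal{C}_1$. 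Your Steps 1--3 could be rewired to output a bound on $\psi(\hat h,\hat\beta)$ rather than dividing by $\psi$ and passing to the sup $\mathcal{C}_1'$ immediately, but as written the final packaging proves a strictly weaker theorem.

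A secondary, fixable issue: your union bound runs only over directions $v_g$ in $h$-space, so each per-direction event must hold uniformly over the nuisance parameter $\beta\in[-B,B]$ (equivalently over the slope $\lambda$ between the $h$- and $\beta$-increments) along that slice, and additionally must cover near-minimizers with $O(\epsilon n)$ slack, not just the restricted minimizer, for your Step~3 to apply at $g^\star$. The concentration lemmas you invoke are stated for a fixed pair $(h,\beta)$ (they are scale-invariant along a ray in the joint $(h,\beta)$ space, but not uniform over the $\beta$-direction), so this uniformity needs its own net or chaining argument. The paper sidesteps this by taking a product $\epsilon$-net of $\mathcal{H}\times[-B,B]$, union-bounding pointwise over net points in the superlevel set $\{\psi\ge u^2\}$, and using that $\phi$ is $O(n)$-Lipschitz in the normalized product metric with $\epsilon\sim u^2/n$; covering $[-B,B]$ costs only an extra $\log(1/\epsilon)\lesssim\log n$, which is absorbed. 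You should either adopt that pointwise-net route or explicitly add a net over $\lambda$ inside each slice.
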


The following theorem is proved in Section~\ref{sec:lowerbound}:
\begin{theorem}\label{thm:general-lower}
Let $A$ be a matrix of $\|A\|_\infty = 1$, let $M,B>0$, let $\mathcal{H} \subseteq [-M,M]^n$ and let $\beta^* \in [-B,B]$. Assume that $\mathcal{H}$ is convex and closed under negation, namely, for any $h,h' \in \mathcal{H}$ and $\lambda \in (0,1)$ it holds that $\lambda h + (1-\lambda) h' \in \mathcal{H}$ and $-h \in \mathcal{H}$.
Then, for any estimator $h'$ there exists $(h^*,\beta^*) \in \mathcal{H} \times [-B,B]$ s.t. with probability $\ge 0.49$ over $\sigma \sim \Pr_{h^*,\beta^*}$,
\[
\frac{\|h'-h^*\|^2}{n}
\ge c(M) \mathcal{C}_1(\mathcal{H},h^*,\beta^*)
\]
where $C(M)$ depends exponentially on $M$.
\end{theorem}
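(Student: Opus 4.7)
The plan is to apply Le Cam's two-point method. First, choose $(h^*, \beta^*)$ to (approximately) maximize $\mathcal{C}_1(\mathcal{H}, h, \beta)$ over $(h, \beta) \in \mathcal{H} \times [-B, B]$, and denote $\mathcal{C}^* := \mathcal{C}_1(\mathcal{H}, h^*, \beta^*)$. By definition of the supremum in $\mathcal{C}^*$, pick $(h, \beta)$ near-optimal so that $\min\{\|h - h^*\|^2/(n\psi_0),\, \|h - h^*\|^2/n\} \gtrsim \mathcal{C}^*$, where $\psi_0 := \psi(h, \beta; h^*, \beta^*)$; importantly, both arguments of the min are individually $\gtrsim \mathcal{C}^*$. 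For a scalar $t \in (0,1]$ to be chosen, set $\tilde h := (1-t) h^* + t h$ and $\tilde \beta := (1-t)\beta^* + t \beta$; by convexity, $\tilde h \in \mathcal{H}$ and $\tilde \beta \in [-B, B]$. A direct calculation shows that the $\vtanh$-argument $(\beta^*/(\beta-\beta^*))(h^*-h) + h^*$ appearing in $\psi$ is invariant under $(h,\beta) \mapsto (\tilde h, \tilde \beta)$ since the factors of $t$ cancel, so
\[
\psi(\tilde h, \tilde \beta; h^*, \beta^*) = t^2 \psi_0, \qquad \|\tilde h - h^*\|^2/n = t^2 \|h - h^*\|^2/n.
\]
Taking $t^2 := \min\{1,\, c/\psi_0\}$ for a small absolute constant $c > 0$ then yields $\psi(\tilde h, \tilde \beta; h^*, \beta^*) \le c$ and $\|\tilde h - h^*\|^2/n \gtrsim \mathcal{C}^*$, where the latter uses both coordinates of the min separately.

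The main technical step is to establish $\mathrm{KL}\bigl(\Pr_{h^*, \beta^*}\,\|\,\Pr_{\tilde h, \tilde \beta}\bigr) \lesssim \psi(\tilde h, \tilde \beta; h^*, \beta^*)$. Since the Ising model is an exponential family with natural parameter $\eta = (h, \beta)$ and sufficient statistic $T(\sigma) = (\sigma, \sigma^\top A \sigma)$, the KL equals the Bregman divergence of the log-partition $\log Z(\eta)$ and, by Taylor expansion, is bounded by $\tfrac12 \sup_{s \in [0,1]} \mathrm{Var}_{\eta_s}\bigl(\Delta h^\top \sigma + \Delta \beta \cdot \sigma^\top A \sigma\bigr)$, where $\eta_s$ interpolates between the two hypotheses and $(\Delta h, \Delta \beta) = (\tilde h - h^*, \tilde \beta - \beta^*)$. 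Controlling this variance under a possibly low-temperature Ising measure is the main obstacle; I would invoke Chatterjee's method of exchangeable pairs, in the form already used in the upper-bound proof via \cite{chatterjee2005concentration, dagan2020estimating}, to get $\mathrm{Var}(\Delta h^\top \sigma) \lesssim \|\Delta h\|^2$ and $\mathrm{Var}(\Delta \beta \cdot \sigma^\top A \sigma) \lesssim (\Delta \beta)^2 \bigl(\|A\|_F^2 + \|A\,\vtanh(\cdots)\|^2\bigr)$, with $\vtanh(\cdots)$ arising because the Glauber-conditional mean of $\sigma_i$ is $\tanh((\beta A \sigma)_i + h_i)$ and acts as the mean-field proxy for the magnetization under $\eta_s$. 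The cross-term between $\Delta h^\top \sigma$ and $\Delta \beta\, \sigma^\top A \sigma$ combines with the two diagonal terms to match the quadratic $\bigl\|\Delta h + \Delta \beta \cdot A\,\vtanh(\cdots)\bigr\|^2$ in the definition of $\psi$.

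With $c$ chosen small enough, Pinsker's inequality gives $\mathrm{TV}\bigl(\Pr_{h^*,\beta^*}, \Pr_{\tilde h, \tilde \beta}\bigr) \le 0.02$. Applying Le Cam: for any estimator $h'$, the test $\phi(\sigma) := \mathbbm{1}\bigl[\|h'(\sigma) - h^*\|_2 \ge \|h'(\sigma) - \tilde h\|_2\bigr]$ satisfies $\Pr_{h^*, \beta^*}[\phi = 1] + \Pr_{\tilde h, \tilde \beta}[\phi = 0] \ge 1 - \mathrm{TV} \ge 0.98$, so at least one of these is $\ge 0.49$; by the triangle inequality built into the test, under the corresponding distribution the estimator has error $\|h' - \cdot\|_2^2/n \ge \|\tilde h - h^*\|_2^2/(4n) \gtrsim \mathcal{C}^*$. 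Designating this hard hypothesis as the theorem's $(h^*, \beta^*)$ closes the proof: if the hard hypothesis is the original $(h^*, \beta^*)$, we are done directly; if it is $(\tilde h, \tilde \beta)$, then $\mathcal{C}_1(\mathcal{H}, \tilde h, \tilde \beta) \le \mathcal{C}^*$ by the initial maximizing choice, so $\|h' - \tilde h\|^2/n \gtrsim \mathcal{C}^* \ge \mathcal{C}_1(\mathcal{H}, \tilde h, \tilde \beta)$ as required.
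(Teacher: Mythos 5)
Your high-level architecture (two-point Le Cam argument, choosing the base point as a maximizer of $\mathcal{C}_1$, rescaling along the segment so that $\psi$ drops below a small constant via the $t^2$-homogeneity of $\psi$, Pinsker, and the final comparison $\mathcal{C}_1(\mathcal{H},\tilde h,\tilde\beta)\le \mathcal{C}^*$) is exactly the paper's skeleton (cf.\ Lemma~\ref{lem:scale-psi} and Lemma~\ref{lem:la-cam}). But the step you yourself flag as ``the main technical step'' --- $D_{KL}(\Pr_{h^*,\beta^*}\|\Pr_{\tilde h,\tilde\beta})\lesssim \psi(\tilde h,\tilde\beta;h^*,\beta^*)$ for \emph{arbitrary} $\beta^*\in[-B,B]$ --- is a genuine gap, and the tools you propose do not close it. The inequality $\mathrm{Var}(\Delta h^\top\sigma)\lesssim\|\Delta h\|_2^2$ is simply false at low temperature: in the Curie--Weiss model with $\beta$ above the critical value and $\Delta h=\mathbf{1}$, the magnetization is bimodal and $\mathrm{Var}(\mathbf{1}^\top\sigma)=\Theta(n^2)\gg n=\|\Delta h\|_2^2$. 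Chatterjee's exchangeable-pairs bound (the paper's Lemma~\ref{lem:conc-from-cha}) only controls deviations of $\sigma$ from its \emph{conditional} (Glauber) means $\tanh(\beta(A\sigma)_i+h_i)$, not from the unconditional mean, and that distinction is precisely the high-/low-temperature issue; it cannot be upgraded to the variance bounds you assert. Likewise, matching the cross term to $\|\Delta h+\Delta\beta\,A\vtanh(\cdots)\|_2^2$ is where the real work lies and is not a bookkeeping step.

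The paper avoids this by \emph{not} maximizing over all of $[-B,B]$: it takes the base point $(h_0,\beta_0)$ to maximize $\mathcal{C}_1$ over $\mathcal{H}\times[-1/2,1/2]$ (a Dobrushin-type high-temperature window, since $\|A\|_\infty=1$), and uses the closure-under-negation hypothesis --- which your argument never invokes --- to pick the sign of the perturbation so that the perturbed $\beta_\zeta$ also stays in the admissible range. In that regime the KL is bounded via a Taylor/variance identity together with the variance bound for quadratic forms of high-temperature Ising models (Theorem~\ref{thm:var-bound}, from \cite{adamczak2019note}), plus a separate mean-field lemma relating $\|h_0-h_1-(\beta_1-\beta_0)\E_{h_0,\beta_0}[A\sigma]\|_2$ to the $\vtanh$ expression in $\psi$, which again uses high temperature (both through the variance bound and through a $1/(1-|\beta_0|\|A\|_2)$ factor); this yields Lemma~\ref{lem:kl-lb}. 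The theorem's existential form still follows because the final step compares $\mathcal{C}_1$ at the perturbed point with $\mathcal{C}_1$ at the restricted maximizer. To repair your write-up you would need to restrict the construction to such a high-temperature window, invoke (or reprove) a variance bound of the Adamczak et al.\ type rather than an exchangeable-pairs bound, and supply the mean-field approximation lemma explicitly.
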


We also have the following simple Lemma that upper bounds $\mathcal{C}_1,\mathcal{C}_2$ by a simpler quantity, and proved in Section~\ref{sec:ub-simpler}.

\begin{lemma}\label{lem:C-to-Forb}
It holds that
\[
\mathcal{C}_1(\mathcal{H},h^*,\beta^*)
\le \frac{1}{4\|A\|_F^2}.
\]
\end{lemma}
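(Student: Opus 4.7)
The plan is to bound the quantity inside the supremum in the definition of $\mathcal{C}_1$ pointwise in $(h,\beta)\in\mathcal{H}\times[-B,B]$. I would write $\Delta h := h-h^*$, $\Delta\beta := \beta-\beta^*$, and $T := A\vtanh(h^* - (\beta^*/\Delta\beta)\Delta h)$, adopting the convention $\Delta\beta\cdot T = 0$ when $\Delta\beta=0$ (so that $\psi = \|\Delta h\|_2^2$ in that limit, continuously). With this notation, $\psi = \Delta\beta^2\|A\|_F^2 + \|\Delta h + \Delta\beta T\|_2^2$. Two crude norm bounds come for free from $\|A\|_\infty = 1$: each coordinate of $T$ satisfies $|T_i|\le \sum_j |A_{ij}|\le 1$, so $\|T\|_2^2\le n$; and each row $A_i$ obeys $\|A_i\|_2^2\le \|A_i\|_1\|A_i\|_\infty\le 1$, so $\|A\|_F^2\le n$.

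The heart of the proof is a weighted Cauchy--Schwarz splitting. I would apply the identity $\|u+v\|_2^2 \le (1+c)\|u\|_2^2 + (1+1/c)\|v\|_2^2$ (valid for any $c>0$ by AM--GM on the cross term) with $u = \Delta h + \Delta\beta T$ and $v = -\Delta\beta T$, obtaining
\[
\|\Delta h\|_2^2 \le (1+c)\,\|\Delta h + \Delta\beta T\|_2^2 + (1+1/c)\,n\,\Delta\beta^2.
\]
The choice $c = n/\|A\|_F^2$ balances $(1+1/c)n$ against $(1+c)\|A\|_F^2$, letting both coefficients align with the two summands of $\psi$ and yielding the clean bound
\[
\|\Delta h\|_2^2 \;\le\; \frac{n+\|A\|_F^2}{\|A\|_F^2}\,\psi.
\]
Dividing by $n$ and using $\|A\|_F^2\le n$, the first argument of the $\min$ in the definition of $\mathcal{C}_1$ is at most $1/\|A\|_F^2 + 1/n \le 2/\|A\|_F^2$. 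Since $\min$ is always dominated by the first argument, taking the supremum over $(h,\beta)$ proves the claimed $1/\|A\|_F^2$ scaling with some absolute constant.

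The main obstacle is extracting the precise constant $1/4$ as stated. The weighted splitting above produces a universal factor $2$ rather than $1/4$; closing this gap is where the $\min$ with $\|\Delta h\|_2^2/n$ becomes essential. I would case-split on whether $\|\Delta h\|_2^2 \le n/(4\|A\|_F^2)$: in that regime the second argument of the $\min$ is already $\le 1/(4\|A\|_F^2)$ and bounds the $\min$ directly, while in the complementary regime $\|\Delta h\|_2^2 > n/(4\|A\|_F^2)$ one can re-optimize the weight $c$ tightly (now knowing a lower bound on $\|\Delta h\|_2^2$) against the combined two-summand structure of $\psi$ to recover the sharper constant. No conceptually new ingredient is required beyond the splitting inequality and the two crude bounds $\|T\|_2^2\le n$ and $\|A\|_F^2\le n$; the remainder is careful bookkeeping.
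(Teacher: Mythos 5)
Your main argument is correct and, up to the value of the absolute constant, proves exactly what the paper needs; it is also a genuinely (if mildly) different route. The paper lower-bounds $\sqrt{\psi}$ by each of its two ingredients separately, namely $\sqrt{\psi}\ge|\beta-\beta^*|\,\|A\|_F$ and $\sqrt{\psi}\ge\|h-h^*\|-|\beta-\beta^*|\sqrt{n}$ (the latter via $|\tanh|\le1$ and $\|A\|_2\le\|A\|_\infty\le1$), and then case-splits on whether $\sqrt{n}\,|\beta-\beta^*|$ exceeds $\|h-h^*\|/2$. You instead absorb both summands of $\psi$ at once through the weighted inequality $\|u+v\|_2^2\le(1+c)\|u\|_2^2+(1+1/c)\|v\|_2^2$ with the optimized weight $c=n/\|A\|_F^2$, getting $\|h-h^*\|_2^2\le\frac{n+\|A\|_F^2}{\|A\|_F^2}\,\psi$ and hence $\mathcal{C}_1\le\mathcal{C}_1'\le 2/\|A\|_F^2$. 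Both proofs rest on the same two facts ($\|A\vtanh(\cdot)\|_2\le\sqrt{n}$ and $\|A\|_F^2\le n$) and both really bound $\mathcal{C}_1'$; your weighted splitting avoids the case analysis and in fact gives a slightly better constant than the paper's argument, which (once its carelessly reversed inequality signs are read in the intended direction) yields $4/\|A\|_F^2$.

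The one genuine gap is your final paragraph, which tries to recover the literal constant $1/4$: this cannot be completed, because $\mathcal{C}_1\le\frac{1}{4\|A\|_F^2}$ is false as stated. Take $A$ the adjacency matrix of a perfect matching (symmetric, zero diagonal, $\|A\|_\infty=1$, $\|A\|_F^2=n$), $\beta^*=0$, $h^*=0$ and $h=\mathbf{1}\in\mathcal{H}$; then the $\vtanh$ argument vanishes, so $\psi=(\beta-\beta^*)^2 n+n$, and letting $\beta\to\beta^*$ the quantity inside the supremum tends to $1/n>1/(4n)=1/(4\|A\|_F^2)$. In particular your proposed second regime ($\|h-h^*\|_2^2>n/(4\|A\|_F^2)$, ``re-optimize $c$'') fails precisely on such examples, and the $\min$ with $\|h-h^*\|_2^2/n$ cannot rescue it. This is not a defect relative to the paper: the paper's own short proof also only establishes $\mathcal{C}_1\le O(1)/\|A\|_F^2$, and the factor $1/4$ in the lemma statement is best read as a typo; it is immaterial downstream, since the lemma is only invoked inside $\lesssim$-type bounds that absorb absolute (and $M,B$-dependent) constants. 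So your proof of the $O(1/\|A\|_F^2)$ bound is complete as is; simply drop the attempt at the exact constant.
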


\paragraph{Proof of Theorem~\ref{thm:costis:general upper}.}
We now explain how we can easily use Theorem~\ref{t:general}, Theorem~\ref{thm:general-lower} and Lemma~\ref{lem:C-to-Forb} to get Theorem~\ref{thm:costis:general upper}.

\begin{proof}[Proof of Theorem~\ref{thm:costis:general upper}]
For the upper bound, we will apply Theorem~\ref{thm:costis:general upper} by setting $\mathcal{H} = \{(f_\theta(x_1),\ldots,f_\theta(x_n)) : \|\theta\| \leq M\}.$
First of all, we show that $N(\mathcal{F}, X, \epsilon) = N(\mathcal{H},\epsilon)$. Indeed, by the definition of $\mathcal{H}$ we can see how the elements of $\mathcal{F}$ can be in one-to-one correspondence with the elements of $\mathcal{F}$. Specifically, some $f_\theta \in \mathcal{F}$ corresponds to $h = (f_\theta(x_1),\ldots,f_\theta(x_n)) \in \mathcal{H}$.
As for the metric, if $h_1,h_2$ correspond to $f_{\theta_1},f_{\theta_1}$, we have
\begin{align*}
d(f_{\theta_1},f_{\theta_2}) = \sqrt{\sum_{i=1}^n (f_{\theta_2}(x_i)-f_{\theta_1}(x_i))^2/n} = \frac{\|h_2 - h_1\|}{\sqrt n}
\end{align*}
Thus, we see that the metric $d$ corresponds exactly to the metric used to define $N(\mathcal{H},\epsilon)$ in Section~\ref{sec:proof-general}. It follows that $N(\mathcal{F}, X, \epsilon) = N(\mathcal{H},\epsilon)$. 

Since optimizing in the space $\mathcal{H}$ is equivalent to optimizing in the space of $\theta$'s, we have $\hat h = f_{\hat\theta}(X)$ and $h^* = f_{\theta^*}(X)$. To conclude the proof of the general rate, we need to argue that $\mathcal{C}_1(\mathcal{H},h^*,\beta^*)= \mathcal{C}_1(\mathcal{F}, X, \theta^*,\beta^*)$. Since in the definition of $\mathcal{C}_1(\mathcal{H},h^*,\beta^*)$ we just substituted $h = f_\theta(X)$ and to create the set $\mathcal{H}$ we did the same thing, these two quantities are clearly the same. A similar argument can be made for $\mathcal{C}_2(\mathcal{H},h^*,\beta^*)$.

To conclude the proof of the upper bound in Theorem~\ref{thm:costis:general upper}, we simply use Lemma~\ref{lem:C-to-Forb} to bound the quantities $\mathcal{C}_1, \mathcal{C}_2$. 
The lower bound similarly follows from Theorem~\ref{thm:general-lower}.
\end{proof}

\section{Proof of the upper bound in Theorem~\ref{thm:costis:general upper}}\label{sec:proof-upper}
\subsection{Proof Overview}
As was shown in Section~\ref{sec:ising-formulation}, it suffices to prove Theorem~\ref{t:general}.
The proof can be broken down into several lemmas. We first present a lemma that shows that for a single $h \in \mathcal{H}$, if $h$ is far from $h^*$ it is unlikely that the MPLE will return $h$.

\begin{lemma}\label{l:taylor}
Let $h \in [-M,M]^n$ be a fixed vector and $\beta \in [-B,B]$. Then, there is a constant $C=C(M,B)>0$ such that for all $u>0$, with probability at least $1 - \log n\me^{-u^2}$, if 
\begin{equation}\label{eq:taylor-cond}
\|(\beta-\beta^*) A\sigma + h - h^*\|_2 \geq Cu
\end{equation}
then 
\begin{equation}\label{eq:greater-than}
\phi(h,\beta;\sigma) > \phi(h^*,\beta^*;\sigma) + cu^2
\end{equation}
\end{lemma}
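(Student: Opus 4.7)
\textbf{Plan for proving Lemma~\ref{l:taylor}.}

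The plan is to carry out a second-order Taylor expansion of the (random) function $\phi(\cdot,\cdot;\sigma)$ about the ground truth $(h^*,\beta^*)$ and show that (i) the Hessian quadratic form in the direction of $v := (h-h^*,\beta-\beta^*)$ is lower bounded by a constant multiple of the squared norm of the ``effective deviation'' $z:=(h-h^*)+(\beta-\beta^*)A\sigma$ appearing in the hypothesis~\eqref{eq:taylor-cond}, while (ii) the first-order term concentrates around zero at rate $\lesssim \|z\|\cdot u$. Combining these two facts, a purely algebraic calculation shows that when $\|z\|\ge Cu$ with $C$ large enough, the quadratic term dominates and $\phi(h,\beta;\sigma) - \phi(h^*,\beta^*;\sigma) \ge c u^2$.

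For the Hessian step, a direct computation from the definition of $\phi$ gives $\partial \phi/\partial h_i = \tanh(g_i)-\sigma_i$ and $\partial\phi/\partial\beta = (A\sigma)^\top(\tanh(g)-\sigma)$, where $g_i = h_i + \beta(A\sigma)_i$. Differentiating once more, the Hessian quadratic form collapses to the clean expression
\begin{equation*}
v^\top \nabla^2 \phi(\tilde h,\tilde\beta;\sigma)\,v \;=\; \sum_{i=1}^n \operatorname{sech}^2(\tilde g_i)\bigl((h-h^*)_i + (\beta-\beta^*)(A\sigma)_i\bigr)^{\!2} \;=\;\sum_{i=1}^n \operatorname{sech}^2(\tilde g_i)\,z_i^2
\end{equation*}
for some intermediate $(\tilde h,\tilde\beta)$ on the segment joining $(h,\beta)$ with $(h^*,\beta^*)$. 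Since $|\tilde h_i|\le M$, $|\tilde\beta|\le B$ and $\|A\sigma\|_\infty\le\|A\|_\infty=1$, we have $|\tilde g_i|\le M+B$ and hence $\operatorname{sech}^2(\tilde g_i)\ge c_0:=1/\cosh^2(M+B)$, giving the curvature bound $v^\top\nabla^2\phi(\tilde h,\tilde\beta;\sigma)\,v \ge c_0\|z\|_2^2$, which is directly comparable to the hypothesis \eqref{eq:taylor-cond}.

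For the gradient step, the first-order term factors as $v^\top\nabla\phi(h^*,\beta^*;\sigma) = r^{*\top} z$ where $r^* := \tanh(g^*)-\sigma$ satisfies $\mathbb{E}[r^*_i\mid \sigma_{-i}]=0$ under $\Pr_{h^*,\beta^*}$. This is precisely the type of Ising-model sum for which the concentration machinery of \cite{dagan2020estimating} (built on the exchangeable-pairs method of \cite{chatterjee2005concentration}) applies. Splitting $r^{*\top}z = r^{*\top}(h-h^*) + (\beta-\beta^*)\,r^{*\top}A\sigma$, the first is a fixed-coefficient linear statistic of a centered Ising sequence and concentrates at rate $\|h-h^*\|\cdot u$; the second is a bilinear form that concentrates at rate $\|A\|_F\cdot u$. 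To upgrade these two scale-dependent tails into a single $\|z\|_2$-dependent bound $|r^{*\top}z|\lesssim \|z\|_2\cdot u$ -- which is what the Hessian bound wants to see matched against -- I will union-bound over $O(\log n)$ dyadic scales for $\|z\|_2$ (which lies in $[0,O(\sqrt n\,(M{+}B))]$); this is exactly where the $\log n$ prefactor in the probability $1-\log n\, e^{-u^2}$ enters the statement.

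Combining the two: with probability $\ge 1-\log n\,e^{-u^2}$, Taylor's theorem yields $\phi(h,\beta;\sigma)-\phi(h^*,\beta^*;\sigma) \ge -C_1\|z\|_2 u + (c_0/2)\|z\|_2^2$, and choosing $C>4C_1/c_0$ ensures that under the hypothesis $\|z\|_2\ge Cu$ the right-hand side is at least $(c_0/4)\|z\|_2^2\ge (c_0 C^2/4)u^2=:cu^2$, proving \eqref{eq:greater-than}. I expect the main obstacle to be the concentration step: because $z$ depends on $\sigma$ linearly through $A\sigma$, one cannot simply invoke concentration for a fixed test vector, and the $\|z\|_2$-dependent (rather than $\|h-h^*\|_2 + |\beta-\beta^*|\|A\|_F$-dependent) bound that we need requires either the refined Ising inequalities in \cite{dagan2020estimating} or the dyadic-peeling argument sketched above -- and it is this step that dictates the $\log n$ loss in the failure probability.
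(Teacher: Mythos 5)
Your skeleton matches the paper's proof: a second-order Taylor expansion of $\phi$ around $(h^*,\beta^*)$, the observation that the first-order term is $r^{*\top}z$ with $r^*_i=\tanh(g^*_i)-\sigma_i$ conditionally centered and $z=(h-h^*)+(\beta-\beta^*)A\sigma$, and the pointwise Hessian lower bound $v^\top\nabla^2\phi(\tilde h,\tilde\beta)v=\sum_i \operatorname{sech}^2(\tilde g_i)z_i^2\ge c_0\|z\|_2^2$ (this is exactly the paper's Lemma~\ref{lem:trivial-hessian}), followed by the same algebraic combination. The Hessian and combination steps are fine.

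The gap is exactly where you feared: the gradient concentration. Splitting $r^{*\top}z=r^{*\top}(h-h^*)+(\beta-\beta^*)\,r^{*\top}A\sigma$ only yields a bound of order $\bigl(\|h-h^*\|_2+|\beta-\beta^*|\,\|A\|_F\bigr)u$ (plus a $u^2\|A\|_2$ term), and this can be vastly larger than the required $\|z\|_2\,u$: nothing prevents $h-h^*$ from nearly cancelling $(\beta-\beta^*)A\sigma$, so that $\|z\|_2$ is as small as $Cu$ while $\|h-h^*\|_2\sim\sqrt{n}$, in which case your bound gives $|r^{*\top}z|\lesssim \sqrt{n}\,u$ against a needed $\lesssim u^2$. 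Dyadic peeling over the magnitude of $\|z\|_2$ cannot repair this, because the obstruction is not ignorance of the scale of $\|z\|_2$ but the dependence between $r^*$ and $z$ (both are functions of the same $\sigma$) together with this cancellation; there is no valid per-scale concentration statement to union over, since on the event $\{\|z\|_2\approx 2^k\}$ you still only know the cancellation-blind bound above. The paper's route (Lemma~\ref{lem:from-previous}, built from Lemmas~\ref{lem:derivative-one-concentration} and~\ref{lem:hessian-oneparam} by adapting \cite{dagan2020estimating}) avoids the split entirely: conditioning on $O(\log n)$ index sets $I_j$, it bounds the gradient by $u\bigl(\|(\beta-\beta^*)A\|_F+\max_j\|\E[(\beta-\beta^*)A\sigma+h-h^*\mid\sigma_{-I_j}]\|_2\bigr)+u^2\|(\beta-\beta^*)A\|_2$, i.e.\ in terms of a quantity that respects the cancellation, and then shows that with high probability $\|z\|_2^2$ dominates that same quantity squared (or else a degenerate case occurs, which is exactly what the hypothesis $\|z\|_2\ge Cu\ge Cu|\beta-\beta^*|\,\|A\|_2$ and the $u^2\|A\|_2$ term are there to handle). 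The $\log n$ in the failure probability comes from these $O(\log n)$ conditioning sets, not from peeling over $\|z\|_2$. So your proposal is only complete if you replace the peeling step by (a proof of) a matched gradient/Hessian bound of this type, which is the actual technical content of the lemma.
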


\begin{proof}
First, denote 
$$
 \mathbf{a} = (h - h^*,\beta-\beta^*)
$$
where $\mathbf{a}\in \mathbb{R}^{n+1}$.
Define the function $g:[0,1] \to \R$ as
$$
g(t) = \phi(h^* + t (h-h^*) , \beta^* + t(\beta-\beta^*); \sigma) = \phi((h^*,\beta^*) + t \va; \sigma)
$$
For simplicity, from now on  we suppress the dependence of $\phi$ on the sample $\sigma$ since it is obvious.

Consider the Taylor expansion of $g$ around $0$. Notice also that $g(0) = \phi(h^*,\beta^*)$.
We have
\begin{align*}
    g(t) &= g(0) + t g'(0) + \frac{t^2}{2} g''(t')\\
    &=\phi(h^*,\beta^*) + t \va^\top \nabla\varphi(h^*,\beta^*)
    + \frac{t^2}{2} \va^\top \nabla^2\varphi(h',\beta')\va,
\end{align*}
where $t' \in [0,t]$ and $(h',\beta') = (h^*,\beta^*)+t'\va$.
It becomes clear that in order to control the quantity on the right, we need to understand the relation of the first and the second order terms of the Taylor. To do that, we extend some concentration bounds from \cite{dagan2020estimating} involving the first and second derivatives, as follows (proof in Section~\ref{sec:from-prior}):
\begin{lemma}\label{lem:from-previous}
Let $t \ge 0$, $\beta\in\mathbb{R}$ and $h\in \mathbb{R}^n$. Then, with probability at least $1-\log n\exp(-ct^2)$, if
\[
\|\beta A\sigma+h\|_2 \ge C t|\beta| \|A\|_2
\]
then
\[
\frac{\left|(h,\beta)^\top \nabla\varphi(h^*,\beta^*)\right|}{\min_{\beta' \in [-M,M],h'\in [-M,M]^n}(h,\beta)^\top \nabla^2 \phi(h',\beta')(h,\beta)} \le \frac{c't}{\|\beta A\sigma+h\|_2}
\]
\end{lemma}
Using the result of this lemma, we get that with probability $1 - \log n \me^{-u^2}$, if
\begin{align}\label{eq:22}
    \|(\beta-\beta^*)A\sigma + h-h^* \|_2 \ge Cu\|A\|_2|\beta-\beta^*|,
\end{align}
then
\begin{align}\label{eq:23}
    \frac{|\va^\top \nabla \varphi(J^*,h^*)|}{\va^\top \nabla^2 \varphi(h',\beta')\va} \le \frac{Cu}{\|(\beta-\beta^*)A\sigma + h-h^* \|_2}.
\end{align}
Notice that \eqref{eq:taylor-cond} implies \eqref{eq:22} (if the constant $C>0$ in \eqref{eq:22} is sufficiently large). Then, with probability $1-\log n \me^{-u^2}$, \eqref{eq:taylor-cond} implies \eqref{eq:23}.
Hence, it suffices to prove that \eqref{eq:23} implies \eqref{eq:greater-than}. For the remainder of the proof, assume that \eqref{eq:23} holds.
Substituting back to the Taylor expression, we get that
\begin{align*}
\phi(h, \beta)- \phi(h^*,\beta^*)
= g(1) + g(0)
\ge \frac{1}{2}\va^\top \nabla^2\varphi(h',\beta')\va
- \lp|\va^\top \nabla\varphi(h^*,\beta^*)\rp| \\
\ge \va^\top \nabla^2\varphi(h',\beta')\va \lp(\frac{1}{2} - \frac{C'u}{\|(\beta-\beta^*)  A\sigma + h-h^*\|}\rp)
\ge \frac{1}{4}\va^\top \nabla^2\varphi(h',\beta')\va,
\end{align*}
where the last inequality holds if the constant $C$ in \eqref{eq:taylor-cond} is sufficiently large.
To bound the right hand side, we use the following lemma
that will be proven in Section~\ref{sec:from-prior} using the techniques of \cite{dagan2020estimating}:
\begin{lemma}\label{lem:trivial-hessian}
Let $\beta' \in [-B,B]$, $h'\in [-M,M]^n$, $\beta \in \mathbb{R}$ and $h \in \mathbb{R}^n$.
Then,
\[
(h,\beta)^\top \nabla^2 \varphi(h',\beta') (h,\beta)
\ge c \|\beta A\sigma+h\|_2.
\]
\end{lemma}

As a consequence of Lemma~\ref{lem:trivial-hessian} and \eqref{eq:taylor-cond},
\[
\va^\top \nabla^2\varphi(h',\beta')\va \ge c \|(\beta-\beta^*)A\sigma + h-h^*\|_2^2 \ge cu^2,
\] 
which proves \eqref{eq:greater-than} and concludes the proof.
\end{proof}

After establishing Lemma~\ref{l:taylor}, it is useful to examine what guarantees we get from it. If we take the contrapositive of the statement, it means that if the MPLE algorithm returns the vector $h$ as an estimate, then with high probability Eq.~\eqref{eq:taylor-cond} holds. However, this inequality is a random quantity that depends on the particular instance of $\sigma$ that we sample. Thus, we want to replace it with a nonrandom inequality, which is the content of the next lemma. We use the quantity $\psi(h,\beta;h',\beta')$ defined in \eqref{eq:taylor-cond} and write use the shorthand notation 
$$\psi(h,\beta) := \psi(h,\beta;h^*,\beta^*).
$$

\begin{lemma}\label{l:substitute}
Let $h\in \mathbb{R}^n$ and $\beta \in \mathbb{R}$.
With probability
$1 - C \log n \me^{-c\psi(h,\beta)/(\beta-\beta^*)\|A\|_2^2}$,
$$
\|h^* - h - (\beta-\beta^*) A\sigma \|_2^2 \ge c \psi(h,\beta).
$$
\end{lemma}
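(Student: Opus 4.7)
Abbreviate $\lambda=\beta-\beta^*$, $u=h^*-h$, and introduce the random mean-field vector $m(\sigma) := \vtanh(\beta^*A\sigma+h^*)$ (whose $i$-th coordinate equals $\E[\sigma_i\mid\sigma_{-i}]$), its deterministic surrogate $\tilde m := \vtanh(\beta^*u/\lambda+h^*)$ appearing in $\psi$, and the fluctuation $\eta := \sigma-m(\sigma)$. The goal is $\|u-\lambda A\sigma\|_2^2 \gtrsim \psi(h,\beta) = \lambda^2\|A\|_F^2+\|u-\lambda A\tilde m\|_2^2$ with the stated probability. The plan is to extract the two summands of $\psi$ from the same random quantity by two different arguments, each controlled via Chatterjee's method of exchangeable pairs from \cite{chatterjee2005concentration} in the form developed in \cite{dagan2020estimating}.

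For the $\|u-\lambda A\tilde m\|_2^2$ summand, I would apply the triangle inequality
\[\|u-\lambda A\tilde m\|_2 \le \|u-\lambda A\sigma\|_2 + \|\lambda A\eta\|_2 + \|\lambda A(m(\sigma)-\tilde m)\|_2.\]
The third (Lipschitz) term is $\le |\beta^*|\|A\|_2\|u-\lambda A\sigma\|_2\le B\|u-\lambda A\sigma\|_2$ by $1$-Lipschitzness of $\tanh$ and $\|A\|_2\le\|A\|_\infty=1$, and is absorbed into the first term. For the middle (mean-field) term, the key computation is that $\E[\eta_j\mid\sigma_{-j}]=0$ makes cross-terms vanish under iterated conditioning, giving $\E\|A\eta\|_2^2=\sum_j\|A_{\cdot,j}\|_2^2\,\E[1-m_j(\sigma)^2]\in[c_0\|A\|_F^2,\,\|A\|_F^2]$ with $c_0=\operatorname{sech}^2(M+B)>0$ under Assumption~\ref{assump}. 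Since the single-spin-flip Lipschitz constant of $\|A\eta\|_2$ is $O(\|A\|_2)$ (flipping one $\sigma_i$ changes $\eta$ by at most $2+2|\beta^*|\|A\|_2$ in $\ell_2$), Chatterjee's inequality yields: for any $s>0$, $\Pr[\,|\,\|\lambda A\eta\|_2-|\lambda|\|A\|_F\,|>s\,]\le C\log n\cdot\exp(-cs^2/(\lambda^2\|A\|_2^2))$. Squaring the triangle inequality (via $(a-b-c)_+^2\ge a^2/2-(b+c)^2$) then gives, on this event, $\|u-\lambda A\sigma\|_2^2\gtrsim\|u-\lambda A\tilde m\|_2^2-\lambda^2\|A\|_F^2-s^2$.

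To recover the $\lambda^2\|A\|_F^2$ summand, I would expand $\|u-\lambda A\sigma\|_2^2=\|u-\lambda Am(\sigma)\|_2^2-2\lambda\langle u-\lambda Am(\sigma),A\eta\rangle+\lambda^2\|A\eta\|_2^2$, use the lower-tail version of the Chatterjee bound above to get $\lambda^2\|A\eta\|_2^2\gtrsim c_0\lambda^2\|A\|_F^2-s^2$ with the same failure probability, and handle the cross term by AM--GM together with the Lipschitz bound $\|u-\lambda Am(\sigma)\|_2\le\|u-\lambda A\tilde m\|_2+B\|u-\lambda A\sigma\|_2$. This produces a second lower bound of the form $\|u-\lambda A\sigma\|_2^2\gtrsim\lambda^2\|A\|_F^2-C\|u-\lambda A\tilde m\|_2^2-s^2$. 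Taking a positive linear combination of the two lower bounds (whose coefficients can be chosen so that the $\|u-\lambda A\tilde m\|_2^2$ and $\lambda^2\|A\|_F^2$ terms both appear with positive coefficient) and setting $s^2=\Theta(\psi(h,\beta))$ makes the residual $s^2$ term absorbable, yielding $\|u-\lambda A\sigma\|_2^2\gtrsim\psi(h,\beta)$ on an event of probability $\ge 1-C\log n\cdot\exp(-c\psi(h,\beta)/(\lambda^2\|A\|_2^2))$ by a union bound over the upper- and lower-tail concentration events.

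The main technical obstacle will be the cross term $\langle u-\lambda Am(\sigma),A\eta\rangle$: because $m(\sigma)$ is itself a nontrivial function of $\sigma$, this inner product does \emph{not} have vanishing mean and the ``orthogonality'' one would naively hope for from $\E[\eta_j\mid\sigma_{-j}]=0$ fails. The plan is to bypass this entirely via AM--GM reduction to $\|A\eta\|_2$ and $\|u-\lambda Am(\sigma)\|_2$, for which we already have concentration and Lipschitz control. A second subtle point is that the concentration exponent must scale as $\lambda^2\|A\|_2^2$ (the squared Lipschitz constant under a single spin flip), not the cruder $\lambda^2\|A\|_F^2$ or $\lambda^2\|A\|_\infty^2$; obtaining this exact scale requires the careful bookkeeping of the spin-flip Lipschitz constants of $\|A\eta\|_2$ and of $m(\sigma)$ inside Chatterjee's inequality, for which we import the machinery of \cite{dagan2020estimating}.
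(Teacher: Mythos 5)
Your overall skeleton (triangle inequality through the mean-field vector, Lipschitz absorption of the $\vtanh$-difference term, exchangeable pairs for the fluctuation, plus a separate mechanism for the $\lambda^2\|A\|_F^2$ summand) is in the right spirit, but there is a genuine gap in how you treat the mean-field error. The paper's Lemma~\ref{lem:bnd-by-mean} never touches the vector norm $\|\lambda A\eta\|_2$: it projects everything onto the \emph{fixed deterministic} unit direction of $h^*-h-(\beta-\beta^*)A\tilde m$, so the fluctuation term becomes a scalar linear functional $v^\top\eta$ with deterministic $v=\lambda A^\top\mathbf{u}$, $\|v\|_2\le|\lambda|\|A\|_2$, to which Lemma~\ref{lem:conc-from-cha} applies and gives deviation $t|\lambda|\|A\|_2$ — no $\|A\|_F$ appears, so one gets $\|u-\lambda A\sigma\|\gtrsim\|u-\lambda A\tilde m\|$ outright. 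You instead keep $\|\lambda A\eta\|_2$, whose typical size is $|\lambda|\|A\|_F$, so your bound (I) necessarily loses a $\lambda^2\|A\|_F^2$ term, and you try to recover it via the expansion (II) and a ``positive linear combination.'' That combination does not close once you track constants: in (II) the AM--GM parameter forces the retained coefficient of $\lambda^2\|A\eta\|_2^2$ (of order $\epsilon-1$) and the contamination by $\|u-\lambda A\tilde m\|_2^2$ (also of order $\epsilon-1$) to shrink together, while the lower-bound constant you can hope for on $\E\|A\eta\|_2^2$ is $c_0=\operatorname{sech}^2(M+B)$, exponentially small, against an upper-tail constant for $\|A\eta\|_2^2$ of order $1+B$ in (I). Requiring both the $\|u-\lambda A\tilde m\|_2^2$ and the $\lambda^2\|A\|_F^2$ coefficients to be positive then forces $c_0\gtrsim(1+B)^2 C_u$, which is false; equivalently, the intermediate regime $c_0\lambda^2\|A\|_F^2\lesssim\|u-\lambda A\tilde m\|_2^2\lesssim C_u\lambda^2\|A\|_F^2$ is covered by neither (I) nor (II).

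Two supporting claims are also unjustified. First, the cross terms in $\E\|A\eta\|_2^2$ do \emph{not} vanish: $\eta_k$ depends on $\sigma_j$ through $m_k(\sigma)$, so $\E[\eta_j\eta_k]$ is only $O(|\beta^*||A_{jk}|)$, and the aggregate cross contribution can be of order $B\|A\|_2\|A\|_F^2$, which is not dominated by the diagonal term $c_0\|A\|_F^2$; your claimed identity and the resulting pinning of $\E\|A\eta\|_2^2$ at $\Theta(\|A\|_F^2)$ therefore do not follow from iterated conditioning. Second, two-sided concentration of the \emph{norm} $\|A\eta\|_2$ at scale $s$ with failure probability $\exp(-cs^2/(\lambda^2\|A\|_2^2))$ is not delivered by the exchangeable-pairs lemma (which handles linear functionals with an exact conditional-mean structure), and bounded spin-flip differences alone give no concentration outside high temperature — which the paper explicitly does not assume; making such statements rigorous is essentially re-deriving the partition/conditioning machinery of \cite{dagan2020estimating}. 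The paper sidesteps all of this with a two-case split on which summand of $\psi(h,\beta)$ dominates: when $|\beta-\beta^*|\|A\|_F$ dominates it invokes Lemma~\ref{lem:bnd-by-Forb} (an anti-concentration statement for $\|h^*-h+(\beta^*-\beta)A\sigma\|_2$ imported from \cite{dagan2020estimating}), and otherwise it invokes Lemma~\ref{lem:bnd-by-mean} as described above. If you adopt the deterministic-direction projection and the case split, your remaining ingredients (Lipschitz absorption, choice of $t\asymp\sqrt{\psi}/(|\lambda|\|A\|_2)$) line up with the paper's argument.
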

The proof of 
Lemma~\ref{l:substitute} is divided into two parts. Since $\psi(h,\beta)$ involves two terms, each part consists of showing that the normed quantity is greater than each of the two terms. 
For the first term, we have the following Lemma.
\begin{lemma} \label{lem:bnd-by-Forb}
Let $h\in \mathbb{R}^n$ and $\beta \in \mathbb{R}$.
    Then, with probability at least $1-\log n\exp(-c \|A\|_F^2/\|A\|_2^2)$
    \[
    \|h^* - h + (\beta^*-\beta) A\sigma \|_2 \ge c |\beta-\beta^*|\|A\|_F.
    \]
\end{lemma}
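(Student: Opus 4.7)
The plan is as follows. If $\beta = \beta^*$, the right-hand side vanishes and the statement is trivial, so set $\gamma := \beta^* - \beta \neq 0$ and $w := h^* - h$; the goal is to show $\|w + \gamma A\sigma\|_2 \gtrsim |\gamma|\,\|A\|_F$ with probability at least $1 - \log n \cdot \exp\!\left(-c\|A\|_F^2/\|A\|_2^2\right)$.

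First, I would apply the Ising-model concentration inequality for Lipschitz functions (Chatterjee's exchangeable-pairs method \cite{chatterjee2005concentration}, as strengthened in \cite{dagan2020estimating}) to $f(\sigma) := \|w + \gamma A\sigma\|_2$. Flipping a single coordinate $\sigma_i$ changes $A\sigma$ by $2\sigma_i A_{\cdot,i}$, so by the reverse triangle inequality $|f(\sigma) - f(\sigma')| \leq |\gamma|\,\|A\|_2\,\|\sigma - \sigma'\|_2$, and $f$ is $|\gamma|\|A\|_2$-Lipschitz. The concentration bound therefore reads
\[
\Pr\!\left[\,|f(\sigma) - \E f(\sigma)| \geq t\,\right] \;\leq\; C\exp\!\left(-\frac{c\,t^2}{\gamma^2\|A\|_2^2}\right),
\]
with $C,c$ depending only on $M,B$. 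Choosing $t = \Theta(|\gamma|\|A\|_F)$ produces the claimed sub-Gaussian tail $\exp(-c\|A\|_F^2/\|A\|_2^2)$; the $\log n$ factor enters either as the overhead of passing between the median/mean and the pointwise lower bound, or from a dyadic peeling in the accompanying quadratic-form estimate described below.

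Second, the quantitative heart of the proof is the lower bound $\E f(\sigma)^2 \gtrsim \gamma^2\|A\|_F^2$. Expanding,
\[
\E\|w + \gamma A\sigma\|_2^2 \;=\; \|w + \gamma A\E\sigma\|_2^2 + \gamma^2 \operatorname{tr}\!\left(A^2\operatorname{Cov}(\sigma)\right) \;\geq\; \gamma^2 \operatorname{tr}\!\left(A^2\operatorname{Cov}(\sigma)\right).
\]
Under Assumptions~\ref{assump}, the effective field at every site lies in $[-(M+B), M+B]$, which forces $\Pr[\sigma_i = +1 \mid \sigma_{-i}] \in [p, 1-p]$ for some $p = p(M,B) > 0$, and hence $\Var(\sigma_i \mid \sigma_{-i}) \geq c_0(M,B) > 0$ uniformly. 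Upgrading this pointwise conditional-variance bound to the operator-level inequality $\operatorname{Cov}(\sigma) \succeq c_0 I$ (via a martingale decomposition in a suitable order of coordinates, or by invoking a known Ising-model covariance bound under bounded effective fields) yields $\operatorname{tr}(A^2\operatorname{Cov}(\sigma)) \geq c_0\|A\|_F^2$, so $\E f(\sigma)^2 \geq c_0\gamma^2\|A\|_F^2$.

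Finally, to convert the second-moment lower bound into a pointwise bound with the stated probability, I would combine the Lipschitz concentration of $f$ above with a Hanson--Wright-type concentration for the quadratic form $\sigma^\top A^2\sigma$ under the Ising measure. The Bernstein-type tails for the quadratic form give deviation $O(\|A\|_F^2)$ precisely at the scale $t \asymp \|A\|_F^2/\|A\|_2^2$ (since $\|A^2\|_F \leq \|A\|_2\|A\|_F$ and $\|A^2\|_2 \leq \|A\|_2^2$), which is exactly the rate appearing in the lemma. A Paley--Zygmund step (or directly combining the two concentration statements) then yields $f(\sigma)^2 \geq (c_0/2)\gamma^2\|A\|_F^2$ with probability $\geq 1 - \log n\exp(-c\|A\|_F^2/\|A\|_2^2)$. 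The main obstacle is the operator-level covariance bound $\operatorname{Cov}(\sigma) \succeq c_0 I$: the per-coordinate version is easy, but promoting it to all directions uniformly is nontrivial. A cleaner route that avoids it is to apply the exchangeable-pairs framework of \cite{chatterjee2005concentration, dagan2020estimating} directly to $\|w + \gamma A\sigma\|_2^2$, obtaining both the Hanson--Wright tail and the required lower-tail anti-concentration in a single step.
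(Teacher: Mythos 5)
Your overall shape (a second-moment lower bound of order $\gamma^2\|A\|_F^2$ coming from the non-degenerate conditional randomness of each spin, followed by a concentration step) is in the right spirit, but the specific tools you invoke are not available in the regime this paper works in, and that is precisely the difficulty the lemma is hiding. The paper makes no high-temperature/Dobrushin assumption: $\beta^*\|A\|_\infty$ may be as large as the constant $B$. In that regime there is no generic sub-Gaussian concentration of Lipschitz functions of $\sigma$ at scale given by the Lipschitz constant; e.g.\ in the Curie--Weiss model at low temperature with zero field, $f(\sigma)=n^{-1/2}\sum_i\sigma_i$ is $1$-Lipschitz in $\ell_2$ yet deviates from its mean by $\Theta(\sqrt n)$ with probability one. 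Chatterjee's exchangeable-pairs theorem (Theorem~\ref{thm:exchangeable-pairs}) does not give your first display for $f(\sigma)=\|w+\gamma A\sigma\|_2$: it only applies when the quantity $v(\sigma)$ can be bounded, which is why the paper's Lemma~\ref{lem:conc-from-cha} concentrates the conditionally centered statistic $\sum_i v_i(\sigma_i-\tanh(\beta^*(A\sigma)_i+h^*_i))$ rather than a raw linear image of $\sigma$. The same objection applies to your final step: the Hanson--Wright/variance bounds for Ising quadratic forms that you appeal to (Theorem~\ref{thm:var-bound}, from Adamczak et al.) require $\|\beta A\|_\infty\le 1-\alpha$, and the paper only uses them in the lower-bound section where high temperature is explicitly assumed, never in the upper-bound analysis to which this lemma belongs. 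Finally, the step you yourself flag as the main obstacle, $\mathrm{Cov}(\sigma)\succeq c_0 I$, is not established by your sketch: the law of total variance (or a one-coordinate martingale step) only yields $\Var(v^\top\sigma)\ge c_0\|v\|_\infty^2$, i.e.\ $\mathrm{Cov}(\sigma)\succeq (c_0/n)I$, which is far too weak, and promoting the per-coordinate conditional-variance bound to a uniform operator bound at arbitrary temperature is a substantial result, not a routine martingale decomposition.

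The paper's proof avoids all of this: after the same scale reduction you make (replace $A$ by $(\beta^*-\beta)A$ and $h$ by $h^*-h$; note $\|A\|_F^2/\|A\|_2^2$ is scale invariant), it invokes the anticoncentration statement of Lemma~\ref{lem:bnd-by-Forb-general}, $\Pr[\|A\sigma+h\|_2\ge c\|A\|_F]\ge 1-\exp(-c\|A\|_F^2/\|A\|_2^2)$, proved as in the second part of Lemma~4 of \cite{dagan2020estimating}. That argument conditions on the blocks $I_1,\dots,I_\ell$ with $\ell=O(\log n)$ constructed there: the blocks are chosen so that, conditionally on $\sigma_{-I_j}$, the intra-block interactions are weak enough that $\E[\|A\sigma+h\|_2^2\mid\sigma_{-I_j}]\ge c\|A_{\cdot I_j}\|_F^2+\|\E[A\sigma+h\mid\sigma_{-I_j}]\|_2^2$ (the analogue of \eqref{eq:modified-lem-7}), and a conditional lower-tail bound (the analogue of \eqref{eq:modified-lem8}) keeps $\|A\sigma+h\|_2^2$ near this conditional mean; a union bound over the $O(\log n)$ blocks is exactly where the $\log n$ prefactor in the failure probability comes from, not from a median-versus-mean adjustment or dyadic peeling. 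If you want to salvage your route, you would have to either restrict to high temperature (contrary to the paper's setting) or reproduce this block-conditioning machinery, at which point you are following the paper's proof.
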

This is proved in Section~\ref{sec:from-prior}, and follows from arguments from \cite{dagan2020estimating}.
The idea is to use concentration bounds about second degree polynomials of an Ising model. These immediately yield the required dependence on $\|A\|_F$.

For the second part of the proof of Lemma~\ref{l:substitute}, we need the following Lemma. 
\begin{lemma} \label{lem:bnd-by-mean}
Let $h\in \mathbb{R}^n$ and $\beta \in \mathbb{R}$.
    Then, with probability at least
    $$
    1 - \exp\lp(- c \frac{\lp\|h^* - h + (\beta^*-\beta) A \tanh\lp(\frac{\beta^*}{\beta-\beta^*}(h^* - h) + h^*\rp) \rp\|_2^2}{(\beta-\beta^*)^2\|A\|_2^2} \rp)
    $$ 
    we have that
    \[
    \|h^* - h + (\beta^*-\beta) A\sigma \|_2 
    \ge c \lp\|h^* - h + (\beta^*-\beta) A \tanh\lp(\frac{\beta^*}{\beta-\beta^*}(h^* - h) + h^*\rp) \rp\|_2.
    \]
\end{lemma}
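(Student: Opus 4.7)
Write $W := h^* - h + (\beta^*-\beta) A\sigma$ and let $v := \vtanh\!\bigl(\tfrac{\beta^*}{\beta-\beta^*}(h^*-h)+h^*\bigr)$, so that the deterministic vector the lemma compares $W$ against is $M_0 := h^*-h+(\beta^*-\beta)Av$. Since $W-M_0 = (\beta^*-\beta)A(\sigma-v)$, the triangle inequality gives
\[
\|W\|_2 \ \ge\ \|M_0\|_2 \ -\ |\beta^*-\beta|\,\|A(\sigma-v)\|_2,
\]
so it suffices to prove that $|\beta^*-\beta|\|A(\sigma-v)\|_2$ is smaller than a constant times $\|M_0\|_2$ on the high-probability event in the statement. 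I would introduce the mean-field vector $m := \vtanh(\beta^* A\sigma + h^*)$ and decompose $\sigma - v = (\sigma - m) + (m - v)$, treating the two terms by very different mechanisms.

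The term $m-v$ is handled purely algebraically. By the definition of $W$ we have the identity $(\beta-\beta^*)A\sigma - (h^*-h) = -W$, whence the arguments of the two $\tanh$'s defining $m$ and $v$ differ by exactly $-\tfrac{\beta^*}{\beta-\beta^*}W$. Since $\tanh$ is $1$-Lipschitz coordinatewise, $\|m-v\|_2 \le \tfrac{|\beta^*|}{|\beta-\beta^*|}\|W\|_2$, and therefore $\|A(m-v)\|_2 \le \|A\|_2\,\tfrac{|\beta^*|}{|\beta-\beta^*|}\|W\|_2$. Plugging this in and using $\|A\|_2\le \|A\|_\infty =1$ together with $|\beta^*|\le B$, the triangle inequality above rearranges to
\[
(1+B)\,\|W\|_2 \ \ge\ \|M_0\|_2 \ -\ |\beta^*-\beta|\,\|A(\sigma-m)\|_2.
\]

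What remains is the truly probabilistic step: bounding $\|A(\sigma-m)\|_2$. This is precisely the mean-field approximation error for the Ising measure $\Pr_{h^*,\beta^*}$, and it is controlled by the exchangeable-pairs technology of \cite{chatterjee2005concentration} and its refinements in \cite{dagan2020estimating}. The key input is that conditional on $\sigma_{-i}$, the coordinate $\sigma_i$ has mean exactly $m_i$, so $\sigma - m$ is a martingale-difference-like object with respect to a Glauber update; this yields sub-Gaussian concentration for any linear statistic $\langle w,\sigma-m\rangle$ with scale $O(\|w\|_2\|A\|_2)$, and lifting to the vector-valued norm $\|A(\sigma-m)\|_2$ (either by a $1$-net on the unit sphere, or by applying the quadratic-form concentration for Ising models from \cite{dagan2020estimating} to $\|A(\sigma-m)\|_2^2$) produces a tail of the form $\Pr[\|A(\sigma-m)\|_2 \ge C\|A\|_2 + t]\le \exp(-c t^2/\|A\|_2^2)$. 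Choosing $t \asymp \|M_0\|_2/|\beta^*-\beta|$, absorbing the $C\|A\|_2$ term (which is subsumed whenever $\|M_0\|_2 \gtrsim |\beta-\beta^*|\|A\|_2$, exactly the regime in which the stated probability bound is nontrivial), and substituting back into the rearranged inequality yields $\|W\|_2 \ge c\,\|M_0\|_2$ with failure probability $\exp\!\bigl(-c\|M_0\|_2^2/((\beta-\beta^*)^2\|A\|_2^2)\bigr)$, as required.

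\textbf{Main obstacle.} The algebraic Lipschitz step and the triangle-inequality bookkeeping are straightforward; the delicate point is making the final concentration inequality quantitative in $\ell_2$ with the correct dependence on $\|A\|_2$ (rather than $\|A\|_F$ or $\|A\|_\infty$). Chatterjee's method natively produces scalar concentration, so one either passes to $\|A(\sigma-m)\|_2$ via a direction-wise argument (taking the worst-case unit vector) or directly invokes the quadratic-form version available from the cited Ising concentration literature; either route is available but requires care to get the scale correct, because a loose bound here would degrade the rate from $\|A\|_2$ to $\|A\|_F$ in the exponent and weaken the theorem.
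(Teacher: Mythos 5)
Your setup — writing $W = h^*-h+(\beta^*-\beta)A\sigma$, $M_0 = h^*-h+(\beta^*-\beta)Av$, splitting $\sigma-v=(\sigma-m)+(m-v)$ with the mean-field vector $m=\vtanh(\beta^*A\sigma+h^*)$, and absorbing the $m-v$ term by the Lipschitz identity for the $\tanh$ arguments — is exactly the paper's decomposition. The gap is in the final probabilistic step. The tail bound you plan to invoke, $\Pr[\|A(\sigma-m)\|_2\ge C\|A\|_2+t]\le\exp(-ct^2/\|A\|_2^2)$, is false in general: $\E\|A(\sigma-m)\|_2^2\approx\sum_i\|A_i\|_2^2=\|A\|_F^2$, so the typical size of $\|A(\sigma-m)\|_2$ is $\|A\|_F$, which can be as large as $\sqrt{n}\,\|A\|_2$ (e.g.\ $\beta^*=0$, $h^*=0$, $A$ a perfect-matching matrix gives $\|A\sigma\|_2=\sqrt{n}$ deterministically while $\|A\|_2=1$). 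A $1$-net over the sphere cannot rescue this, since the supremum over directions \emph{is} $\|A(\sigma-m)\|_2$; the union bound over $e^{cn}$ directions only recovers a centering of order $\sqrt{n}\,\|A\|_2\ge\|A\|_F$. So the route of bounding the full vector norm $\|A(\sigma-m)\|_2$ cannot produce the $\|A\|_2$ scale in the exponent — which, as you note yourself, is the whole point of the lemma.

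The paper sidesteps this by never taking a worst-case direction: since $\|W\|_2\ge u^\top W$ for the \emph{single deterministic} unit vector $u=M_0/\|M_0\|_2$ (note $M_0$ does not involve $\sigma$), it suffices to control the scalar $v^\top(\sigma-m)$ for the fixed vector $v=(\beta-\beta^*)A^\top u$, whose norm is at most $|\beta-\beta^*|\,\|A\|_2$. Lemma~\ref{lem:conc-from-cha} (exchangeable pairs) gives sub-Gaussian concentration of this scalar at scale $|\beta-\beta^*|\,\|A\|_2$, and choosing $t\asymp\|M_0\|_2/(|\beta-\beta^*|\,\|A\|_2)$ yields exactly the stated probability. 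One could imagine salvaging your route with a weaker statement of the form $\Pr[\|A(\sigma-m)\|_2\ge C\|A\|_F+t]\le\exp(-ct^2/\|A\|_2^2)$, accepting an extra additive error $C|\beta-\beta^*|\,\|A\|_F$ and re-tuning the case split in Lemma~\ref{l:substitute}, but you have not justified such a norm-level concentration either, and it is not a consequence of the scalar Chatterjee bound plus a net. As written, the proposal does not prove the lemma; the missing idea is precisely the projection onto the one deterministic direction $M_0/\|M_0\|_2$.
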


Before proving Lemma~\ref{lem:bnd-by-mean}, we need some auxiliary lemma about the concentration of a function of the Ising model. This lemma is proved in Section~\ref{sec:chatterjee} using the technique of exchangeable pairs \cite{chatterjee2005concentration}.  
\begin{lemma}\label{lem:conc-from-cha}
Let $\sigma \sim \Pr_{h^*,\beta^*}$, let $v \in \mathbb{R}^n$ and let
\[
f(\sigma)
= \sum_{i=1}^n v_i(\sigma_i - \tanh(\beta^* (A\sigma)_i + h^*_i).
\]
Then, for all $t\ge 0$,
\[
\Pr[|f(\sigma)|>t]
\le 2 \exp\lp(-\frac{t^2}{8 \|v\|_2^2 \lp(1 + |\beta^*| \|A\|_\infty \rp)}\rp).
\]
\end{lemma}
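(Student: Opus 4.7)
The plan is to apply Chatterjee's exchangeable pairs method to $f(\sigma)$. I would construct the pair $(\sigma,\sigma')$ via one step of heat-bath Glauber dynamics: pick a coordinate $I$ uniformly from $[n]$ and resample $\sigma'_I$ from its conditional law under $\Pr_{h^*,\beta^*}[\cdot\mid\sigma_{-I}]$, setting $\sigma'_j=\sigma_j$ for $j\ne I$. Since each Glauber update preserves the stationary distribution, $(\sigma,\sigma')$ is exchangeable. Then I would define the antisymmetric statistic $F(\sigma,\sigma')=n v_I(\sigma_I-\sigma'_I)$ and check the key identity $\Exp[F(\sigma,\sigma')\mid\sigma]=f(\sigma)$: conditioning on $I$ yields $v_I\bigl(\sigma_I-\Exp[\sigma'_I\mid\sigma_{-I}]\bigr)=v_I\bigl(\sigma_I-\tanh(\beta^*(A\sigma)_I+h^*_I)\bigr)$, and averaging over $I$ produces exactly $f(\sigma)/n\cdot n$. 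One also has $\Exp f(\sigma)=0$ since $\tanh(\beta^*(A\sigma)_i+h^*_i)=\Exp[\sigma_i\mid\sigma_{-i}]$.

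The core computation is to bound Chatterjee's quantity $\Delta(\sigma):=\tfrac12\Exp\bigl[|F(\sigma,\sigma')|\cdot|f(\sigma)-f(\sigma')|\mid\sigma\bigr]$. Since only coordinate $I$ changes, I would split the difference as
\[
|f(\sigma)-f(\sigma')|\le|v_I||\sigma_I-\sigma'_I|+|\beta^*|\sum_j|v_j||A_{jI}||\sigma_I-\sigma'_I|,
\]
using the $1$-Lipschitz property of $\tanh$ and $(A\sigma)_j-(A\sigma')_j=A_{jI}(\sigma_I-\sigma'_I)$. Combined with $|F|\le 2n|v_I|$ and $(\sigma_I-\sigma'_I)^2\le 4$, taking expectation over $I$ (which contributes a $1/n$) gives
\[
\Delta(\sigma)\;\le\;2\|v\|_2^2+2|\beta^*|\sum_{I,j}|v_Iv_j||A_{jI}|.
\]
Applying AM-GM $|v_Iv_j|\le(v_I^2+v_j^2)/2$ and using $\|A\|_\infty=\max_i\sum_j|A_{ij}|$ together with the symmetry of $A$, the double sum is bounded by $\|v\|_2^2\|A\|_\infty$, yielding $\Delta(\sigma)\le 2\|v\|_2^2(1+|\beta^*|\|A\|_\infty)$ uniformly in $\sigma$.

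Finally, I would invoke Chatterjee's subgaussian concentration theorem for exchangeable pairs, which from $\Delta\le c$ a.s. delivers $\Pr[|f(\sigma)|\ge t]\le 2\exp\bigl(-t^2/(2c)\bigr)$. Substituting $c=2\|v\|_2^2(1+|\beta^*|\|A\|_\infty)$ gives a tail bound stronger than the claimed one with constant $8$ in the denominator, so the lemma follows. The main technical nuisance I expect is the bookkeeping in bounding $\sum_{I,j}|v_Iv_j A_{jI}|$ by $\|v\|_2^2\|A\|_\infty$: one has to exploit symmetry of $A$ to control both the row-sum and column-sum simultaneously, which is exactly where the $\|A\|_\infty$ factor (rather than a worse $\|A\|_F$ or $\|A\|_2$ factor) enters, and this is crucial for downstream applications where $\|A\|_\infty=1$ by normalization.
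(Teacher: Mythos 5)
Your proposal is correct and follows essentially the same route as the paper: the same Glauber-dynamics exchangeable pair, the same antisymmetric statistic $F(\sigma,\sigma')=n v_I(\sigma_I-\sigma'_I)$ with $\E[F\mid\sigma]=f(\sigma)$, the same Lipschitz-of-$\tanh$ decomposition of $|f(\sigma)-f(\sigma')|$, and Chatterjee's concentration theorem at the end. The only cosmetic difference is that you bound $\sum_{I,j}|v_Iv_j||A_{jI}|$ directly by AM--GM and symmetry of $A$, whereas the paper passes through $\tilde v^\top\tilde A\tilde v\le\|\tilde v\|_2^2\|\tilde A\|_2\le\|\tilde v\|_2^2\|\tilde A\|_\infty$; both give the stated (in your case slightly sharper) constant.
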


\begin{proof}[Proof of Lemma~\ref{lem:bnd-by-mean}]
Denote $\tilde{\beta}=\beta-\beta^*$.
First of all, notice that we can rewrite the quantity of interest as 
$$
\| h^* - h - \tilde{\beta} A \sigma\|
$$
The strategy for bounding this quantity will be to use the mean field approximation for the Ising model. Of course, we have to show that this is a good enough approximation in our case. 
Let $\mathbf{u} \in \R^n$ be an arbitrary vector with $\|\mathbf{u}\| = 1$.
We start by using the triangle inequality to write
\begin{align*}\label{eq:meanfield}
\lp|\mathbf{u}^\top \lp(h^* - h - \tilde{\beta} A \tanh\lp(\frac{\beta^*}{\tilde{\beta}}(h^* - h) + h^*\rp)\rp) \rp| &\leq |\mathbf{u}^\top (h^* - h - \tilde{\beta} A \sigma)| + |\mathbf{u}^\top\lp(\tilde{\beta} A \sigma - \tilde{\beta} A \tanh\lp(\beta^* A \sigma + h^*\rp)\rp)| +\\
&\quad\quad\quad + \lp|\mathbf{u}^\top\lp(\tilde{\beta} A \tanh\lp(\beta^* A \sigma + h^*\rp)- \tilde{\beta} A \tanh\lp(\frac{\beta^*}{\tilde{\beta}}(h^* - h) + h^*\rp)\rp)\rp|
\end{align*}
We will bound each of these three terms separately. 
For the first term, using the Cauchy-Schwarz inequality, we obtain
\begin{equation}
|\mathbf{u}^\top (h^* - h - \tilde{\beta} A \sigma)| \leq \|\mathbf{u}\| \|h^* - h - \tilde{\beta} A \sigma\| = 
\|h^* - h - \tilde{\beta} A \sigma\|
\end{equation}

We proceed with bounding the second term. Define $\mathbf{v} = \tilde{\beta} A^\top \mathbf{u}$. Then this term can be written as
$$
|\mathbf{v}^\top (\sigma - \tanh(\beta^* A\sigma +h^*))|
= \lp|\sum_{i=1}^n v_i (\sigma_i - \tanh(\beta^* (A\sigma)_i +h^*_i ))\rp|.
$$
We use Lemma~\ref{lem:conc-from-cha} to derive that for any $t\ge 0$, this term can be bounded by $Ct\|v\|_2\sqrt{1+|\beta^*|\|A\|_\infty}$ with probability $e^{-ct^2}$. Using the facts that $|\beta^*|,\|A\|_\infty \le O(1)$ and \[
\|v\|_2 = \|\tilde{\beta} u^\top A\|_2 \le |\tilde{\beta}|\|u\|_2\|A\|_2 = |\tilde{\beta}|\|A\|_2
\]
to derive that w.p. $1-e^{-t^2}$,
\[
|\mathbf{v}^\top (\sigma - \tanh(\beta^* A\sigma +h^*))|
\le Ct |\tilde{\beta}|\|A\|_2.
\]

For the third term, we use Cauchy Schwarz inequality and
the lipschitzness of $\tanh$ to conclude that
\begin{align*}
\lp|\mathbf{u}^\top\lp(\tilde{\beta} A \tanh\lp(\beta^* A \sigma + h^*\rp)- \tilde{\beta} A \tanh\lp(\frac{\beta^*}{\tilde{\beta}}(h^* - h) + h^*\rp)\rp)\rp| 
&\leq \beta^* \|A\|_\infty \|\mathbf{u}\|\|h^* - h - \tilde{\beta} A\sigma\|\\
&\le C \|h^* - h - \tilde{\beta} A\sigma\|,
\end{align*}
using the fact that $\|u\|_2=1$ and that $|\beta^*|,\|A\|_\infty \le 1$.
Putting everything together, we conclude that
for every $\mathbf{u} \in \S^{n-1}$, with probability at least $1 - \me^{-t^2}$ 
$$
\lp|\mathbf{u}^\top \lp(h^* - h - \tilde{\beta} A \tanh\lp(\frac{\beta^*}{\tilde{\beta}}(h^* - h) + h^*\rp)\rp) \rp| 
\le C't|\tilde{\beta}|\|A\|_2 + C'\|h^* - h - \tilde{\beta} A\sigma\|
$$
Setting
$$
\mathbf{u} = \frac{h^* - h - \tilde{\beta} A \tanh\lp(\frac{\beta^*}{\tilde{\beta}}(h^* - h) + h^*\rp)}{\lp\|h^* - h - \tilde{\beta} A \tanh\lp(\frac{\beta^*}{\tilde{\beta}}(h^* - h) + h^*\rp) \rp\| },
$$
which is a deterministic vector,
we get that with probability $1 - \me^{-t^2}$
\begin{align*}
&\lp\|h^* - h - \tilde{\beta} A \tanh\lp(\frac{\beta^*}{\tilde{\beta}}(h^* - h) + h^*\rp) \rp\| 
\le C\|h^* - h - \tilde{\beta} A\sigma\| + Ct|\tilde{\beta}|\|A\|_2\implies\\
&\|h^* - h - \tilde{\beta} A\sigma\|
 \ge c_1 \lp\|h^* - h - \tilde{\beta} A \tanh\lp(\frac{\beta^*}{\tilde{\beta}}(h^* - h) + h^*\rp) \rp\| - C_2 t|\tilde{\beta}|\|A\|_2 
\end{align*}
where $c_1,C_2$ are constants. By substituting $t = c \lp\|h^* - h - \tilde{\beta} A \tanh\lp(\frac{\beta^*}{\tilde{\beta}}(h^* - h) + h^*\rp) \rp\| / |\tilde{\beta}|\|A\|_2$ for a sufficiently small constant $c>0$, the proof follows.
\end{proof}

Given Lemmas~\ref{lem:bnd-by-Forb} and ~\ref{lem:bnd-by-mean}, we can finally prove Lemma~\ref{l:substitute}.
\begin{proof}[Proof of Lemma~\ref{l:substitute}]
We divide into cases: if \[|\beta-\beta^*|\|A\|_F \ge \lp\|h^* - h - (\beta-\beta^*) A \tanh\lp(\frac{\beta^*}{\beta-\beta^*}(h^* - h) + h^*\rp) \rp\|_2\]
then the proof follows directly from Lemma~\ref{lem:bnd-by-Forb}, otherwise it follows directly from Lemma~\ref{lem:bnd-by-mean}.
\end{proof}

Using Lemma~\ref{l:substitute} we can show that if $\psi(h,\beta)$ is large, then the pseudo-likelihood value for $(h,\beta)$ will be far from optimal. This is the first indication that maximizing the likelihood might give us meaningful information about the parameters.
\begin{lemma}\label{lem:one-direction-final}
Let $h \in [-M,M]^n$ and $\beta \in [-B,B]$.
Then, with probability $1-\log n\, \me^{-c\psi(h,\beta)}$,
\[
\varphi(h,\beta) \ge \varphi(h^*,\beta^*)+c \psi(h,\beta).
\]
It follows that for any $u>0$ with $u^2 \leq \psi(h,\beta)$ we have with probability at least $1 - \log n\, \me^{-cu^2}$
\[
\varphi(h,\beta) \ge \varphi(h^*,\beta^*)+c u^2.
\]
\end{lemma}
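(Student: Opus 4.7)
The plan is to combine Lemma~\ref{l:substitute} with Lemma~\ref{l:taylor} via a union bound. Lemma~\ref{l:substitute} gives us a deterministic lower bound (in $h,\beta,h^*,\beta^*$) for the random quantity $\|(\beta-\beta^*)A\sigma + h - h^*\|_2^2$, and this random quantity is exactly the precondition needed to invoke Lemma~\ref{l:taylor}. So the proof is essentially a plug-in.

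First, I would apply Lemma~\ref{l:substitute} to conclude that, with probability at least $1 - C\log n\,\me^{-c\psi(h,\beta)/((\beta-\beta^*)^2\|A\|_2^2)}$, we have
\[
\|h^* - h - (\beta-\beta^*) A\sigma\|_2^2 \ge c\,\psi(h,\beta).
\]
Using the standing assumptions that $|\beta-\beta^*|\le 2B$ and $\|A\|_2 \le \|A\|_\infty = 1$, the denominator $(\beta-\beta^*)^2\|A\|_2^2$ is bounded by $4B^2$, a constant. Hence the exponent is at least $c'\psi(h,\beta)$, so this event holds with probability at least $1 - C\log n\,\me^{-c'\psi(h,\beta)}$, where $c'$ absorbs the dependence on $B$ (consistent with the convention that constants may depend singly exponentially on $M$ and $B$).

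Next, on this event, the norm $\|(\beta-\beta^*)A\sigma + h - h^*\|_2$ is at least $\sqrt{c\,\psi(h,\beta)}$. Setting $u = \sqrt{c\,\psi(h,\beta)}/C_0$, where $C_0$ is the constant from Lemma~\ref{l:taylor}, the hypothesis \eqref{eq:taylor-cond} of that lemma is satisfied. Therefore Lemma~\ref{l:taylor} yields, with probability at least $1 - \log n\,\me^{-u^2} = 1 - \log n\,\me^{-c''\psi(h,\beta)}$,
\[
\varphi(h,\beta;\sigma) \ge \varphi(h^*,\beta^*;\sigma) + c\,u^2 \;\gtrsim\; \varphi(h^*,\beta^*;\sigma) + c'''\,\psi(h,\beta).
\]
A union bound over the two events, combined with absorbing multiplicative constants in front of $\log n$ into the exponent (since $\log n \cdot \me^{-c\psi(h,\beta)}$ is dominated by $\log n \cdot \me^{-(c/2)\psi(h,\beta)}$ whenever $\psi(h,\beta)$ exceeds a constant, and the statement is vacuous otherwise), gives the first claim.

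For the second statement, suppose $u^2 \le \psi(h,\beta)$. On the same high-probability event established above, we have $\varphi(h,\beta;\sigma) \ge \varphi(h^*,\beta^*;\sigma) + c\,\psi(h,\beta) \ge \varphi(h^*,\beta^*;\sigma) + c\,u^2$, and the failure probability $\log n\,\me^{-c\psi(h,\beta)} \le \log n\,\me^{-cu^2}$ by monotonicity. The main (minor) obstacle is just bookkeeping of constants through the two concentration lemmas and making sure that the $B$-dependent factor $(\beta-\beta^*)^2\|A\|_2^2$ is harmlessly absorbed into the universal constant convention of the paper; there is no new probabilistic argument needed here.
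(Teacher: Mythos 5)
Your proposal is correct and matches the paper's own proof: it combines Lemma~\ref{l:substitute} with Lemma~\ref{l:taylor}, absorbs the $(\beta-\beta^*)^2\|A\|_2^2\le 4B^2$ factor into the $M,B$-dependent constants, takes a union bound, and derives the second claim by monotonicity in exactly the same way. No gaps.
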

\begin{proof}
First of all, we have from Lemma~\ref{l:substitute} that with probability at least
\[
1 - C \log n\, \me^{-c\psi(h,\beta)/(\beta-\beta^*)^2\|A\|_2^2} 
\ge 1 - C \log n\, \me^{-c'\psi(h,\beta)},
\]
it holds that
\begin{equation}\label{eq:42}
\|h^* - h - (\beta-\beta^*) A\sigma \|_2 \ge c \psi(h,\beta).
\end{equation}
Further, from Lemma~\ref{l:taylor}, with probability $1-\log n\, \exp(-c''\psi(h,\beta))$, if \eqref{eq:42} holds then $\varphi(h,\beta) \ge \varphi(h^*,\beta^*) + c'' \psi(h,\beta)$. This concludes the proof of the first claim. For the second claim, since $u^2 \leq \psi(h,\beta)$, we have $1 - \log n\, \exp(-c\psi(h,\beta)) \geq 1 - \log n\, \exp(-cu^2)$, while at the same time 
$$
\varphi(h,\beta) \ge \varphi(h^*,\beta^*)+c \psi(h,\beta) \implies  
\varphi(h,\beta) \ge \varphi(h^*,\beta^*)+c u^2.
$$
\end{proof}

So far, we have focused on a specific $(h,\beta)$ in the space of possible solutions to the MPLE problem. Next, we would like to show that the MPLE will satisfy this inequality. In order to do so, we would like to prove a high probability statement for all such pairs from $\mathcal{H}\times [-B,B]$. To prove such a statement, we need to make use of the properties of the metric space induced by $\mathcal{H}$ and $\|\cdot\|$, as stated in the following lemma:
\begin{lemma} \label{lem:upperbd-kappa}
With probability $1-\delta$,
\begin{equation}\label{eq:940}
\psi(\hat{h},\hat{\beta}) \le C \inf_{\epsilon \ge 0}\lp(\epsilon n + \log \frac{n}{\delta}  + \log \mathcal{N}\lp(\epsilon, \mathcal{H}\rp)\rp).
\end{equation}
\end{lemma}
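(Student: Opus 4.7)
\textbf{Proof plan for Lemma~\ref{lem:upperbd-kappa}.} The strategy is the natural covering/union-bound argument outlined in Section~\ref{sec:proof-general}, Step~2. Fix $\epsilon \geq 0$ and let $\mathcal{H}_\epsilon \subseteq \mathcal{H}$ be an $\epsilon$-net for $\mathcal{H}$ of size $\mathcal{N}(\epsilon,\mathcal{H})$ with respect to the metric $d(h,h')=\|h-h'\|_2/\sqrt{n}$, and let $B_\epsilon \subseteq [-B,B]$ be an $\epsilon$-net of size $O(B/\epsilon)$. Define $T := C'\bigl(\epsilon n + \log\frac{n}{\delta} + \log \mathcal{N}(\epsilon,\mathcal{H})\bigr)$ for a sufficiently large constant $C'=C'(M,B)$. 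The goal is to show that with probability $\geq 1-\delta$, every $(h,\beta)$ with $\psi(h,\beta)>T$ fails the MPLE optimality condition $\varphi(h,\beta)\le \varphi(h^*,\beta^*)$; since $\varphi(\hat h,\hat\beta)\le \varphi(h^*,\beta^*)$ by definition, this will force $\psi(\hat h,\hat\beta)\le T$, proving the lemma after taking the infimum over $\epsilon$.

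The first step is to discretize. For each $(h',\beta')\in \mathcal{H}_\epsilon \times B_\epsilon$ with $\psi(h',\beta')\geq T/c_0$ (for a small constant $c_0$ chosen in the next step), apply Lemma~\ref{lem:one-direction-final} with $u^2 = T/c_0$. The conclusion is that, except on an event of probability at most $\log n \cdot e^{-c T/c_0}$, we have $\varphi(h',\beta')\geq \varphi(h^*,\beta^*)+c T/c_0$. Taking a union bound over the $|\mathcal{H}_\epsilon|\cdot |B_\epsilon|= \mathcal{N}(\epsilon,\mathcal{H})\cdot O(1/\epsilon)$ pairs, the total failure probability is at most $\log n \cdot \mathcal{N}(\epsilon,\mathcal{H})\cdot O(1/\epsilon)\cdot e^{-cT/c_0}$, which is $\leq \delta$ once $C'$ is chosen large enough so that $cT/c_0$ dominates $\log(\mathcal{N}(\epsilon,\mathcal{H})\log n/(\epsilon\delta))$.

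The second step is to transfer the conclusion from the net $\mathcal{H}_\epsilon \times B_\epsilon$ to all of $\mathcal{H}\times[-B,B]$ using Lipschitz continuity of $\varphi$ and $\psi$. Given any $(h,\beta)$ with $\psi(h,\beta)>T$, choose the closest $(h',\beta')$ in the net, so that $\|h-h'\|_2 \leq \sqrt{n}\,\epsilon$ and $|\beta-\beta'|\leq \epsilon$. A direct inspection of the expression for $\varphi$ (each summand is a $1$-Lipschitz function of $h_i + \beta(A\sigma)_i$ modulo bounded factors, and $\|A\|_\infty = 1$) gives $|\varphi(h,\beta)-\varphi(h',\beta')|\leq C_1 \sqrt{n}\,\|h-h'\|_2 + C_1 n \epsilon \leq C_2 n\epsilon$, which is $\leq (c/2)T/c_0$ by our choice of $T$. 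Similarly, $\psi$ is Lipschitz in $(h,\beta)$ on the bounded domain, so one can pick $c_0$ small enough that $\psi(h',\beta')\geq T/c_0$ whenever $\psi(h,\beta)>T$. Chaining the two estimates yields $\varphi(h,\beta)\geq \varphi(h',\beta') - (c/2)T/c_0 \geq \varphi(h^*,\beta^*)+(c/2)T/c_0 > \varphi(h^*,\beta^*)$, contradicting MPLE optimality. Hence $\psi(\hat h,\hat\beta)\leq T$ on the good event, as required.

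The main obstacle is the Lipschitz step: one must verify that the gradient of $\psi$ in $(h,\beta)$ on the relevant bounded domain is controlled so that the discretization error $\epsilon n$ in the bound is genuinely sufficient (this is what links the ``$\epsilon n$'' term in the statement to the covering scale $\epsilon$). Because $\psi$ involves $A\vtanh(\cdot)$ with argument depending on $\beta/(\beta-\beta^*)$, one has to be slightly careful when $|\beta-\beta^*|$ is small; however, in that regime the $(\beta-\beta^*)^2 \|A\|_F^2$ term of $\psi$ is tiny and the perturbation of the second term of $\psi$ is easily bounded, so the Lipschitz comparison remains routine. Everything else (the union bound bookkeeping and the final infimum over $\epsilon$) is straightforward.
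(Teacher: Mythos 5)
Your proposal is correct and follows the same skeleton as the paper's proof: Lemma~\ref{lem:one-direction-final} at each net point, a union bound, the $O(n)$-Lipschitzness of the pseudo-likelihood $\varphi$ in the metric $d(h,h')+|\beta-\beta'|$ to pass from the net to the whole space, and MPLE optimality $\varphi(\hat h,\hat\beta)\le\varphi(h^*,\beta^*)$ to force $\psi(\hat h,\hat\beta)\le T$, followed by the infimum over $\epsilon$ (including the same bookkeeping that absorbs the $\log(1/\epsilon)$ factor from the $\beta$-net by restricting to $\epsilon\gtrsim 1/n$). The one place you diverge is the choice of what to cover: the paper takes an $\epsilon$-net of the set $\mathcal{H}_u=\{(h,\beta):\psi(h,\beta)\ge u^2\}$ itself (its covering number is controlled by that of $\mathcal{H}$ at scale $\epsilon/2$), so every net point automatically has large $\psi$ and no regularity of $\psi$ is ever needed; you instead cover all of $\mathcal{H}\times[-B,B]$ and therefore incur the extra obligation of showing that the nearest net point to a bad pair still satisfies $\psi\gtrsim T$. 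You correctly flag this as the delicate step, and it does go through: coordinatewise the map $t\mapsto t\tanh(c/t+d)$ has derivative bounded by $C(M,B)$ uniformly, including across $t=\beta-\beta^*=0$, so $\psi$ changes by at most $C(M,B)\,n\epsilon$ between net-neighbors, which is absorbed by taking the constant in $T\ge C'\epsilon n$ large. So your route is valid; the paper's choice of covering $\mathcal{H}_u$ simply buys a shorter proof by making the $\psi$-Lipschitz verification unnecessary, at the cost of invoking the subset-covering lemma.
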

\begin{proof}
Lemma~\ref{lem:one-direction-final} states that for a single pair $(h,\beta)$, if $\psi(h,\beta)$ is large then with high probability, $\phi(h,\beta)>\phi(h^*,\beta^*)$. We need to argue that this holds for all pairs $(h,\beta)$ such that $\psi(h,\beta)$ is larger than the right hand side of \eqref{eq:940} and this will imply that the MPLE satisfies that $\psi(\hat{h},\hat{\beta})$ is smaller that that quantity, as required. 
For convenience, let $u$ denote the right hand side of \eqref{eq:940}.

A simple way to do this would be to use a union bound over all the points such that $\psi(h,\beta) \ge u$.
Unfortunately, the set of these points is potentially uncountably infinite, hence this approach is infeasible.
A common tool to bypass this obstacle is to define an $\epsilon$-net over this set. If this net has a finite number of points, we can take the union bound over these points. This will mean that all the points in the net have a large value of $\phi$. If the radius $\epsilon$ is chosen sufficiently small and the function $\phi$ is Lipschitz, then this implies that for all the points that are far, their value of $\phi$ is large. 

We now start to execute on that strategy. First of all, we need to know the size of the $\epsilon$-net of the set of points $\mathcal{H}_u := \{(h,\beta) \in \mathcal{H}\times [-M,M] \colon \psi(h,\beta) \ge u^2\}$. We can bound this in terms of the covering numbers for $\mathcal{H}\times [-M,M]$ using the following lemma, easily proven using the definition of covering numbers.
\begin{lemma}
Let $(U,d)$ be a metric space with domain $U$ and metric function $d$ and let $V \subseteq U$. Then, for any $\epsilon>0$
\[
\mathcal{N}(\epsilon, V,d)
\le \mathcal{N}(\epsilon/2,U,d).
\]
\end{lemma}
For reasons that will become apparent shortly, for $\mathcal{H}$ we will use the distance 
$$
d(h_1, h_2) = \frac{\|h_2 - h_1\|_2}{\sqrt{n}},
$$
which was defined in Section~\ref{sec:proof-general}.
Hence, the metric for the whole space $\mathcal{H}\times [-B,B]$ will be
$$
d_1((h_1,\beta_1),(h_2,\beta_2)) := d(h_1,h_2) + |\beta_2 - \beta_1|
$$
The reason we define this metric is explained by the difference in the Lipschitz constants of $\phi$ with respect to the two parameters. Specifically, we have the following Lemma.

\begin{lemma}
Let $\phi:\mathcal{H}\times [-M,M] \to \R$ be the negative log pseudo-likelihood function. Then, for a fixed $h$, $\phi$ is $2\|A\|n$-Lipschitz with respect to $\beta$ and for a fixed $\beta$, $\phi$ is $2 \sqrt{n}$-Lipschitz with respect to $h$ in $l_2$-norm. 
As a result, $\phi$ is $2\|A\|n$-Lipschitz with respect to both $h,\beta$ and distance $d_1$.
\end{lemma}
\begin{proof}

For a fixed $\beta$, let $h_1,h_2 \in \mathcal{H}$. Define the function $g:[0,1]\to \R$ as
$$
g(t) = \phi(t h_2 + (1-t)h_1,\beta)
$$
and denote $h(t) = t h_2 + (1-t)h_1$. 
we have that
\begin{align*}
|g'(t)| &= \lp|\sum_{i=1}^n ((h_2)_i - (h_1)_i)(\sigma_i - \tanh(\beta A_i \sigma + h(t)_i))\rp|\\
&\leq \|h_2 - h_1\| \sqrt{\sum_{i=1}^n \lp(\sigma_i - \tanh(\beta A_i \sigma + h(t)_i)\rp)^2}\\
&\leq 2\sqrt{n}\|h_2 - h_1\|
\le 2 n \|h_2-h_1\|_\infty,
\end{align*}
where we have used the Cauchy-Schwarz inequality in the above calculations. Notice that this bound holds for all $t \in [0,1]$. Hence, by the mean value Theorem, we have
\begin{align*}
|\phi(h_2,\beta) - \phi(h_1,\beta)| &= |g(1) - g(0)| = |g'(\xi)| \leq  2\sqrt{n}\|h_2 - h_1\|
\end{align*}
for some $\xi \in (0,1)$. This shows that $\phi$ is $2\sqrt{n}$ Lipschitz for a fixed $\beta$. Now, suppose $h$ is fixed. We define $r:[0,1]\to \R$ as 
$$
r(t) = \phi(h, \beta(t))
$$
where $\beta(t) = t\beta_2 + (1-t)\beta_1$. 
Similar to the previous computation, we have
\begin{align*}
r'(t) &= \lp|(\beta_2 - \beta_1) \sum_{i=1}^n A_i\sigma(\sigma_i - \tanh(\beta(t) A_i \sigma + h_i))\rp|\\
&\leq |\beta_2 - \beta_1| \|A\sigma\| 2\sqrt{n}\\
&\leq 2\|A\|n |\beta_2 - \beta_1|
\end{align*}
where we again used Cauchy-Schwarz. Finally, by the mean value Theorem
$$
|\phi(h,\beta_2) - \phi(h,\beta_1)| = |r(1) - r(0)| \leq 2\|A\|n |\beta_2 - \beta_1|
$$
which establishes the Lipschitz constant for $\beta$. To conclude the proof, notice that
\begin{align*}
|f(h_2,\beta_2) - f(h_1,\beta_1)| &\leq
|f(h_2,\beta_2) - f(h_2,\beta_1)| + |f(h_2,\beta_1) - f(h_1,\beta_1)|\\
&\leq 2\|A\|n |\beta_2 - \beta_1| + 2\|A\|\sqrt{n}\|h_2 - h_1\|\\
&= 2\|A\|n d_1((h_2,\beta_2),(h_1,\beta_1))
\end{align*}

\end{proof}
Thus, we need to cover $\mathcal{H}_u\times [-B,B]$ with respect to the distance $d_1$. 
Denote by $N(\epsilon) = \mathcal{N}(\epsilon, \mathcal{H}_u\times [-B,B], d_1)$ the covering number of this set. We can simplify this
expression by noticing that if we choose an $\epsilon/2$ cover of $[-B,B]$ w.r.t. the absolute value distance and an $\epsilon/2$ cover of $\mathcal{H}_u$ w.r.t. $d$, their product gives an $\epsilon$ cover for $[-B,B]\times \mathcal{H}_u$ w.r.t. $d_1$. Also, it is clear that we can choose an $\epsilon/2$ cover of $[-B,B]$ of size $4B/\epsilon$. Thus, we have that
\begin{equation}\label{eq:prodcover}
N(\epsilon)  \leq C\frac{1}{\epsilon} \mathcal{N}\lp(\frac{\epsilon}{2}, \mathcal{H}_u, d\rp)
\le C\frac{1}{\epsilon}\mathcal{N}\lp(\frac{\epsilon}{4}, \mathcal{H},d\rp).
\end{equation}

Taking the union bound over this $\epsilon$-net, we get that with probability at least $1 - C\log n\, N(\epsilon)\, \me^{-u^2}$, for any $h,\beta$ in the net,
$\phi(h,\beta) - \phi(h^*,\beta^*) \ge cu^2$. Now we would like to show that this implies the claim for all $h$ that are far apart, not just the ones in the net. 
Since $\phi$ is $n$-Lipschitz with respect to $d_1$, if we choose $\epsilon = u^2/(2n)$, then this implies that with probability at least $1 - C\log n\, N(\epsilon)\, \me^{-u^2}$, for any $(h,\beta) \in  \mathcal{H}_u\times [-B,B]$,
$$
\phi(h,\beta) > \phi(h^*,\beta^*).
$$
In order for this to hold with probability $1 - \delta$, we need 
\begin{align*}
C N\lp(\frac{u^2}{2n}\rp) \log n \me^{-u^2} < \delta \implies u > C(M) \sqrt{\log \log n  + \log N\lp(\frac{u^2}{2n}\rp) + \log\frac{1}{\delta}}
\end{align*}

Since $u$ will end up being an upper bound for $\psi(\hat h,\hat\beta)$, 
we should select
\begin{align*}
\inf\lp\{u: u > C(M) \sqrt{\log \log n  + \log N\lp(\frac{u^2}{2n}\rp) + \log\frac{1}{\delta}}\rp\} &\leq C
\inf\lp\{\sqrt{n\epsilon}: n\epsilon > \log \log n  + \log N\lp(\epsilon\rp) + \log\frac{1}{\delta}\rp\} 
\end{align*}
By continuity arguments, it is easy to see that the infimum of the latter expression is achieved when
$$
n\epsilon = \log \log n  + \log N\lp(\epsilon\rp) + \log\frac{1}{\delta}
$$
Substituting the bound of $N(\epsilon)$, it is enough to consider
$$
n\epsilon = \log \log n  + \log \frac{1}{\epsilon }+ \log \mathcal{N}\lp(\epsilon,\mathcal{H}\rp) + \log\frac{1}{\delta}
$$
Now notice that the right hand side of this inequality will be $>1$ for $\delta$ smaller than a constant. This means that $\epsilon > 1/n$, which implies that $\log(1/\epsilon) < \log n$. Hence, the critical value of $\epsilon$ can only increase if we solve instead
$$
n\epsilon = \log n + \log \mathcal{N}\lp(\epsilon,\mathcal{H}\rp) + \log\frac{1}{\delta}
$$
This critical value of $\epsilon$ is the same as in the problem
$$
\inf_{\epsilon\geq 0}\lp\{ n\epsilon + \log n  + \log \mathcal{N}\lp(\epsilon,\mathcal{H}\rp) + \log\frac{1}{\delta}\rp\} 
$$
Hence, we conclude that 
$$
\inf\lp\{\sqrt{n\epsilon}: n\epsilon > \log \log n  + \log N\lp(\epsilon\rp) + \log\frac{1}{\delta}\rp\}  \leq \sqrt{\inf_{\epsilon\geq 0}\lp\{ n\epsilon + \log n  + \log \mathcal{N}\lp(\epsilon,\mathcal{H}\rp) + \log\frac{1}{\delta}\rp\} }
$$
Let $\epsilon*$ be the point where the infimum is reached. 
Then, with probability at least $1-\delta$, for all $(h,\beta) \in \mathcal{H}_{u^*}\times [-B,B]$, with $u = \sqrt{n\epsilon^*}$, we have 
$$
\phi(h,\beta) > \phi(h^*,\beta^*)
$$
However, we know by definition of MPLE that $\phi(\hat h,\hat \beta) \leq \phi(h^*,\beta^*)$, which implies that with probability $1 - \delta$,

$$
\psi(\hat{h},\hat{\beta}) \le C(M) (u^*)^2 = C(M) \sqrt{n\epsilon^*}
= C(M) \sqrt{
\inf_{\epsilon\geq 0}\lp\{ n\epsilon  + \log \mathcal{N}\lp(\epsilon,\mathcal{H}\rp) + \log\frac{n}{\delta}\rp\} }
$$
Now we use inequality~\eqref{eq:prodcover} and the proof is complete.

\end{proof}

We are now ready to complete the proof of Theorem~\ref{t:general}. So far, we have proved that for the MPLE estimates we have that $\psi(\hat h,\hat \beta)$ will be small. However, we still need to show that this implies that the estimation of the parameters will be good enough. Thus, we now present the complete proof, which shows exactly the connection of $\psi$ with the estimation error of the parameters.

\begin{proof}[Proof of Theorem~\ref{t:general}]
Assume that the high probability event of Lemma~\ref{lem:upperbd-kappa} holds and let $R$ denote the right hand side of \eqref{eq:940}. Our goal is to prove that $\|\hat h-h^*\|_2^2/n \le R\mathcal{C}_1(\mathcal{H},h^*,\beta^*)$. We can assume that $R\ge 1$, and we divide into cases. 
First, if $\psi(\hat h,\hat\beta) \le 1$, then by definition of $\mathcal{C}_1$ in \eqref{eq:def-C1},
\begin{align*}
\frac{\|\hat h-h^*\|_2^2}{n} &= 
\min\lp(\frac{\|\hat h - h^*\|_2^2/n}{\psi(\hat h, \hat\beta)}, \frac{\|\hat h-h^*\|_2^2}{n}\rp)\\
&\le \mathcal{C}_1(\mathcal{H},h^*,\beta^*)
\le R\mathcal{C}_1(\mathcal{H},h^*,\beta^*).
\end{align*}
Otherwise,
\begin{equation}
\frac{\|\hat h-h^*\|_2^2}{n}
= \psi(\hat h, \hat\beta) \cdot \frac{\|\hat h - h^*\|_2^2/n}{\psi(\hat h, \hat\beta)}
\le R \cdot\mathcal{C}_1(\mathcal{H},h^*,\beta^*).
\label{eq:chain-h-bnd}
\end{equation}
Similarly, we can bound $(\hat{\beta}-\beta^*)^2$ in terms of $\mathcal{C}_2$.

\end{proof}


\subsection{Exchangeable pairs}\label{sec:chatterjee}

We use the following theorem, proven in \cite{chatterjee2005concentration}, that guarantees  concentration of a function $f(X)$, of a random variable $X$:
\begin{theorem}\label{thm:exchangeable-pairs}
Let $X$ be a random variable over some domain $\mathcal{X}$ and let $f\colon \mathcal{X}\to \mathbb{R}$ be a measurable map. Let $X'$ be another $\mathcal{X}$-valued random variable, jointly distributed with $X$, such that $(X,X')$ is an exchangeable pair, namely, $(X,X')$ has the same distribution as $(X',X)$. Let $F\colon \mathcal{X}^2 \to \mathbb{R}$ be a measurable map that is antisymmetric, namely,
\[
F(x,x')-F(x',x),
\]
and further
\[
\E[F(X,X')|X] = F(X).
\]
Define for $x \in X$,
\[
v(x)
= \frac{1}{2}\E[|(f(X)-f(X'))F(X,X')|\mid X=x].
\]
Let $M>0$ and assume that $|v(X)|\le M$ almost surely. Then, for all $t>0$,
\[
\Pr[|f(X)-\mathbb{E}[f(X)]|>t]
\le 2\me^{-t^2/(2M)}.
\]
\end{theorem}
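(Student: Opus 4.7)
The plan is to follow the classical exchangeable-pairs argument that derives a sub-Gaussian tail from control on the moment generating function $m(\theta) := \mathbb{E}[e^{\theta f(X)}]$. The first preliminary observation is that the conditional identity $\mathbb{E}[F(X,X')\mid X] = f(X)$, combined with the antisymmetry of $F$ and the exchangeability of $(X,X')$, forces $\mathbb{E}[f(X)] = \mathbb{E}[F(X,X')] = -\mathbb{E}[F(X',X)] = 0$. Hence $f(X)$ is already centered, so it suffices to bound $\Pr[|f(X)|>t]$.

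The central identity I would derive is
\begin{equation*}
\mathbb{E}\bigl[f(X)\,e^{\theta f(X)}\bigr] \;=\; \tfrac{1}{2}\,\mathbb{E}\!\left[\bigl(e^{\theta f(X)} - e^{\theta f(X')}\bigr)\,F(X,X')\right].
\end{equation*}
This follows by writing $\mathbb{E}[f(X) e^{\theta f(X)}] = \mathbb{E}[e^{\theta f(X)}\,\mathbb{E}[F(X,X')\mid X]] = \mathbb{E}[e^{\theta f(X)} F(X,X')]$, then using antisymmetry of $F$ and exchangeability of $(X,X')$ to replace the right-hand side by the symmetrized form above. Next, I would apply the standard convexity bound $|e^a - e^b| \le \tfrac{|a-b|}{2}(e^a + e^b)$ (which follows from $(e^a - e^b)/(a-b)$ being a mean of exponentials), so that
\begin{equation*}
\bigl|\,e^{\theta f(X)} - e^{\theta f(X')}\,\bigr| \;\le\; \tfrac{|\theta|}{2}\,|f(X)-f(X')|\,\bigl(e^{\theta f(X)} + e^{\theta f(X')}\bigr).
\end{equation*}
Invoking exchangeability once more to identify the contributions of $e^{\theta f(X)}$ and $e^{\theta f(X')}$ inside the expectation (the symmetric kernel $|f(X)-f(X')|\,|F(X,X')|$ times either exponential has the same mean), the definition of $v(x)$ yields
\begin{equation*}
|m'(\theta)| \;=\; \bigl|\mathbb{E}[f(X)e^{\theta f(X)}]\bigr| \;\le\; |\theta|\,\mathbb{E}\!\left[v(X)\,e^{\theta f(X)}\right] \;\le\; |\theta|\,M\,m(\theta).
\end{equation*}

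From the differential inequality $|(\log m)'(\theta)| \le |\theta|\,M$ together with $\log m(0)=0$, integration gives $\log m(\theta) \le \theta^2 M/2$ for every $\theta\in\mathbb{R}$. A standard Chernoff step, $\Pr[f(X) > t] \le e^{-\theta t}m(\theta) \le e^{-\theta t + \theta^2 M/2}$ optimized at $\theta = t/M$, produces the one-sided bound $e^{-t^2/(2M)}$; applying the same argument to $-f(X)$ (noting that $-F$ is also an antisymmetric coupling witnessing $-f$, with the same $v$) doubles the bound to give the claim. The main delicate point will be step three, where the cancellations rely simultaneously on antisymmetry of $F$ and on the exchangeability of the pair, and where I must justify exchanging absolute value with expectation without losing the symmetrization that converts $e^{\theta f(X)} + e^{\theta f(X')}$ into $2 e^{\theta f(X)}$; everything else is routine calculus and Chernoff optimization.
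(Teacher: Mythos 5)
Your proposal is correct, and it reproduces exactly the argument behind this theorem: the paper itself gives no proof but imports the result from Chatterjee's exchangeable-pairs work, whose proof is precisely your moment-generating-function route (symmetrize $\E[e^{\theta f(X)}F(X,X')]$ via antisymmetry and exchangeability, bound $|e^{a}-e^{b}|$ by $\tfrac{|a-b|}{2}(e^{a}+e^{b})$, use $v(X)\le M$ to get $|m'(\theta)|\le |\theta| M m(\theta)$, integrate, and apply Chernoff to $\pm f$). The only cosmetic remark is that the statement as printed has typos (the antisymmetry should read $F(x,x')=-F(x',x)$ and the conditional identity should equal $f(X)$), which you correctly interpreted.
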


We prove Lemma~\ref{lem:conc-from-cha}:
\begin{proof}[Proof of Lemma~\ref{lem:conc-from-cha}]
First, notice that $\E f(\sigma)=0$. Indeed, let $\sigma_{-i}$ denote the vector obtained from $\sigma$ by omitting $\sigma_i$ and notice that $\E[\sigma_i \mid \sigma_{-i}] = \tanh(\beta^* (A \sigma)_i + h^*_i)$. Taking expectation, one obtains that
\[
\E[\sigma_i - \tanh(\beta^* (A \sigma)_i + h^*_i)]
= \E[\E[\sigma_i\mid \sigma_{-i}] - \tanh(\beta^* (A \sigma)_i + h^*_i)]=0,
\]
which implies that $\E f(\sigma)=0$ as required.

Next, we prove concentration around the expectation. Define $\sigma'$ in a joint distribution with $\sigma$ as follows: given $\sigma$, an index $j \in [n]$ is drawn uniformly at random. Then, $\sigma'$ is obtained by $\sigma'_i =\sigma_i$ for $i\ne j$ and $\sigma'_i$ is drawn from the conditional distribution of $\sigma_i$ conditioned on $\sigma_{-i}$. One can indeed verify that $(\sigma,\sigma')$ is an exchangeable pair.

Define the function:
\[
F(\sigma,\sigma') = n\sum_{i=1}^n v_i(\sigma_i - \sigma_i')
\]
and notice that $F$ is antisymmetric. Further notice that
\[
\E[F(\sigma,\sigma')\mid \sigma,j]
= n v_j (\sigma_j - \E[\sigma_j'\mid \sigma,j])
= nv_j (\sigma_j - \E[\sigma_j \mid \sigma_{-j}])
= n v_j (\sigma_j - \tanh(\beta^* (A \sigma)_i + h^*_i)).
\]
Taking expectation over $j$, one derives that
$
\E[F(\sigma,\sigma')\mid \sigma]
= f(\sigma)$ as required.

Lastly, we bound $v(\sigma)= \E[|(f(\sigma)-f(\sigma'))F(\sigma,\sigma')|\mid \sigma]$. One has
\[
v(\sigma)
= \frac{1}{n} \sum_{j=1}^n\E[|(f(\sigma)-f(\sigma'))F(\sigma,\sigma')|\mid \sigma,j].
\]
Condition on $j$. Then,
\[
|F(\sigma,\sigma')|=n|v_j(\sigma_j-\sigma'_j)|\le 2 nv_j.
\]
Further, using the 1-Lipschitzness of $\tanh$,
\begin{align*}
|f(\sigma)-f(\sigma')|
&\le |v_j(\sigma_j - \sigma_j')| +  \sum_{i=1}^n |v_i(\tanh(\beta^* (A \sigma)_i + h^*_i) - \tanh(\beta^* (A \sigma')_i + h^*_i))|\\
&\le 2|v_j| + \sum_{i=1}^n |v_i \beta^* ((A\sigma)_i - (A\sigma')_i)|
= 2|v_j| + \sum_{i=1}^n |v_i \beta^* A_{ij}(\sigma_j-\sigma'_j)|\\
&\le 2 |v_j|+2\beta^*\sum_{i=1}^n |v_i| |A_{ij}|.
\end{align*}
We derive that, conditioned on $j$, one has
\[
|F(\sigma,\sigma')(f(\sigma)-f(\sigma'))|
\le 4 n|v_j| \lp(|v_j| + \beta^* \sum_{i=1}^n |v_i||A_{ij}|\rp).
\]
Hence, for all $\sigma$,
\begin{align*}
v(\sigma) &= \E[|F(\sigma,\sigma')(f(\sigma)-f(\sigma'))|\mid \sigma]
= \frac{1}{n} \sum_{j=1}^n \E[|F(\sigma,\sigma')(f(\sigma)-f(\sigma'))|\mid \sigma,j]\\
&\le \frac{1}{n}\sum_{j=1}^n 4 n|v_j| \lp(|v_j| + |\beta^*| \sum_{i=1}^n |v_i||A_{ij}|\rp).
\end{align*}
Denote $\tilde{v} = (|v_1|,\dots,|v_n|)$ and similarly, $\tilde{A} = (|A_ij|)_{i,j\in [n]}$, and we derive that
\[
v(\sigma) \le 4 \tilde{v}^\top \tilde{v} + 4|\beta^*| \tilde{v}^\top \tilde{A} \tilde{v}
\le 4 \|\tilde{v}\|_2^2 + 4|\beta^*| \|\tilde{v}\|_2 \|\tilde{A}\|_2 \|\tilde{v}\|_2
\le 4 \|\tilde{v}\|_2^2 \lp(1 + |\beta^*| \|\tilde{A}\|_\infty \rp)
= 4 \|v\|_2^2 \lp(1 + |\beta^*| \|A\|_\infty \rp).
\]
Applying Theorem~\ref{thm:exchangeable-pairs} with $M=4 \|v\|_2^2 \lp(1 + |\beta^*| \|A\|_\infty \rp)$, the result follows.
\end{proof}
\subsection{Proofs of Auxiliary lemmas} \label{sec:from-prior}
The proofs below follow from similar arguments as in \cite{dagan2020estimating}. Below, we elaborate on how to modify the arguments in that paper.

\subsubsection{Proof of Lemma~\ref{lem:from-previous}}



We use the index sets $I_1,\dots,I_\ell \subseteq [n]$ created in \cite{dagan2020estimating}, where $\ell=O(\log n)$.
We start by bounding the first derivative of $\varphi$. The following is a small modification of Lemma~3 in \cite{dagan2020estimating}:


\begin{lemma}\label{lem:derivative-one-concentration}
For any $h \in \mathbb{R}^n$, $\beta \in \mathbb{R}$ and $t > 0$ we have
	\[
	\left|(h,\beta)^\top\nabla\varphi(\beta^*,h^*)\right|\leq C\max\left(t\lp(\|\beta A\|_F + \max_{j \in [l]} \|\E[\beta A\sigma+h|\sigma_{-I_j}]\|_2\rp),t^2\|\beta A\|_2\right)
	\]
with probability at least
\[
1-C \log n\, \exp\lp(-ct^2\rp).
\]
\end{lemma}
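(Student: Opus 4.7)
The plan is to compute the directional derivative $(h,\beta)^\top\nabla\varphi(h^*,\beta^*)$ explicitly, split it into a part that is linear in the ``score residuals'' $m_i:=\sigma_i-\E[\sigma_i\mid\sigma_{-i}]$ and a part involving the bilinear structure $(A\sigma)_i m_i$, and concentrate each piece separately, combining the two through a Bernstein-type $\max(t,t^2)$ interpolation.

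Differentiating $\varphi(h,\beta;\sigma)=-\sum_i\log\Pr_{h,\beta}[\sigma_i\mid\sigma_{-i}]$ and using that under the true parameters $\E[\sigma_i\mid\sigma_{-i}]=\tanh(h^*_i+\beta^*(A\sigma)_i)$ gives
\[
(h,\beta)^\top\nabla\varphi(h^*,\beta^*)=-\sum_{i=1}^n\lp(h_i+\beta(A\sigma)_i\rp)m_i.
\]
The first step is to split this into a linear piece $\sum_i h_i m_i$ and a bilinear piece $\beta\sum_i (A\sigma)_i m_i$. The linear piece is handled immediately by Lemma~\ref{lem:conc-from-cha} with $v=h$, yielding a sub-Gaussian tail with parameter $O(\|h\|_2)$ and no $t^2$ contribution; this term is absorbed into the factor $\max_j\|\E[\beta A\sigma+h\mid\sigma_{-I_j}]\|_2$ in the target bound, since for each fixed $j$ one has $h=\E[h\mid\sigma_{-I_j}]$.

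The second, more delicate step is the bilinear piece. Here the plan is to use the partition $I_1,\ldots,I_\ell$ of $[n]$ with $\ell=O(\log n)$ from \cite{dagan2020estimating}, whose crucial structural property is that each $I_j$ is an independent set in the interaction graph of $A$. Consequently, for $i\in I_j$, both $(A\sigma)_i$ and $\tanh(h^*_i+\beta^*(A\sigma)_i)$ are functions of $\sigma_{-I_j}$ alone, so conditional on $\sigma_{-I_j}$ the partial sum $S_j:=\sum_{i\in I_j}(h_i+\beta(A\sigma)_i)(\sigma_i-\tanh(h^*_i+\beta^*(A\sigma)_i))$ becomes a sum of independent, bounded, mean-zero random variables in the spins $\{\sigma_i\}_{i\in I_j}$. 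A conditional Bernstein inequality then controls $|S_j|$ with sub-Gaussian scale $\|\E[\beta A\sigma+h\mid\sigma_{-I_j}]_{I_j}\|_2$, while the $\|\beta A\|_F$ and $\|\beta A\|_2$ terms appear through a Hanson-Wright-style bound used to absorb the residual cross-interactions within each conditioning. A union bound over $j\in[\ell]$ then produces the $\log n$ prefactor in the failure probability.

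The main obstacle I anticipate is precisely this bilinear step: because $m_i$ itself depends on $\sigma$, the expression $\sum_i (A\sigma)_i m_i$ is effectively cubic in $\sigma$, so classical Hanson-Wright does not apply verbatim. The independent-set partition is exactly what reduces, inside each conditioning on $\sigma_{-I_j}$, this cubic quantity to an honest bilinear form in independent spins, allowing the Bernstein/Hanson-Wright split into a $t\|\beta A\|_F$ sub-Gaussian term and a $t^2\|\beta A\|_2$ sub-exponential term. Executing this decomposition carefully --- and in particular verifying that the resulting sub-exponential coefficient is indeed $\|\beta A\|_2$ rather than a weaker row-wise quantity, and that the sub-Gaussian coefficient really collapses to the conditional-expectation norm rather than a pathwise norm of $\beta A\sigma+h$ --- is the technical heart of the argument and what distinguishes it from a routine exchangeable-pairs application.
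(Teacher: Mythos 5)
Your overall architecture (block decomposition over the sets $I_1,\dots,I_\ell$, conditional concentration of the block sums $S_j$, union bound over the $O(\log n)$ blocks, Bernstein-type $\max(t,t^2)$ interpolation) matches the paper's route, which imports the argument of Lemma~3 of \cite{dagan2020estimating}. But the structural property you assign to the blocks is wrong, and it is exactly the crux. The sets $I_j$ from \cite{dagan2020estimating} are \emph{not} independent sets of the interaction graph of $A$: for dense matrices such as the Curie--Weiss matrix $A_{ij}=1/n$ (explicitly in scope here, see the Curie--Weiss application), the interaction graph is complete, so the only independent sets are singletons and no partition into $O(\log n)$ independent sets exists. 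What the construction actually guarantees is that the interactions \emph{within} each block are weak, i.e.\ conditioned on $\sigma_{-I_j}$ the spins in $I_j$ form an Ising model in the Dobrushin/high-temperature regime. Consequently, for $i\in I_j$ the quantities $(A\sigma)_i$ and $\tanh(h^*_i+\beta^*(A\sigma)_i)$ are \emph{not} $\sigma_{-I_j}$-measurable, the conditional block sum is not a sum of independent mean-zero terms nor an honest bilinear form in independent spins, and classical Bernstein/Hanson--Wright cannot be applied as you propose. The step you flag as "the technical heart" is carried in the paper by a concentration inequality for such block sums under the conditional weak-dependence condition (Lemma~5 of \cite{dagan2020estimating}, in the spirit of exchangeable pairs / quadratic-form concentration for Dobrushin Ising models); that is precisely the ingredient your reduction assumes away, so the proposal as written has a genuine gap.

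A secondary issue: splitting off the linear piece $\sum_i h_i m_i$ and bounding it via Lemma~\ref{lem:conc-from-cha} yields a term of order $t\|h\|_2$, and your claim that this is "absorbed" into $t\max_j\|\E[\beta A\sigma+h\mid\sigma_{-I_j}]\|_2$ does not hold: the latter is the norm of the \emph{sum} $\beta\,\E[A\sigma\mid\sigma_{-I_j}]+h$, which can be much smaller than $\|h\|_2$ due to cancellation. The whole point of the stated bound is to keep $h_i+\beta(A\sigma)_i$ together inside each block so that the conditional-expectation norm captures this cancellation; this is what makes the gradient bound match the Hessian lower bound of Lemma~\ref{lem:hessian-oneparam} when the two are combined in Lemma~\ref{lem:from-previous}. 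A separate bound in terms of $\|h\|_2$ would be too weak for that downstream use, so you should treat the combined block sums throughout (as you in fact do when defining $S_j$) and drop the separate linear/bilinear split.
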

\begin{proof}
This follows from a similar proof as of the proof of Lemma~3 in \cite{dagan2020estimating}. We note the differences. First, we replace $\varphi(J)$ with $\varphi(h,\beta)$. Second, we replace $\psi_j(A;\sigma) = \left|\sum_{i \in I_j} A_i \sigma(\sigma_i - \tanh(J^*_i \sigma))\right|$ with $\psi_i(h,\beta;\sigma) = \left|\sum_{i \in I_j} (\beta A_i \sigma+h_i)(\sigma_i - \tanh(\beta^* A_i x))\right|$. 
Thirdly, in the proof of Lemma~3 in \cite{dagan2020estimating}, we replace the inequality
	\[
	\Pr\left[|\psi_j(\sigma;A)| \ge t \lp(\|A\|_F + \lp\|\E[A\sigma|\sigma_{-I_j}]\rp\|_2\rp)\right]
	\le \exp\lp(-c \min\lp( t^2, \frac{t \|A\|_F}{\|A\|_2}\rp)\rp).
	\]
with 
\begin{align*}
\Pr\left[|\psi_j(h,\beta;\sigma)| \ge \max\left(t\lp(\|\beta A\|_F + \max_{j \in [\ell]} \|\E[\beta A\sigma+h|\sigma_{-I_j}]\|_2\rp),t^2\|\beta A\|_2\right)\right]\le \exp\lp(-c t^2\rp),
\end{align*}
which follows from Lemma~5 in \cite{dagan2020estimating}.
\end{proof}

Next, we bound the second derivative of $\varphi$. We use Lemma~\ref{lem:trivial-hessian} which claims that for all $h' \in [-M,M]^n$ and $\beta'\in [-B,B]$,
\[
(h,\beta)^\top \nabla^2 \varphi(h',\beta') (h,\beta)
\ge c \|\beta A\sigma+h\|_2
\]
and bound $\|\beta A\sigma+h\|_2$.
We state a modified variant of Lemma~4 in \cite{dagan2020estimating}:


\begin{restatable}{lemma}{secondder}\label{lem:hessian-oneparam}
For any $\beta \in \mathbb{R}$, $h \in \mathbb{R}^n$ and $t > 0$, with probability at least $1-\log n \me^{-ct^2}$ we have that either
	\begin{align}\label{eq:932}
	\|\beta A\sigma+h\|_2^2
	\ge c'\|\beta A\|_F^2 + c'\max_{j \in [\ell]} \|\E[\beta A\sigma+h\mid \sigma_{-I_j}]\|_2^2,
	\end{align}
	or, that 
	\begin{equation}\label{eq:33}
	\|\beta A\|_F + \max_{j \in [\ell]} \|\E[\beta A\sigma+h\mid \sigma_{-I_j}]\|_2
	\le C t \|\beta A\|_2.
	\end{equation}
\end{restatable}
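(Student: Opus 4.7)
The plan is to follow the blueprint of the proof of Lemma~4 in \cite{dagan2020estimating}, which handles the analogous statement for $\|A\sigma\|_2^2$ (i.e.\ $\beta=1$, $h=0$), and to adapt each step to carry the affine offset $h$ inside the norm. The overall pattern is: localize to a block where the Ising coordinates become conditionally (nearly) independent, apply Hanson--Wright style concentration for the quadratic fluctuation together with sub-Gaussian control on the linear cross term, and then union-bound over $O(\log n)$ blocks.

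First, I would invoke the block decomposition $I_1,\dots,I_\ell$ of $[n]$ with $\ell = O(\log n)$ from \cite{dagan2020estimating}, whose defining property is that, conditional on $\sigma_{-I_j}$, the coordinates $\sigma_{I_j}$ behave as an (approximate) product measure, so that concentration inequalities for sums of independent variables become available within each block. Next, fix a block $j$ and work conditional on $\sigma_{-I_j}$. I would decompose
\[
\beta A\sigma + h = \mu_j + Z_j, \qquad \mu_j := \E[\beta A\sigma + h\mid \sigma_{-I_j}], \qquad Z_j := \beta A(\sigma - \E[\sigma\mid \sigma_{-I_j}]),
\]
so that $Z_j$ is mean-zero and depends on $\sigma$ only through $\sigma_{I_j}$, and
\[
\|\beta A\sigma + h\|_2^2 = \|\mu_j\|_2^2 + 2\langle \mu_j, Z_j\rangle + \|Z_j\|_2^2.
\]

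I would then apply two concentration bounds under the conditional product-like distribution on $\sigma_{I_j}$: a Hanson--Wright style lower bound certifying that $\|Z_j\|_2^2 \gtrsim \|\beta A_{\cdot, I_j}\|_F^2$ with probability $1-e^{-ct^2}$ (up to a tail correction of the form $t\|\beta A\|_2$), and a sub-Gaussian bound on the linear cross term giving $|\langle \mu_j, Z_j\rangle| \le C t \|\beta A\|_2 \|\mu_j\|_2$ with probability $1-e^{-ct^2}$. A union bound over the $\ell=O(\log n)$ blocks supplies the $\log n$ factor in the failure probability. Summing the per-block Frobenius contributions $\|\beta A_{\cdot, I_j}\|_F^2$ over $j$ recovers $\|\beta A\|_F^2$ up to constants, while choosing $j$ to maximize $\|\mu_j\|_2$ produces the $\max_j$ term on the right-hand side of \eqref{eq:932}. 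The stated dichotomy then emerges by comparison: if either $\|\beta A\|_F$ or $\max_j \|\mu_j\|_2$ exceeds a constant multiple of $t\|\beta A\|_2$, the positive quadratic contributions dominate the sub-Gaussian cross term and \eqref{eq:932} follows; otherwise we are exactly in the regime \eqref{eq:33}.

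The main obstacle is the careful bookkeeping required to push the affine offset $h$ through the Hanson--Wright computation and the per-block conditional concentration. One must verify that the presence of $h$ does not affect the lower bound on $\E[\|Z_j\|_2^2\mid \sigma_{-I_j}]$ (it does not, since $h$ is absorbed entirely into $\mu_j$) and that the cross term $\langle \mu_j, Z_j\rangle$ admits a sub-Gaussian tail with the correct variance proxy $\|\beta A\|_2^2 \|\mu_j\|_2^2$ even when $\mu_j$ is itself a random vector determined by $\sigma_{-I_j}$. Beyond these affine adjustments, the argument is essentially a transcription of the corresponding step in \cite{dagan2020estimating}, and the resulting constants depend only on $M$ and $B$ in the stated way.
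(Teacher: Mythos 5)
Your proposal follows essentially the same route as the paper: the paper also proves this by transcribing Lemma~4 of \cite{dagan2020estimating} with the offset $h$ carried along, replacing its Lemma~7 by the conditional-mean lower bound $\E[\|\beta A\sigma+h\|_2^2\mid\sigma_{-I_j}]\ge c_e\|\beta A_{\cdot I_j}\|_F^2+\|\E[\beta A\sigma+h\mid\sigma_{-I_j}]\|_2^2$ and its Lemma~8 by a lower-tail concentration bound around that conditional mean, then union-bounding over the $O(\log n)$ blocks and reading off the same dichotomy; your $\mu_j+Z_j$ decomposition with the quadratic and cross terms is exactly the content of those two modified lemmas. The only informalities (treating the conditional law on a block as an approximate product measure, and recovering the full $\|\beta A\|_F^2$ from the per-block contributions) are points the paper likewise delegates to the block construction of \cite{dagan2020estimating}, so the proposal is correct in substance.
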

\begin{proof}
We explain the changes with respect to the proof of Lemma~4 in \cite{dagan2020estimating}. 
\begin{itemize}
\item 
We start by replacing Lemma~7 in \cite{dagan2020estimating} with 
\begin{equation}\label{eq:modified-lem-7}
	\E\lp[\|\beta A\sigma+h\|_2^2 ~\middle|~ \sigma_{-I_j}\rp]
	\ge c_e \|\beta A_{\cdot I_j}\|_F^2  + \lp\|\E\lp[\beta A\sigma+h~\middle|~\sigma_{-I_j}\rp]\rp\|_2^2,
\end{equation}
where $c_e>0$ is a constant.

\item
Next, we replace Lemma~8 in \cite{dagan2020estimating} with the claim that for any $t>0$,
\begin{multline}\label{eq:modified-lem8}
	\Pr\left[\lp\|\beta A\sigma+h\rp\|_2^2 < \E\left[\|\beta A\sigma+h\|_2^2~\middle|~\sigma_{-I_j}\right] - t ~\middle|~ \sigma_{-I_j}\right]\\
	\le \exp\left(-\frac{c}{\|\beta A\|_2^{2}}\min\left(\frac{t^2}{\|\beta A\|_F^2+\|\E\lp[\beta A\sigma+h~\middle|~\sigma_{-I_j}\rp]\|_2^2},t\right)\right).
\end{multline}
This follows from the same proof as that of Lemma~8 in \cite{dagan2020estimating}.

\item
Finally, we explain how to derive  Lemma~\ref{lem:hessian-oneparam}. For any $j\in [\ell]$, let $E_j$ denote the event that
\[
\|\beta A\sigma+h\|_2^2 > c_e \|\beta A_{\cdot I_j}\|_F^2  + \lp\|\E\lp[\beta A\sigma+h~\middle|~\sigma_{-I_j}\rp]\rp\|_2^2 - t \|\beta A\|_2\left(\|\beta A\|_F + \|\E[\beta A\sigma+h\mid \sigma_{-I_j}]\|_2\right),
\]
where $c_e$ is the constant from \eqref{eq:modified-lem-7}.
By \eqref{eq:modified-lem8},
\[
	\Pr[E_j \mid \sigma_{-I_j}]
	\ge 1- \exp\left(-c \min\lp(t^2,t(\|\beta A\|_F+\|\E[\beta A\sigma+h\mid \sigma_{-I_j}]\|_2)/\|\beta A\|_2\rp)\right)
	\ge 1- \exp\left(-c t^2\right),
\]
using the fact that $E_j$ holds trivially for $t \ge \Omega((\|\beta A\|_F+\|\E[\beta A\sigma+h\mid \sigma_{-I_j}]\|_2)/\|\beta A\|_2)$. We conclude that
\[
\Pr[E_j] \ge 1-\exp(-c t^2).
\]
Taking a union bound over $j\in [\ell]$ and that $\ell = O(\log n)$, one has
\[
\Pr[\cap_j E_j] \ge 1 - C\log(n) \exp(-c t^2).
\]
Assume that $\cap_j E_j$ holds. Then, one can easily verify that if \eqref{eq:33} does not hold then \eqref{eq:932} holds, if the constants in \eqref{eq:932} and \eqref{eq:33} are chosen appropriately.
\end{itemize}
\end{proof}

\begin{proof}[Proof of Lemma~\ref{lem:from-previous}]
Let $t\ge 0$ and assume that the high probability events of both Lemma~\ref{lem:derivative-one-concentration} and Lemma~\ref{lem:hessian-oneparam} hold. As discussed above, due to Lemma~\ref{lem:trivial-hessian} it suffices to show that under these high probability events, either
\begin{equation}\label{eq:40}
\left|(h,\beta)^\top\varphi(h^*,\beta^*)\right| / \|\beta A\sigma+h\|_2^2 \le \frac{C't}{\|\beta A\sigma+h\|_2}
\end{equation}
holds or
\[
\|\beta A\sigma+h\|_2 \le C t \|\beta A\|_2.
\]
We divide into cases. 
\begin{itemize}
    \item 
First, assume that
\[
\|\beta A\|_F + \max_{j \in [l]} \|\E[\beta A\sigma+h\mid \sigma_{-I_j}]\|_2
	\le C t \|\beta A\|_2
\]
for a sufficiently large constant $C>0$. Then, by Lemma~\ref{lem:derivative-one-concentration},
\[
\left|(h,\beta)^\top\varphi(h^*,\beta^*)\right| \le t^2 \|\beta A\|_2.
\]
This implies that if $\|\beta A\sigma+h\|_2 \ge t\|\beta A\|_2$ then \eqref{eq:40} holds, which is what we wanted to prove.
\item Next, assume that 
\[
\|\beta A\|_F + \max_{j \in [l]} \|\E[\beta A\sigma+h\mid \sigma_{-I_j}]\|_2
	\ge C t \|\beta A\|_2.
\]
Since we assumed that the high probability events of Lemma~\ref{lem:derivative-one-concentration} and Lemma~\ref{lem:hessian-oneparam} hold, one has that
\[
	\left|(h,\beta)^\top\varphi(h^*,\beta^*)\right|\leq Ct\lp(\|\beta A\|_F + \max_{j \in [l]} \|\E[\beta A\sigma+h|\sigma_{-I_j}]\|_2\rp),
\]
whereas
\[
\|\beta A\sigma+h\|_2^2
	\ge c'\|\beta A\|_F^2 + c'\max_{j \in [l]} \|\E[\beta A\sigma+h\mid \sigma_{-I_j}]\|_2^2,
\]
which implies that 
\[
	\left|(h,\beta)^\top\varphi(h^*,\beta^*)\right|
	\le C't \|\beta A\sigma+h\|,
\]
which derives \eqref{eq:40} as required.
\end{itemize}
\end{proof}

\subsubsection{Proof of Lemma~\ref{lem:trivial-hessian}}

This holds from a simple calculation as Equation~(15) in \cite{dagan2020estimating}.

\subsubsection{Proof of Lemma~\ref{lem:bnd-by-Forb}}
This follows directly from the following lemma:
\begin{lemma}\label{lem:bnd-by-Forb-general}
    For any real matrix $A$ and any $h \in \mathbb{R}^n$,
    \[
    \Pr[\|A\sigma+h\|_2 \ge c \|A\|_F]
    \ge 1-\exp(-c \|A\|_F^2/\|A\|_2^2).
    \]
\end{lemma}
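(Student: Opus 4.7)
The plan is to lower-bound $\|A\sigma+h\|_2^2$ by combining (i) a conditional second-moment lower bound of order $\|A\|_F^2$ (up to log factors) with (ii) a Hanson-Wright-type concentration inequality for the Ising distribution. This mirrors the structure of the proof of Lemma~\ref{lem:hessian-oneparam}, specifically the two ingredients \eqref{eq:modified-lem-7}--\eqref{eq:modified-lem8}, specialized to a generic real matrix $A$ (in place of $\beta A$) and $h$.

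For the conditional second moment, I would invoke the block decomposition from \cite{dagan2020estimating} to partition $[n]$ into $\ell=O(\log n)$ blocks $I_1,\ldots,I_\ell$ chosen so that, conditional on $\sigma_{-I_j}$, the distribution of $\sigma_{I_j}$ is in a weak-dependence regime with $\mathrm{Cov}(\sigma_{I_j}\mid\sigma_{-I_j})\succeq c\cdot \mathrm{Id}$. Writing $A\sigma+h = A_{\cdot I_j}\sigma_{I_j} + v$ with $v := A_{\cdot,-I_j}\sigma_{-I_j}+h$ deterministic given $\sigma_{-I_j}$, the law of total variance then yields
\[
\E[\|A\sigma+h\|_2^2\mid\sigma_{-I_j}] \ge \mathrm{tr}\bigl(A_{\cdot I_j}\,\mathrm{Cov}(\sigma_{I_j}\mid\sigma_{-I_j})\,A_{\cdot I_j}^\top\bigr) \ge c\,\|A_{\cdot I_j}\|_F^2.
\]
Since $\sum_j \|A_{\cdot I_j}\|_F^2 = \|A\|_F^2$, choosing $I_j$ to maximize $\|A_{\cdot I_j}\|_F^2$ gives $\|A_{\cdot I_j}\|_F^2 \ge \|A\|_F^2/\ell$, losing only a logarithmic factor.

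For the concentration step, conditional on $\sigma_{-I_j}$ the variables $\sigma_{I_j}$ form a weakly-dependent Ising sample, so the conditional Hanson-Wright bound (as in \eqref{eq:modified-lem8} specialized to $\beta=1$) gives
\[
\Pr\Bigl[\bigl|\|A\sigma+h\|_2^2 - \E[\|A\sigma+h\|_2^2\mid\sigma_{-I_j}]\bigr|\ge t\,\Big|\,\sigma_{-I_j}\Bigr] \le 2\exp\Bigl(-c\min\bigl(\tfrac{t^2}{\|A\|_F^2\|A\|_2^2},\,\tfrac{t}{\|A\|_2^2}\bigr)\Bigr).
\]
Setting $t = c\|A_{\cdot I_j}\|_F^2/2$ produces a tail of $\exp(-c'\|A\|_F^2/\|A\|_2^2)$, with the implicit constant absorbing the $\log n$ factor lost in the block-selection step.

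The main obstacle is justifying the conditional covariance lower bound $\mathrm{Cov}(\sigma_{I_j}\mid\sigma_{-I_j})\succeq c\cdot \mathrm{Id}$. This requires constructing blocks $I_j$ on which the conditional Ising distribution is effectively in the Dobrushin regime; the greedy block-construction of \cite{dagan2020estimating}, which exploits $\|A\|_\infty=O(1)$ and $|\beta^*|\le B$ to bound in-block degrees, is designed precisely for this purpose, and in addition supplies the conditional Hanson-Wright bound used above. A secondary technicality is obtaining the scaling $(\|A\|_F^2\|A\|_2^2,\,\|A\|_2^2)$ in the exponent rather than the coarser McDiarmid-type denominator $\|A\|_F^2$, since the former is what yields $\exp(-c\|A\|_F^2/\|A\|_2^2)$ at the critical $t=\Theta(\|A\|_F^2)$; this is the step where Chatterjee's exchangeable-pair machinery (Theorem~\ref{thm:exchangeable-pairs}), rather than naive bounded-differences, is essential.
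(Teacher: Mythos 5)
Your overall toolkit (condition on a block $I_j$, lower-bound the conditional second moment via \eqref{eq:modified-lem-7}-type bounds, then apply the conditional concentration inequality \eqref{eq:modified-lem8}) is the right family of ideas, and it is indeed what the paper has in mind: its own ``proof'' is a one-line deferral to the second part of Lemma~4 of \cite{dagan2020estimating}. However, your reduction as written does not prove the stated lemma, and the failure point is concrete: after partitioning into $\ell = O(\log n)$ blocks you keep only the best single block, so your conditional second-moment bound is $\E[\|A\sigma+h\|_2^2\mid\sigma_{-I_j}] \gtrsim \|A_{\cdot I_j}\|_F^2 \ge \|A\|_F^2/\ell$, and you then set $t = \Theta(\|A_{\cdot I_j}\|_F^2) = \Theta(\|A\|_F^2/\log n)$ in the Hanson--Wright tail. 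The claim that the lost $\log n$ ``is absorbed into the constant'' is not legitimate here: by the paper's conventions the constants $c, C$ may depend (exponentially) on $M$ and $B$ but must be independent of $n$. What your argument actually yields is the event $\|A\sigma+h\|_2 \gtrsim \|A\|_F/\sqrt{\log n}$ with failure probability $\exp\bigl(-c\min(\|A\|_F^2/(\log^2 n\,\|A\|_2^2),\,\|A\|_F^2/(\log n\,\|A\|_2^2))\bigr)$, i.e.\ both the deviation level and the exponent are degraded by powers of $\log n$ relative to the statement $\Pr[\|A\sigma+h\|_2 \ge c\|A\|_F] \ge 1-\exp(-c\|A\|_F^2/\|A\|_2^2)$. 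Note that in the downstream application (Lemma~\ref{lem:bnd-by-Forb}) the only $\log n$ appears as a prefactor $\log n\cdot\exp(-c\|A\|_F^2/\|A\|_2^2)$ coming from a union bound over the blocks, not inside the exponent and not in the deviation level; this is a signal that the intended argument does not proceed by sacrificing a $1/\ell$ fraction of the Frobenius mass to a single block.

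So the genuine gap is the step that converts per-block information into the full $\|A\|_F^2$ without any $n$-dependent loss. The construction of $I_1,\dots,I_\ell$ in \cite{dagan2020estimating} and the way the events over blocks are combined (compare the structure of the events $E_j$ and the dichotomy \eqref{eq:932}/\eqref{eq:33} in the proof of Lemma~\ref{lem:hessian-oneparam}) are engineered precisely so that the conclusion carries $c\|A\|_F^2$ with exponent $c\|A\|_F^2/\|A\|_2^2$, and your write-up does not reconstruct that mechanism; treating the blocks as an arbitrary partition and maximizing $\|A_{\cdot I_j}\|_F^2$ is provably insufficient (e.g.\ when the Frobenius mass is spread evenly over the blocks). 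A secondary, more cosmetic inaccuracy: the quadratic (Hanson--Wright-type) conditional concentration you need is the analogue of Lemma~8 of \cite{dagan2020estimating} (i.e.\ \eqref{eq:modified-lem8}), not Chatterjee's exchangeable-pairs bound of Theorem~\ref{thm:exchangeable-pairs}, which in this paper is only used for linear statistics (Lemma~\ref{lem:conc-from-cha}); this does not affect correctness of the plan, but the $\log$-loss issue above does.
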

\begin{proof}[Sketch.]
This is analogous to the second part of Lemma~4 in \cite{dagan2020estimating}.
\end{proof}

\subsection{Proof of Lemma~\ref{lem:C-to-Forb}} \label{sec:ub-simpler}

Notice the following:
first,
\[
\sqrt{\psi(h,\beta;h^*,\beta^*)}
\ge |\beta-\beta'|\|A\|_F
\]
and secondly,
\begin{multline*}
\sqrt{\psi(h,\beta;h^*,\beta^*)}
\ge \lp\|h - h^* + (\beta-\beta^*) A \vtanh\lp(\frac{\beta^*}{\beta-\beta^*}(h^* - h) + h^*\rp) \rp\|\\
\ge \|h-h^*\| - |\beta-\beta^*| \|A\|_2 \lp\| \vtanh\lp(\frac{\beta'}{\beta-\beta'}(h^* - h) + h^*\rp)\rp\|_2\\
\ge \|h-h^*\| - |\beta-\beta^*| \|A\|_2 \sqrt{n}
\ge \|h-h^*\| - |\beta-\beta^*|\sqrt{n}\enspace,
\end{multline*}
using the fact that $|\tanh(a)| \le 1$ for all $a \in \mathbb{R}$ and that $\|A\|_2 \le \|A\|_\infty \le 1$, where the first inequality is due to the symmetricity of $A$ and the second is due to the assumption in this theorem that $\|A\|_\infty \le 1$.

Divide into cases: if
\[
\sqrt{n} |\beta-\beta^*| \le \|h-h^*\|_2,
\]
then
\[
\sqrt{\psi(h,\beta;h^*,\beta^*)} \ge \|h-h^*\|/2,
\]
which implies that
\[
\frac{\|h-h^*\|^2/n}{\psi(h,\beta;h^*,\beta^*)}
\ge \frac{1}{4n} \ge \frac{1}{4\|A\|_F^2},
\]
using the fact that $\|A\|_F \le \sqrt{n} \|A\|_2 \le \sqrt{n}$.
For the second case, if 
\[
\sqrt{n} |\beta-\beta^*| \ge \|h-h^*\|/2,
\]
then
\[
\sqrt{\psi(h,\beta;h^*,\beta^*)} \ge |\beta-\beta^*| \|A\|_F
\ge \|h-h^*\|_2\|A\|_F /2\sqrt{n}.
\]
Hence,
\[
\frac{\|h-h^*\|^2/n}{\psi(h,\beta;h^*,\beta^*)} \ge \frac{1}{4\|A\|_F^2}.
\]
This holds for any $h,\beta$, hence
\[
\mathcal{C}_1(\mathcal{H},h^*,\beta^*)
\le \mathcal{C}_1'(\mathcal{H},h^*,\beta^*)
\le \frac{1}{4\|A\|_F^2},
\]
as required.
\section{Lower bound}\label{sec:lowerbound}
We start with proving the lower bound in Theorem~\ref{thm:costis:general upper} and then prove Theorem~\ref{thm:costis:lowerbound}.

\subsection{Proof of the lower bound in Theorem~\ref{thm:costis:general upper}}

As stated in Section~\ref{sec:ising-formulation}, it suffices to prove Theorem~\ref{thm:general-lower}. 
The first step is to find a suitable upper bound for the KL-divergence between two Ising models in high temperature. Ideally,
we would like this bound to depend on the
difference in $\beta$ and the difference in $h$. 
This is achieved with the following Lemma.

\begin{lemma}\label{l:klbound}
Suppose $h_0,h_1,\beta_0,\beta_1$ are such that $\|h_0\|_\infty,\|h_1\|_\infty \le M$ and $|\beta_0|\|A\|_\infty, |\beta_1|\|A\|_\infty \le 1-\alpha$ for some $\alpha \in (0,1)$.  Then, there is a constant $C = C(\alpha,M)$ such that
$$
D_{KL}(P_{h_0,\beta_0}\|P_{h_1,\beta_1}) \leq C (\beta_1-\beta_0)^2\|A\|_F^2 + \|(\beta_1-\beta_0)\E_{h_0,\beta_0}[A\sigma] + h_1 - h_0\|_2^2.
$$
\end{lemma}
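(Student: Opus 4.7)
The plan is to exploit the exponential-family structure of the Ising model. Setting $\phi(h,\beta) := \log Z_{h,\beta}$, which is convex in the natural parameters, the KL divergence equals the Bregman divergence of $\phi$:
\[
D_{\mathrm{KL}}(P_{h_0,\beta_0}\|P_{h_1,\beta_1}) = \phi(h_1,\beta_1) - \phi(h_0,\beta_0) - \bigl\langle \nabla \phi(h_0,\beta_0),\,(\Delta h,\Delta\beta) \bigr\rangle,
\]
where $\Delta h := h_1 - h_0$ and $\Delta\beta := \beta_1 - \beta_0$. Applying Taylor's theorem with integral remainder and the standard exponential-family identity $\nabla^2\phi(h,\beta) = \mathrm{Cov}_{h,\beta}(\sigma,\sigma^\top A\sigma)$ gives
\[
D_{\mathrm{KL}} = \int_0^1 (1-t)\,\mathrm{Var}_{\theta_t}\bigl(\Delta h^\top\sigma + \Delta\beta\,\sigma^\top A\sigma\bigr)\,dt,
\]
where $\theta_t$ interpolates linearly between $(h_0,\beta_0)$ and $(h_1,\beta_1)$. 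Since the high-temperature region $\{|\beta|\|A\|_\infty \le 1-\alpha,\ \|h\|_\infty \le M\}$ is convex, every $\theta_t$ inherits the hypothesis, and the task reduces to bounding $\sup_{t\in[0,1]} \mathrm{Var}_{\theta_t}(Y)$ with $Y := \Delta h^\top\sigma + \Delta\beta\,\sigma^\top A\sigma$ by the advertised right-hand side.

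The algebraic heart is to decompose $Y$ by centering the quadratic form at the base-point mean $m_0 := \E_{h_0,\beta_0}[\sigma]$. Using the identity $\sigma^\top A\sigma = m_0^\top A m_0 + 2 m_0^\top A(\sigma-m_0) + (\sigma-m_0)^\top A(\sigma-m_0)$, I rewrite $Y$, up to a $\sigma$-independent constant, as the sum of a linear statistic $(\Delta h + 2\Delta\beta\,Am_0)^\top \sigma$ and a centered quadratic statistic $\Delta\beta\,(\sigma-m_0)^\top A(\sigma-m_0)$, and then split the variance using $\mathrm{Var}(U+V) \le 2(\mathrm{Var}(U)+\mathrm{Var}(V))$. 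The linear piece is controlled by $\|\mathrm{Cov}_{\theta_t}(\sigma)\|_{\mathrm{op}} \cdot \|\Delta h + 2\Delta\beta\,Am_0\|_2^2$, where the operator-norm factor is bounded by $C(\alpha)$ throughout the Dobrushin regime via a standard Glauber-coupling argument. The centered quadratic piece is controlled by $C(\alpha,M)\,\Delta\beta^2 \|A\|_F^2$ via Chatterjee's method of exchangeable pairs applied to a quadratic test function, in direct analogy with Lemma~\ref{lem:conc-from-cha}.

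The main obstacle is reconciling two small but genuine discrepancies between what the natural decomposition yields and what the lemma asserts: (i) the factor $2\Delta\beta Am_0$ produced by the centered expansion versus the factor $\Delta\beta Am_0$ in the statement, and (ii) the mismatch between centering at $m_0$ and at the running mean $m_t$ when computing the variance under $P_{\theta_t}$. Both are handled by Dobrushin-Lipschitz estimates that hold uniformly on the high-temperature segment: contractivity of the Glauber updates at rate $1-\alpha$ yields $\|m_t - m_0\|_\infty \le C(\alpha,M)(\|\Delta h\|_\infty + |\Delta\beta|)$, which in turn keeps $\|A(m_t - m_0)\|_2$ controlled when multiplied by $|\Delta\beta|$, enabling the absorption of correction terms into the $\Delta\beta^2\|A\|_F^2$ and $\|\Delta h + \Delta\beta Am_0\|_2^2$ budgets on the right-hand side. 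Assembling the linear and quadratic bounds and integrating over $t \in [0,1]$ yields the claimed inequality.
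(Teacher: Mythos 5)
Your overall architecture (exponential-family Taylor expansion, reducing the KL to the variance of the statistic $\Delta h^\top\sigma+\Delta\beta\,\sigma^\top A\sigma$ at an interpolated parameter) coincides with the paper's first step. The paper then finishes in one stroke by citing a known variance inequality for quadratic forms under Dobrushin's condition (\cite{adamczak2019note}, Theorem~2.2), which directly yields a bound of the form $C\|A'\|_F^2+C\|\E[A'\sigma+h']\|_2^2$ with the coefficient-one linear combination; you instead try to reprove that variance bound by hand (operator norm of the covariance for the linear part, exchangeable pairs for the centered quadratic part). The genuine gap is in how you propose to reconcile your decomposition with the stated right-hand side. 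Centering $\sigma^\top A\sigma$ at $m_0$ leaves you with the linear statistic $(\Delta h+2\Delta\beta Am_0)^\top\sigma$, and your claim that the extra $\Delta\beta Am_0$ (and likewise the $m_t$-versus-$m_0$ correction) can be ``absorbed'' into the budgets $\Delta\beta^2\|A\|_F^2$ and $\|\Delta h+\Delta\beta Am_0\|_2^2$ via Dobrushin--Lipschitz estimates is false in general: $\|Am_0\|_2$ can be of order $\sqrt{n}$ while $\|A\|_F=O(1)$. Concretely, for the Curie--Weiss matrix $A=\tfrac1n\mathbf{1}\mathbf{1}^\top$ with a uniform external field one has $m_0\approx m^*\mathbf{1}$ with $m^*$ a nonzero constant, so choosing $\Delta h=-\Delta\beta Am_0$ makes the lemma's right-hand side $O(\Delta\beta^2)$, whereas $\|\Delta h+2\Delta\beta Am_0\|_2^2=\Delta\beta^2\|Am_0\|_2^2=\Theta(n\Delta\beta^2)$; the same obstruction kills your proposed correction for the running mean, since $\|m_t-m_0\|_\infty\lesssim\|\Delta h\|_\infty+|\Delta\beta|$ only gives $\|A(m_t-m_0)\|_2\lesssim\sqrt{n}(\|\Delta h\|_\infty+|\Delta\beta|)$, which cannot be charged to $|\Delta\beta|\,\|A\|_F$.

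Two repairs are needed (and neither is the absorption you describe). First, the factor $2$ is a normalization issue, not a harmless bookkeeping one: the coefficient-one statement corresponds to the convention in which the conditional mean is $\tanh(\beta(A\sigma)_i+h_i)$, i.e.\ effectively a density $\propto\exp(\tfrac{\beta}{2}\sigma^\top A\sigma+h^\top\sigma)$ (this is the convention implicit in the paper's MPLE formula and in the definition of $\psi$); with that normalization your centered decomposition produces exactly $\Delta h+\Delta\beta Am$ and no factor $2$ ever appears --- under the literal $\exp(\beta\sigma^\top A\sigma+h^\top\sigma)$ normalization the lemma's combination would genuinely carry a $2$. Second, replacing $m_t$ by $m_0$ requires an $\ell_2$ mean-comparison estimate along the path, e.g.\ $\|m_t-m_0\|_2\lesssim\alpha^{-1}\|t\Delta h+t\Delta\beta Am_0\|_2+Ct|\Delta\beta|\,\|A\|_F$, so that $|\Delta\beta|\,\|A\|_2\|m_t-m_0\|_2$ is absorbable; a crude $\ell_\infty$ contraction bound is not enough. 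Finally, note that your quadratic-part claim, $\mathrm{Var}_{\theta_t}\bigl((\sigma-m)^\top A(\sigma-m)\bigr)\lesssim\|A\|_F^2$ in the Dobrushin regime, is essentially the content of the cited theorem itself; it does not follow by a routine rerun of the paper's linear exchangeable-pairs lemma (Lemma~\ref{lem:conc-from-cha}), so as written your proof either needs that theorem as a black box (at which point the paper's shorter route is available) or a substantial additional argument.
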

\begin{proof}
In the following computations, the concept of the log partition function will be useful. We thus define 
$$
F(h,\beta) = \ln \lp(\frac{1}{2^n}\sum_{y \in \{-1,+1\}^n}\exp\lp(\frac{\beta}{2} y^\top Ay\rp)\rp)
$$
which is the log partition function of a model with interaction matrix $\beta A$ and external field $h$. Also, 
we also define for $t \in [0,1]$,
\[
h_t = (1-t)h_0 + t h_1; \quad
\beta_t = (1-t) \beta_0 + t \beta_1
\]
and notice that for $t \in \{0,1\}$, this definition coincides with $h_0,h_1,\beta_0,\beta_1$.
Some simple calculations show that
\begin{align*}
&\frac{d F(h_t,\beta_t)}{dt} = \E_{h_t,\beta_t}\lp[\frac{1}{2} (\beta_1-\beta_0) \sigma^\top A \sigma + \sigma^\top (h_1-h_0) \rp]\\
&\frac{d^2 F(h_t,\beta_t)}{dt^2} = \Var_{h_t,\beta_t}\lp[\frac{1}{2} (\beta_1-\beta_0) \sigma^\top A \sigma + \sigma^\top (h_1-h_0)\rp]
\end{align*}
where the expectation and variance are over $\sigma\sim \Pr_{h_t,\beta_t}$
Now, we do some simple calculations with the KL-divergence
\begin{equation}\label{eq:32}
\begin{split}
D_{KL}(P_{h_0,\beta_0}\|P_{h_1,\beta_1}) &= 
\E_{h_0,\beta_0} 
\ln \frac{P_{h_0,\beta_0}(\sigma)}{P_{h_1,\beta_1}(\sigma)}\\
&= \E_{h_0,\beta_0}
\lp[\frac{1}{2}(\beta_0-\beta_1)\sigma^\top A y + \sigma^\top (h_0-h_1) - F(h_0,\beta_0) + F(h_1,\beta_1)\rp]\\
&=F(h_1,\beta_1) - F(h_0,\beta_0) -  \frac{d F(h_t,\beta_t)}{d_t}\bigg|_{t=0} \\
&= \frac{1}{2} \frac{d^2 F(h_t,\beta_t)}{d_t^2}\bigg|_{t=\xi}\\
&= \Var_{h_\xi,\beta_\xi}\lp[\frac{1}{2} (\beta_1-\beta_0) \sigma^\top A \sigma + \sigma^\top (h_1-h_0)\rp]
\end{split}
\end{equation}
for some $\xi \in [0, 1]$, using Taylor's approximation on $F(h_t,\beta_t)$.
Hence, it suffice to bound this variance in order to obtain a bound on the KL-divergence. This is a variance of a quadratic polynomial of the Ising model, which can be bounded by the following theorem:
\begin{theorem}[\cite{adamczak2019note}, Theorem~2.2]\label{thm:var-bound}
Let $A$ be an interaction matrix of dimension $n\times n$, let $\beta\in \mathbb{R}$ such that $\|\beta A\|_\infty \le 1-\alpha$ for some $\alpha \in (0,1)$ and let $h \in [-M,M]^n$ for some $M>0$. Let $A'$ be an $n\times n$ real matrix and let $h' \in \mathbb{R}^n$. Then, there exists $C=C(\alpha,M)>0$ such that
\[
\Var_{h,\beta}\lp[\frac{1}{2}\sigma^\top A'\sigma + \sigma^\top h\rp]
\le C \|A'\|_F^2 + C \|\E_{h,\beta}[A\sigma + h]\|_2^2.
\]
\end{theorem}

We can apply this theorem to bound the right hand side of \eqref{eq:32}, since the assumptions of this lemma guarantee that $\|\beta_t A\|_\infty \le \max(\|\beta_0 A\|_\infty, \|\beta_1 A\|_\infty) \le 1-\alpha$, and the result follows.
\end{proof}
Now we will try to connect this bound on the KL-divergence with the rate of the upper bound. In the proof of Theorem~\ref{t:general}, in order to obtain a good upper bound on the rate, we lower bounded $\|h^* - h - (\beta- \beta^*)A\sigma\|_2^2$ by a nonrandom quantity(Lemma~\ref{l:substitute}. 
In the following Lemma, we essentially show that this substitution is tight in the high temperature regime, which means there is a corresponding upper bound of the
same order. The proof involves a similar trick as in Lemma~\ref{l:substitute}, by bounding each term of the triangle inequality separately.

\begin{lemma}
Suppose we have an Ising model with $\beta^*,h^*$ satisfying $\beta^*\|A\|_\infty \le 1-\alpha$ for some $\alpha \ge 0$ and let $h,\beta$ such that $|\beta|,\|h\|_\infty \le M$. Then, there exists some $C=C(\alpha,M)>0$, such that
$$
\|h^* - h - (\beta-\beta^*) \Exp_{h^*,\beta^*} A\sigma\| \le C \lp\|h^* - h - (\beta-\beta^*) A \tanh\lp(\frac{\beta^*}{\beta-\beta^*}(h^*-h) + h^*\rp)\rp\|
+ C\|(\beta-\beta^*) A\|_F.
$$
\end{lemma}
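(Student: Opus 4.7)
The plan is to apply the triangle inequality twice and exploit the tower property $\E_{h^*,\beta^*}[\sigma_i \mid \sigma_{-i}] = \tanh(\beta^*(A\sigma)_i + h_i^*)$ together with a linear-functional variance bound for Ising models at high temperature. Throughout, write $\tilde\beta = \beta - \beta^*$, $g = h^* - h$, $m = \E_{h^*,\beta^*}[\sigma]$, and introduce the two deterministic vectors
\[
v = \vtanh\lp(\frac{\beta^*}{\tilde\beta} g + h^*\rp), \qquad \mu = \vtanh(\beta^* A m + h^*).
\]
The quantity to bound is $\|g - \tilde\beta A m\|$ and the first term on the right-hand side of the lemma is exactly $\|g - \tilde\beta A v\|$.

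First I would use the triangle inequality
\[
\|g - \tilde\beta A m\| \le \|g - \tilde\beta A v\| + |\tilde\beta|\, \|A v - A m\|
\le \|g - \tilde\beta A v\| + |\tilde\beta|\, \|A (v - \mu)\| + |\tilde\beta|\, \|A(\mu - m)\|.
\]
For the term $\|A(v-\mu)\|$, by $1$-Lipschitzness of $\tanh$,
\[
\|v - \mu\| \le |\beta^*|\,\lp\|\tfrac{1}{\tilde\beta} g - A m\rp\| = \frac{|\beta^*|}{|\tilde\beta|}\, \|g - \tilde\beta A m\|,
\]
so that $|\tilde\beta|\,\|A(v-\mu)\| \le |\beta^*|\,\|A\|_2\, \|g - \tilde\beta A m\| \le (1-\alpha)\,\|g - \tilde\beta A m\|$, since $A$ is symmetric and hence $\|A\|_2 \le \|A\|_\infty$, and $|\beta^*|\|A\|_\infty \le 1-\alpha$ by hypothesis.

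The hard part is bounding $\|A(\mu - m)\|$ by $C(\alpha,M)\|A\|_F$. Using the tower identity $m_i = \E[\tanh(\beta^*(A\sigma)_i + h_i^*)]$, together with $1$-Lipschitzness of $\tanh$ and Jensen's inequality,
\[
|m_i - \mu_i| = \big|\E[\tanh(\beta^*(A\sigma)_i + h_i^*) - \tanh(\beta^*(Am)_i + h_i^*)]\big|
\le |\beta^*|\,\sqrt{\Var\big(A_i \sigma\big)}.
\]
Applying Theorem~\ref{thm:var-bound} row by row to the linear functional $\sigma \mapsto A_i \sigma$ (with $A' = 0$ and $h' = A_{i,\cdot}^\top$) yields $\Var(A_i \sigma) \le C(\alpha,M)\, \|A_{i,\cdot}\|_2^2$. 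Summing over $i$ gives $\|\mu - m\|^2 \le C(\alpha,M)(\beta^*)^2 \|A\|_F^2$, and therefore $\|A(\mu - m)\| \le \|A\|_2\|\mu - m\| \le C(\alpha,M)\|A\|_F$.

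Putting the three pieces together,
\[
\|g - \tilde\beta A m\| \le \|g - \tilde\beta A v\| + (1-\alpha)\,\|g - \tilde\beta A m\| + C(\alpha,M)\,\|\tilde\beta A\|_F,
\]
and rearranging with the remaining factor $\alpha$ on the left gives the claim with constant $C(\alpha,M)/\alpha$. The only non-routine ingredient is verifying that Theorem~\ref{thm:var-bound} applies cleanly to each linear functional $A_i \sigma$ and that the row-wise bounds aggregate to the Frobenius norm; all other steps are triangle-inequality and Lipschitz-of-$\tanh$ manipulations identical in spirit to those in Lemma~\ref{lem:bnd-by-mean}, but run in the reverse (deterministic) direction.
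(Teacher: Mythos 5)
Your proposal is correct and follows essentially the same route as the paper: the identical three-term triangle-inequality decomposition (with $v$ and $\mu$ exactly matching the paper's intermediate $\tanh$ vectors via the tower identity $\E[A\sigma]=\E[A\vtanh(\beta^*A\sigma+h^*)]$), the Lipschitz bound absorbing a $(1-\alpha)$ fraction of the target into the left-hand side, and the row-wise application of Theorem~\ref{thm:var-bound} aggregated to $\|A\|_F$. The only cosmetic difference is that you apply Jensen coordinate-wise to $|m_i-\mu_i|$ while the paper applies it at the level of the vector norm, which is an equivalent step.
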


\begin{proof}

Denote $\tilde{\beta} = \beta-\beta^*$. We have
\begin{multline}\label{eq:21}
\|h^* - h - (\beta-\beta^*) \Exp A\sigma\| 
 \le
\\
\lp\|h^* - h - \tilde{\beta} A \tanh\lp(\frac{\beta^*}{\tilde{\beta}}(h^* - h) + h^*\rp)\rp\|
+ \lp\| \tilde{\beta}A\tanh(\beta^*\Exp[A\sigma] + h^*)
- \tilde{\beta}A\tanh\lp(\frac{\beta^*}{\tilde{\beta}}(h^* - h) + h^*\rp)\rp\|
\\
+ \lp\|\tilde{\beta} \Exp\lp[A\tanh(\beta^*A\sigma + h^*)\rp]
- \tilde{\beta}A\tanh(\beta^*\Exp[A\sigma] + h^*)\rp\|
\end{multline}
We bound the right hand side of \eqref{eq:21}. The second term is bounded as follows, using the fact that $\tanh$ is a 1-Lipschitz function:
\begin{multline*}
\lp\| \tilde{\beta}A\tanh(\beta^*\Exp[A\sigma] + h^*)
- \tilde{\beta}A\tanh\lp(\frac{\beta^*}{\tilde{\beta}}(h^* - h) + h^*\rp)\rp\|_2\\
\le |\tilde{\beta}|\|A\|_2 \lp\| \tanh(\beta^*\Exp[A\sigma] + h^*)
- \tanh\lp(\frac{\beta^*}{\tilde{\beta}}(h^* - h) + h^*\rp)\rp\|_2\\
\le |\tilde{\beta}|\|A\|_2 \lp\| \beta^*\Exp[A\sigma] + h^*
- \lp(\frac{\beta^*}{\tilde{\beta}}(h^* - h) + h^*\rp)\rp\|_2\\
= |\beta^*|\|A\|_2 \lp\| \tilde{\beta}\Exp[A\sigma] -(h^* -h) \rp\|_2
= |\beta^*|\|A\|_2 \lp\| h^*-h- \tilde{\beta}\Exp[A\sigma] \rp\|_2.
\end{multline*}
Further, the last term of \eqref{eq:21} can be bounded as follows, using Jenssen's inequality and the Lipschitzness of $\tanh$:
\begin{multline}\label{eq:38}
    \lp\|\tilde{\beta} \Exp\lp[A\tanh(\beta^*A\sigma + h^*)\rp]
- \tilde{\beta}A\tanh(\beta^*\Exp[A\sigma] + h^*)\rp\|\\
\le |\tilde{\beta}|\|A\|_2
\lp\|\Exp\lp[\tanh(\beta^*A\sigma + h^*)\rp]
- \tanh(\beta^*\Exp[A\sigma] + h^*)\rp\|_2\\
=|\tilde{\beta}|\|A\|_2 \sqrt{\lp\|\Exp\lp[\tanh(\beta^*A\sigma + h^*)
- \tanh(\beta^*\Exp[A\sigma] + h^*)\rp]\rp\|_2^2}\\
\le |\tilde{\beta}|\|A\|_2\sqrt{\Exp\lp[\lp\|\tanh(\beta^*A\sigma + h^*)
- \tanh(\beta^*\Exp[A\sigma] + h^*)\rp\|_2^2\rp]}\\
\le|\tilde{\beta}|\|A\|_2\sqrt{\Exp\lp[\lp\|\beta^*A\sigma + h^*
- (\beta^*\Exp[A\sigma] + h^*)\rp\|_2^2\rp]}\\
= |\tilde{\beta}|\|A\|_2|\beta^*|\sqrt{\Exp\lp[\lp\|A\sigma -\Exp[A\sigma]\rp\|_2^2\rp]}
= |\tilde{\beta}|\|A\|_2|\beta^*|\sqrt{\sum_{i=1}^n \mathrm{Var}(A_i\sigma)},
\end{multline}
where $A_i$ is row $i$ of $A$. Using Theorem~\ref{thm:var-bound}, each variance is bounded by $C\|A_i\|_2^2$, and the right hand side of \eqref{eq:38} is bounded by 
\[
C|\tilde{\beta}|\|A\|_2|\beta^*|\sqrt{\sum_i \|A_i\|_2^2} \le C|\tilde{\beta}|\|A\|_2|\beta^*| \|A\|_F \le C|\tilde{\beta}|\|A\|_F.
\]
since $\|A_2\||\beta^*| \le 1$. Combining the above calculations, we derive that
\[
\|h^* - h - \tilde{\beta} \Exp A\sigma\| 
 \le
\lp\|h^* - h - \tilde{\beta} A \tanh\lp(\frac{\beta^*}{\tilde{\beta}}(h^* - h) + h^*\rp)\rp\|
+ |\beta^*|\|A\|_2 \|h^* - h - \tilde{\beta} \Exp A\sigma\| + C|\tilde{\beta}|\|A\|_F.
\]
Since $|\beta^*|\|A\|_2 < 1$, we derive that
\[
\|h^* - h - \tilde{\beta} \Exp A\sigma\| \le \frac{C}{1-|\beta^*|\|A\|_2} \lp(\lp\|h^* - h - \tilde{\beta} A \tanh\lp(\frac{\beta^*}{\tilde{\beta}}(h^* - h) + h^*\rp)\rp\| + |\tilde{\beta}|\|A\|_F \rp)
\]
\end{proof}

As a corollary of the two lemmas above, we derive a bound for the KL-divergence between two Ising models in terms of a familiar quantity that was used in the upper bound of the rate.
\begin{lemma}\label{lem:kl-lb}
Suppose we have an Ising model with $\beta_0,h_0$ satisfying $\beta_0\|A\|_\infty \le 1-\alpha$ for some $\alpha \ge 0$ and let $h_1,\beta_1$ such that $|\beta_1|,\|h_1\|_\infty \le M$. Then, there exists some $C=C(\alpha,M)>0$ such that
\begin{align*}
D_{KL}(P_{h_0,\beta_0}\|P_{h_1,\beta_1})
&\le C \lp\|h_0 - h_1 - (\beta_1-\beta_0) A \tanh\lp(\frac{\beta_0}{\beta_1-\beta_0}(h_0-h_1) + h_0\rp)\rp\|_2^2
+ C\|(\beta_1-\beta_0) A\|_F^2\\
&=C \psi(h_1,\beta_1;h_0,\beta_0).
\end{align*}
\end{lemma}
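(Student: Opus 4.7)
The plan is to obtain Lemma~\ref{lem:kl-lb} as a direct combination of the two preceding results, namely Lemma~\ref{l:klbound} and the unnamed lemma immediately before Lemma~\ref{lem:kl-lb}. No new probabilistic or variational estimates will be required; the entire argument is a triangle-inequality-style assembly.

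First, I would apply Lemma~\ref{l:klbound} with the pairs $(h_0,\beta_0)$ and $(h_1,\beta_1)$ (whose assumptions clearly match those of the current lemma) to obtain
\[
D_{KL}(P_{h_0,\beta_0}\|P_{h_1,\beta_1})
\;\le\; C(\beta_1-\beta_0)^2\|A\|_F^2
\;+\; \bigl\|h_0 - h_1 - (\beta_1-\beta_0)\,\Exp_{h_0,\beta_0}[A\sigma]\bigr\|_2^2,
\]
after rearranging signs inside the norm. The remaining task is to eliminate the random expectation $\Exp_{h_0,\beta_0}[A\sigma]$ in favour of the deterministic $\vtanh$-expression appearing in the definition of $\psi$.

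Next, I would invoke the preceding lemma with $(h^*,\beta^*)=(h_0,\beta_0)$ and $(h,\beta)=(h_1,\beta_1)$, which yields
\[
\bigl\|h_0 - h_1 - (\beta_1-\beta_0)\Exp_{h_0,\beta_0}[A\sigma]\bigr\|
\;\le\; C\,\Bigl\|h_0 - h_1 - (\beta_1-\beta_0)A\,\vtanh\!\Bigl(\tfrac{\beta_0}{\beta_1-\beta_0}(h_0-h_1)+h_0\Bigr)\Bigr\|
\;+\; C\,\|(\beta_1-\beta_0)A\|_F.
\]
Squaring both sides and applying the elementary bound $(a+b)^2\le 2a^2+2b^2$ converts this into an upper bound on $\|h_0-h_1-(\beta_1-\beta_0)\Exp_{h_0,\beta_0}[A\sigma]\|_2^2$ of exactly the two forms that appear in the definition of $\psi(h_1,\beta_1;h_0,\beta_0)$.

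Substituting this squared bound into the inequality from Lemma~\ref{l:klbound} and absorbing the $(\beta_1-\beta_0)^2\|A\|_F^2$ terms into a single constant $C=C(\alpha,M)$ gives
\[
D_{KL}(P_{h_0,\beta_0}\|P_{h_1,\beta_1})
\;\le\; C\,\|(\beta_1-\beta_0)A\|_F^2
\;+\; C\,\Bigl\|h_0-h_1-(\beta_1-\beta_0)A\,\vtanh\!\Bigl(\tfrac{\beta_0}{\beta_1-\beta_0}(h_0-h_1)+h_0\Bigr)\Bigr\|_2^2,
\]
which by definition of $\psi$ in \eqref{eq:defpsi} is exactly $C\,\psi(h_1,\beta_1;h_0,\beta_0)$, completing the proof. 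The only potential obstacle is bookkeeping of the constant $C(\alpha,M)$ across the two invocations and the squaring step, but since both prior lemmas already carry such a constant depending singly-exponentially on $M$ and on $1/\alpha$, no new dependence is introduced.
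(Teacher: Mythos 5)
Your proposal is correct and is essentially the paper's own argument: the paper states Lemma~\ref{lem:kl-lb} precisely as a corollary obtained by chaining Lemma~\ref{l:klbound} with the unnamed mean-field lemma preceding it, exactly as you do. Your squaring step with $(a+b)^2\le 2a^2+2b^2$ and the absorption of constants is the only bookkeeping needed, and it introduces no new dependence beyond $C(\alpha,M)$.
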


Next, we add another auxiliary lemma:
\begin{lemma}\label{lem:scale-psi}
Let $\beta_0,\beta_1 \in \mathbb{R}$ and $h_0,h_1 \in \mathbb{R}^n$. Define for any $t \in \mathbb{R}$,
\[
(h_t,\beta_t) = (h_0,\beta_0) + t (h_1-h_0,\beta_1-\beta_0).
\]
Then, for any $t \in \mathbb{R}$,
\[
\psi(h_t,\beta_t;h_0,\beta_0)
= t^2 \psi(h_1,\beta_1;h_0,\beta_0).
\]
Consequently, for any $t \ne 0$,
\[
\frac{\|h_t-h_0\|^2}{\psi(h_t,\beta_t;h_0,\beta_0)}
= \frac{\|h_1-h_0\|^2}{\psi(h_1,\beta_1;h_0,\beta_0)}.
\]
\end{lemma}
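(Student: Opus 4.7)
The proof is a direct algebraic verification that hinges on one convenient cancellation. The plan is to expand $\psi(h_t,\beta_t;h_0,\beta_0)$ term-by-term using the parametrization $h_t - h_0 = t(h_1 - h_0)$ and $\beta_t - \beta_0 = t(\beta_1 - \beta_0)$, and check that every piece factors through $t^2$.

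First, the Frobenius-norm term: $(\beta_t - \beta_0)^2 \|A\|_F^2 = t^2 (\beta_1-\beta_0)^2 \|A\|_F^2$, which is already a factor $t^2$ times the corresponding term in $\psi(h_1,\beta_1;h_0,\beta_0)$. The main observation, and essentially the only non-trivial step, concerns the argument of $\vtanh$ appearing in the second term of $\psi$. We have
\[
\frac{\beta_0}{\beta_t-\beta_0}(h_0 - h_t) + h_0
= \frac{\beta_0}{t(\beta_1-\beta_0)} \cdot \bigl(-t(h_1-h_0)\bigr) + h_0
= \frac{\beta_0}{\beta_1-\beta_0}(h_0 - h_1) + h_0,
\]
so the $t$'s cancel and the argument is independent of $t$. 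It coincides exactly with the argument appearing in $\psi(h_1,\beta_1;h_0,\beta_0)$.

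Consequently, the vector inside the squared norm in $\psi(h_t,\beta_t;h_0,\beta_0)$ equals
\[
t(h_1 - h_0) + t(\beta_1 - \beta_0) A\, \vtanh\!\left(\tfrac{\beta_0}{\beta_1-\beta_0}(h_0-h_1) + h_0\right),
\]
whose squared norm is $t^2$ times the squared norm appearing in $\psi(h_1,\beta_1;h_0,\beta_0)$. Summing the two contributions gives $\psi(h_t,\beta_t;h_0,\beta_0) = t^2\,\psi(h_1,\beta_1;h_0,\beta_0)$, which is the first claim. The second claim then follows immediately since $\|h_t - h_0\|^2 = t^2 \|h_1 - h_0\|^2$, so the $t^2$ factors cancel in the ratio whenever $t \neq 0$. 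No ``hard part'' arises here; the only thing to watch is the algebraic cancellation of $t$ inside $\vtanh$, which is what makes the statement work.
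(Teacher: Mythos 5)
Your proof is correct and follows the same route as the paper, which simply notes that $h_t-h_0 = t(h_1-h_0)$ and $\beta_t-\beta_0 = t(\beta_1-\beta_0)$ and leaves the cancellation of $t$ inside $\vtanh$ implicit; you have just spelled out that cancellation explicitly.
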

\begin{proof}
This follows from the fact that
\[
h_t-h_0
= t (h_1-h_0); \quad
\beta_t-\beta_0 = t(\beta_1-\beta_0).
\]
\end{proof}
To prove the lower bound, we use the following version of Le-Cam's method for binary hypothesis testing:
\begin{lemma}[Le-Cam]\label{lem:la-cam}
Let $(\Theta,d)$ be a metric space, let $\{ P_\theta \colon \theta \in \Theta\}$ be a family of distributions over the domain $\mathcal{X}$ let $\theta_0,\theta_1 \in \Theta$ and let $\hat{\theta} \colon \mathcal{X} \to \Theta$ be some estimator. Then, there exists $i \in \{0,1\}$ such that
\[
\Pr_{x\sim P_{\theta_i}}[d(\hat{\theta}(x), \theta_i) \ge d(\theta_0,\theta_1)/2]
\ge \frac{1- D_{TV}(P_{\theta_0}, P_{\theta_1})}{2}.
\]
\end{lemma}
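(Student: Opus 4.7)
}
The plan is to run the standard two-point Le-Cam argument. First I would define the ``success events'' $E_i = \{x \in \mathcal{X} \colon d(\hat\theta(x),\theta_i) < d(\theta_0,\theta_1)/2\}$ for $i\in\{0,1\}$, so that the probability we want to lower bound is exactly $\Pr_{P_{\theta_i}}[E_i^c]$. The key geometric observation is that $E_0$ and $E_1$ are disjoint: if $x\in E_0\cap E_1$ then the triangle inequality gives
\[
d(\theta_0,\theta_1) \le d(\theta_0,\hat\theta(x)) + d(\hat\theta(x),\theta_1) < d(\theta_0,\theta_1)/2 + d(\theta_0,\theta_1)/2 = d(\theta_0,\theta_1),
\]
a contradiction. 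Hence $E_0\subseteq E_1^c$, and equivalently $E_1\subseteq E_0^c$.

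Next I would use the standard total variation inequality: for any measurable event $A\subseteq \mathcal{X}$,
\[
\Pr_{P_{\theta_0}}[A^c] + \Pr_{P_{\theta_1}}[A] \ge 1 - D_{TV}(P_{\theta_0},P_{\theta_1}).
\]
Applying this with $A = E_1$ and combining with the containment $E_0^c \supseteq E_1$ derived above yields
\[
\Pr_{P_{\theta_0}}[E_0^c] + \Pr_{P_{\theta_1}}[E_1^c]
\;\ge\; \Pr_{P_{\theta_0}}[E_1] + \Pr_{P_{\theta_1}}[E_1^c]
\;\ge\; 1 - D_{TV}(P_{\theta_0},P_{\theta_1}).
\]
By averaging, at least one of the two terms on the left is at least $(1 - D_{TV}(P_{\theta_0},P_{\theta_1}))/2$, which is precisely the claim of the lemma.

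There is no real obstacle here; the only care needed is in handling the triangle inequality correctly to establish that $E_0$ and $E_1$ are disjoint (one must use a strict inequality in the definition of $E_i$, or equivalently work with $d(\hat\theta(x),\theta_i) \le d(\theta_0,\theta_1)/2 - \varepsilon$ and let $\varepsilon \to 0$, and then read the complementary event as $d(\hat\theta(x),\theta_i) \ge d(\theta_0,\theta_1)/2$ in the statement). Since the lemma is a standard tool and is only being invoked to derive Theorem~\ref{thm:general-lower} from the KL/TV bound in Lemma~\ref{lem:kl-lb}, no further ingredients beyond the triangle inequality and the TV characterization are required.
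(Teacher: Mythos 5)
Your proposal is correct, and it is the canonical two-point Le-Cam argument: disjointness of the two ``success'' balls via the triangle inequality, the total-variation inequality, and averaging. The paper itself states Lemma~\ref{lem:la-cam} without proof (it is invoked as a standard tool to derive Theorem~\ref{thm:general-lower} and Theorem~\ref{thm:costis:lowerbound}), so there is nothing to diverge from; the only nitpick is that the displayed inequality $\Pr_{P_{\theta_0}}[E_1] + \Pr_{P_{\theta_1}}[E_1^c] \ge 1 - D_{TV}(P_{\theta_0},P_{\theta_1})$ comes from taking $A = E_1^c$ (equivalently, swapping the roles of $\theta_0$ and $\theta_1$) rather than $A = E_1$ in the stated TV inequality, a harmless relabeling since the inequality holds for every measurable set.
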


We are now ready to present the proof of the Lower bound in Theorem~\ref{thm:costis:general upper}, which is also the proof of Theorem~\ref{thm:general-lower}. The trick is to use the bound of Lemma~\ref{lem:kl-lb} to select to Ising models that are as far as possible, while having $KL$-divergence bounded by a constant. Using Pinsker's inequality, this translates into a constant bound for the TV-distance between them, which implies that we cannot distinguish between the two models.
\begin{proof}
Let $(h_0,\beta_0)$ be the maximizer of $\mathcal{C}_1(\mathcal{H},h,\beta)$ over $(h,\beta) \in \mathcal{H} \times [-1/2,1/2]$. Let $(h_1,\beta_1)$ be
the maximizer in the definition of $\mathcal{C}_1(\mathcal{H},h_0,\beta_0)$.
Let $c_0\in (0,1)$ be a constant to be defined later. Define $(h_t,\beta_t)$ as in Lemma~\ref{lem:scale-psi}. And let $\xi$ be the maximal number, $t \in [0,1]$ such that the following holds: both
\begin{equation}\label{cond1}
t^2 \psi(h_1,\beta_1;h_0,\beta_0)
= \psi(h_t,\beta_t;h_0,\beta_0)
\le c_0,
\end{equation}
and
\begin{equation}\label{cond2}
|\beta_t - \beta_0| \le 1/2.
\end{equation}
Notice that there exists $\zeta \in \{-\xi,\xi\}$ such that $|\beta_{\zeta}| \le 1/2$: indeed, $|\beta_0| \le 1/2$, and $-1/2 \le \beta_0-\beta_\xi = -(\beta_0 - \beta_{-\xi}) \le 1/2$. Fix such $\zeta$ and notice that due to the assumption that $\mathcal{F}$ is convex and closed under complements, we have that $h_\zeta \in \mathcal{F}$.
Furthermore, by Lemma~\ref{lem:kl-lb},
\[
D_{KL}(\Pr_{h_\zeta,\beta_\zeta}~\|~\Pr_{h_0,\beta_0})
\le C \psi(h_\zeta,\beta_\zeta; h_0,\beta_0)
\le C c_0
\]
which implies by Pinsker's inequality, that
\[
D_{TV}(\Pr_{h_\zeta,\beta_\zeta},~\Pr_{h_0,\beta_0}) \le \sqrt{D_{KL}(\Pr_{h_\zeta,\beta_\zeta}~\|~\Pr_{h_0,\beta_0})/2}
\le \sqrt{Cc_0/2}.
\]
By Le-Cam's method, if $c_0$ is sufficiently small, then either
\[
\Pr_{h_\zeta,\beta_\zeta}[\|\hat h - h_\zeta\| \ge \|h_\zeta-h_0\|/2] \ge 0.49
\]
or
\[
\Pr_{h_0,\beta_0}[\|\hat h - h_0\| \ge \|h_\zeta-h_0\|/2] \ge 0.49.
\]
It remains to prove that
\[
\|h_\zeta-h_0\|^2/n \ge c\mathcal{C}_1(\mathcal{H},h_0,\beta_0) \ge c\mathcal{C}_1(\mathcal{H},h_\zeta,\beta_\zeta)
\]
for some constant $c>0$.
Notice that the second inequality follows from the fact that $h_0,\beta_0$ was chosen to maximize $\mathcal{C}_1$, hence it remains to prove the first inequality. By Lemma~\ref{lem:scale-psi}, one has
\[
\frac{\|h_\zeta-h_0\|^2}{\psi(h_\zeta,\beta_\zeta;h_0,\beta_0)}
= \frac{\|h_1-h_0\|^2}{\psi(h_1,\beta_1;h_0,\beta_0)}.
\]
If $\psi(h_1,\beta_1;h_0,\beta_0) \le 1$, then $\mathcal{C}_1(\mathcal{H},h_0,\beta_0) = \|h_0-h_1\|_2^2/n$, by definition of $\mathcal{C}_1$ and by the fact $(h_1,\beta_1)$ was chosen as the maximizer in its definition. Since $\psi(h_1,\beta_1;h_0,\beta_0) \le 1$, we can choose $\zeta$ to be at least a constant $c$, without violating the conditions\eqref{cond1}, \eqref{cond2}.  Hence
\[
\|h_\zeta-h_0\|^2/n
\ge c^2 \|h_1-h_0\|^2/n
= c^2 \mathcal{C}_1(\mathcal{H},h_0,\beta_0)
\]
as required.
Next, assume that $\psi(h_1,\beta_1;h_0,\beta_0) > 1$, which implies that 
\[
\mathcal{C}_1(\mathcal{H},h_0,\beta_0) = \frac{\|h_0-h_1\|_2^2/n}{\psi(h_1,\beta_1;h_0,\beta_0)}
= \frac{\|h_0-h_\zeta\|_2^2/n}{\psi(h_\zeta,\beta_\zeta;h_0,\beta_0)}.
\]
By definition of $\zeta$, we derive that $\psi(h_\zeta,\beta_\zeta;h_0,\beta_0) \ge c$ for some constant $c \ge 0$. This happens because we can take $\zeta$ at least a constant without violating \eqref{cond2}, which will result in a greater than constant value for $\psi(h_\zeta,\beta_\zeta;h_0,\beta_0) $. In particular, $\|h_0-h_\zeta\|_2^2/n \ge c \mathcal{C}_1(\mathcal{H},h_0,\beta_0)$ as required. This concludes the proof.
\end{proof}

\subsection{Proof of Theorem~\ref{thm:costis:lowerbound}} \label{sec:pr:lower-Forb}
It suffices to prove the statement assuming that $n$ and $r$ are integer powers of $2$, and we make this assumption here. Further, we can replace the assumption that $|x_i|\le M$ and $|\theta| \le 1$ with $|x_i|\le 1$ and $|\theta|\le M$. Here, we will prove the result with $M=2$.

Define $a=0.8952...$ as the non-negative solution to $\tanh(1+a/2) = a$. Consider the logistic regression setting where $x = \mathbf{1} = (1,\dots,1)$. Let $\theta_0 = 1$, $\beta_0 = 1/2$, $\theta_1 = 1+a$ and $\beta_1 = -1/2$. Let $A$ be the following matrix: partition the indices to $r$ sets of size $n/r$, and set $A_{ij} = r/n$ if $i$ and $j$ are in the same partition and $A_{ij} = 0$ otherwise. Notice that the sum of entries in each row of $A$ is $1$, and further, that $\|A\|_F^2 = r$. Indeed, for any partition, $\sum_{ij} A_{ij}^2= 1$, where the sum is over $i,j$ in this partition. Summing over all partitions, we obtain that $\|A\|_F^2 = r$.

In this case, $f_\theta(\mathbf{1}) = (\theta,\dots,\theta)$, hence we have
\begin{multline*}
\psi(\theta_1\mathbf{1},\beta_1;\theta_0\mathbf{1},\beta_0)
= \|A\|_F^2 + \lp\|(1+a)\mathbf{1} - \mathbf{1} + (-1) A \vtanh\lp(\frac{1/2}{-1/2-1/2}(-a)\mathbf{1} + \mathbf{1}\rp)\rp\|_2^2\\
= \|A\|_F^2 + \|a \mathbf{1} - A \vtanh((a/2+1) \mathbf{1})\|_2^2
= \|A\|_F^2 + \|a \mathbf{1} - \vtanh((a/2+1) \mathbf{1})\|_2^2 = \|A\|_F^2,
\end{multline*}
using the fact that the sum of elements in each row of $A$ equals $1$, and the fact that $a = \tanh(a/2+1)$.

Fix a constant $c_0 > 0$, let $\zeta = c_0/\|A\|_F$, and let $\theta_\zeta = 1 + \zeta a$. By Lemma~\ref{lem:scale-psi}, we have
\[
\psi(\theta_\zeta\mathbf{1},\beta_\zeta;\theta_0\mathbf{1},\beta_0)
= \zeta^2\psi(\theta_1\mathbf{1},\beta_1;\theta_0\mathbf{1},\beta_0)
= \frac{c_0^2}{\|A\|_F^2} \psi(\theta_1\mathbf{1},\beta_1;\theta_0\mathbf{1},\beta_0)
= c_0^2.
\]
By Lemma~\ref{lem:kl-lb} one has
\[
D_{KL}(\Pr_{\theta_\zeta,\beta_\zeta}\| \Pr_{\theta_0,\beta_0})\le C\psi(\theta_\zeta\mathbf{1},\beta_\zeta;\theta_0\mathbf{1},\beta_0)
\le C c_0^2.
\]
By Pinsker's inequality,
\[
D_{TV}(\Pr_{\theta_\zeta,\beta_\zeta}\| \Pr_{\theta_0,\beta_0})
\le \sqrt{D_{KL}(\Pr_{\theta_\zeta,\beta_\zeta}\| \Pr_{\theta_0,\beta_0})/2}
\le \sqrt{C c_0^2/2}.
\]
If $c_0$ is sufficiently small, then by Le-Cam's inequality (Lemma~\ref{lem:la-cam}), we have that either
\[
\Pr_{h_\zeta,\beta_\zeta}[\|\hat h - h_\zeta\| \ge \|h_\zeta-h_0\|/2] \ge 0.49
\]
or
\[
\Pr_{h_0,\beta_0}[\|\hat h - h_0\| \ge \|h_\zeta-h_0\|/2] \ge 0.49.
\]
Further, notice that $\|h_\zeta-h_0\|^2/n = (a\zeta)^2 = a^2 c_0^2/\|A\|_F^2$, as required, which concludes the proof.
\section{Applications}
In this Section, we describe various applications of Theorem~\ref{thm:costis:general upper} in machine learning problems. As we shall see, in order to get the estimation rates in each application, we will often use the weaker bound with respect to $\|A\|_F$. The challenge is thus to compute the covering numbers in each specific case.

\subsection{Linear class} \label{sec:pr:linear}
In this section, we consider the general setting of logistic regression from dependent observations, first studied in \cite{daskalakis2019regression}. 
This model has already been described in Setting~\ref{setting:our maing setting}. 
We are given a feature matrix $\mathbf{X} \in \R^{n\times d}$ and the goal if to find estimation rates for the parameter vector $\theta \in \R^d$ and $\beta \in \R$, using MPLE.
We start with Theorem~\ref{thm:costis:linear}. 
Clearly, the only obstacle to applying Theorem~\ref{thm:costis:general upper} is to calculate the covering numbers. 
This is exactly what will be done in the proof. The following is just a restatement of Theorem~\ref{thm:costis:linear}.

\begin{theorem}[Generalization of \cite{daskalakis2019regression}] \label{cor:logistic}
Let $\sigma$ be sampled 
according to Setting~\ref{setting:our maing setting} and suppose $\hat{\theta} \in \R^d, \hat{\beta}\in \R$ are the MPLE estimates for this problem.  
Then, with probability at least $1 - \delta$ 
\[
\|\hat{\theta}-\theta^*\|_2^2 + |\hat{\beta}-\beta^*|^2
\lesssim \frac{d \log n + \log(1/\delta)}{\|A\|_F^2}.
\]
\end{theorem}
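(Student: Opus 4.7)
The plan is to derive the theorem as a direct consequence of the general upper bound in Theorem~\ref{thm:costis:general upper}, by (i) plugging in the crude bound $\mathcal{C}_1,\mathcal{C}_2\le 1/\|A\|_F^2$ from Lemma~\ref{lem:C-to-Forb}, (ii) controlling the covering number of the linear class $\mathcal{F}=\{x\mapsto x^{\top}\theta:\|\theta\|_2\le 1\}$ with respect to the empirical metric $d(f,g)=\sqrt{\sum_{i=1}^n(f(x_i)-g(x_i))^2/n}$, and (iii) converting an empirical $\ell_2$ error on the predictions to an $\ell_2$ error on the parameters using the minimum eigenvalue $\kappa$ of $X^\top X/n$.

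First I would compute the covering numbers. Since $\|x_i\|_2\le M$, Cauchy--Schwarz gives
\[
d(f_{\theta_1},f_{\theta_2})^2=\frac{1}{n}\sum_{i=1}^n\bigl(x_i^{\top}(\theta_1-\theta_2)\bigr)^2\le\frac{\|X\|_{\mathrm{op}}^2}{n}\|\theta_1-\theta_2\|_2^2\le M^2\|\theta_1-\theta_2\|_2^2,
\]
where I used $\|X\|_{\mathrm{op}}^2\le\|X\|_F^2=\sum_i\|x_i\|_2^2\le nM^2$. Hence any $(\epsilon/M)$-net of the unit $\ell_2$-ball in $\mathbb{R}^d$ induces an $\epsilon$-net of $\mathcal{F}$ in the metric $d$. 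Standard volumetric estimates (e.g.~\cite{vershynin2018high}) yield $\log N(\mathcal{F},X,\epsilon)\le d\log(3M/\epsilon)$.

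Next I would instantiate Theorem~\ref{thm:costis:general upper}. Bounding $\mathcal{C}_1\le 1/\|A\|_F^2$ by Lemma~\ref{lem:C-to-Forb} and choosing $\epsilon=1/n$ in the infimum gives $\epsilon n=1$ and $\log N(\mathcal{F},X,1/n)\le d\log(3Mn)$, so that
\[
\frac{1}{n}\sum_{i=1}^n\bigl(f_{\hat\theta}(x_i)-f_{\theta^*}(x_i)\bigr)^2\;\lesssim\;\frac{d\log n+\log(1/\delta)}{\|A\|_F^2},
\]
absorbing the $M$-dependent factor into $\lesssim$. The left-hand side equals $(\hat\theta-\theta^*)^{\top}(X^{\top}X/n)(\hat\theta-\theta^*)\ge\kappa\|\hat\theta-\theta^*\|_2^2$, and suppressing the $1/\kappa$ factor (as per Setting~\ref{setting:our maing setting}) yields the desired bound on $\|\hat\theta-\theta^*\|_2^2$. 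The same application of Theorem~\ref{thm:costis:general upper} with $\mathcal{C}_2$ in place of $\mathcal{C}_1$ gives the analogous bound on $|\hat\beta-\beta^*|^2$, and adding the two yields the claim.

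There is no real obstacle beyond bookkeeping here, since Theorem~\ref{thm:costis:general upper} already encapsulates all the analytic work (the Taylor expansion of the pseudo-likelihood, the concentration for first/second derivatives, the mean-field control of $\|h^*-h-(\beta-\beta^*)A\sigma\|$ via exchangeable pairs, and the chaining/net argument). The only place where one must be mildly careful is the choice of $\epsilon$ in the infimum: one should verify that $\epsilon=1/n$ is near-optimal, which follows because $\log N(\mathcal{F},X,\epsilon)\sim d\log(1/\epsilon)$ grows only logarithmically while $\epsilon n$ grows linearly, so the two balance at $\epsilon\asymp d/n$ and the resulting bound $d\log(n/d)+\log(1/\delta)$ is within a constant of $d\log n+\log(1/\delta)$.
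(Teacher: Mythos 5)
Your proposal is correct and follows essentially the same route as the paper's proof: invoke Theorem~\ref{thm:costis:general upper} with the bound $\mathcal{C}_1,\mathcal{C}_2\le 1/\|A\|_F^2$, bound the covering number of the linear class by reducing (via Cauchy--Schwarz and $\|x_i\|_2\le M$) to an $\ell_2$-net of the unit ball of size $(O(1)/\epsilon)^d$, choose $\epsilon=1/n$, and convert prediction error to parameter error through the minimum eigenvalue $\kappa$ of $X^\top X/n$. The only cosmetic difference is that you pass through $\|X\|_{\mathrm{op}}$ rather than bounding each term $(x_i^\top(\theta_1-\theta_2))^2\le M^2\|\theta_1-\theta_2\|_2^2$ directly, which changes nothing.
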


\begin{proof}
We will prove the bound for $\theta$, the proof for $\beta$ is completely analogous. 
First of all, we notice that to use the result of the general Theorem~\ref{thm:costis:general upper}, we need to specify what $f_\theta$ is in this case.
By the way Setting~\ref{setting:our maing setting} is defined, is is clear that $f_\theta(x) = x^\top \theta$. Hence, we have that
\begin{align*}
\frac{1}{n} \sum_{i=1}^n (f_{\hat \theta}(x_i) - f_{\theta^*}(x_i))^2 &=
\frac{1}{n} \sum_{i=1}^n  (x_i^\top(\hat\theta - \theta^*))^2\\
&= \frac{1}{n} \|\mathbf{X}\theta\|^2\\
&= \frac{(\hat\theta - \theta^*)^\top \mathbf{X}^\top \mathbf{X}(\hat\theta - \theta^*)}{n}
\end{align*}
By the lower bound $\kappa$ on the eigenvalues of $\mathbb{X}^\top \mathbf{X}/n$, this implies that 
$$
\|\hat \theta - \theta^*\|^2 \leq \frac{1}{n\kappa} \sum_{i=1}^n (f_{\hat \theta}(x_i) - f_{\theta^*}(x_i))^2
$$
Thus, by applying Theorem~\ref{thm:costis:general upper}, we immediately obtain
that with probability $\geq 1 - \delta$
$$
\|\hat \theta - \theta^*\|^2 \lesssim 
\frac{\inf_{\epsilon \geq 0}\lp(\log\frac{n}{\delta} + \epsilon n + \log N(\mathcal{F},X,\epsilon)\rp)}{\|A\|_F^2}
$$
To conclude the proof, we need to bound $N(\mathcal{F},X,\epsilon)$ and choose a suitable $\epsilon$. 
As we know, $\mathcal{F}$ is a set of functions $f_\theta$ indexed by $\theta$, together with a metric $d$. Since for all $\|x_i\|\leq M$, we have
\begin{align*}
d(f_{\theta_1}, f_{\theta_2}) &= \sqrt{\frac{\sum_{i=1}^n (x_i^\top(\theta_1 - \theta_2))^2}{n}}
\leq M \|\theta_1 - \theta_2\| 
\end{align*}
Hence, finding an $\epsilon$-net for $\mathcal{F}$ with respect to $d$ is equivalent to finding an $\epsilon/M$-net of the space of $\theta$'s with respect to the $l_2$-norm. By the assumptions of Setting~\ref{setting:our maing setting}, we know that $\theta$ lies in a ball of radius $1$. Hence, the size of an $\epsilon$-net for the unit ball is known(\cite{vershynin2018high}) to be at most $(1/\epsilon)^d$, since the space of $\theta$'s lies in $\R^d$. To conclude, we have
$$
\log N(\mathcal{F},X,\epsilon) \lesssim d\log\frac{1}{\epsilon}
$$
Setting $\epsilon = 1/n$, we have that
$$
\log\frac{n}{\delta} + \epsilon n + \log N(\mathcal{F},X,\epsilon) \lesssim d\log n + \log\frac{1}{\delta},
$$
from which the rate for $\theta$ follows.
For $\beta$, we can use the same approach, first using the bound of Theorem~\ref{thm:costis:general upper} and then 
bounding the covering number in exactly the same way. 
\end{proof}

However, improving the rate of \cite{daskalakis2019regression} is just one of the applications of the general Theorem for logistic regression. After all, the assumption $\|A\|_F = \Omega(\sqrt{n})$ seems too restrictive when we care about estimating $\theta$. If anything, having weaker interactions among the spins should help us recover $\theta$ more easily, since we are closer to the vanilla logistic regression setting. In Theorem~\ref{thm:costis:linear theta only}, we show that if we pick the matrix $\mathbf{X}$ randomly, sampling each coordinate i.i.d. from the Gaussian distribution, then with high probability we get a $1/\sqrt{n}$ rate of consistency, regardless of the matrix $A$. 
To do this, we will use the general guarantee provided by Theorem~\ref{thm:costis:general upper}. 
First, we state and prove a Lemma that brings this general rate to a more convenient form.

\begin{lemma}\label{l:aux_for_log}
Let $\mathcal{C}_1'$ be the quantity defined in \eqref{eq:def-C1prime}. Then,
\[
\mathcal{C}_1'(\mathcal{H},h^*,\beta^*)
\le \frac{1}{n\cdot \inf_{\lambda \in \R, h \in \mathcal{H}}\lp(\|\lambda A\|_F^2 + \lp\|\frac{h^* - h}{\|h^* - h\|} - \lambda A \tanh\lp(\frac{\beta^*}{\lambda}\frac{h^* - h}{\|h^* - h\|} + h^*\rp)\rp\|^2\rp)}\enspace.
\]

\end{lemma}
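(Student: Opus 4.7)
The plan is a direct change of variables. Starting from the modified-notation definition
\[
\mathcal{C}_1'(\mathcal{H},h^*,\beta^*) = \sup_{(h,\beta) \in \mathcal{H}\times[-B,B]} \frac{\|h-h^*\|_2^2/n}{\psi(h,\beta;h^*,\beta^*)},
\]
I would introduce, for each $(h,\beta)$ with $h \neq h^*$, the scalar $r = \|h^*-h\|_2$, the unit vector $u = (h^*-h)/r$, and the rescaled parameter $\lambda = (\beta-\beta^*)/r$. The goal is to show that after this substitution both the numerator and the denominator of the ratio scale as $r^2$, and so $r$ cancels.

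Explicitly, writing $\tilde\beta = \beta-\beta^* = \lambda r$, the term inside the norm in $\psi$ becomes
\[
h - h^* + \tilde\beta A\,\vtanh\!\lp(\tfrac{\beta^*}{\tilde\beta}(h^*-h) + h^*\rp)
= -r\,u + \lambda r\, A\,\vtanh\!\lp(\tfrac{\beta^*}{\lambda} u + h^*\rp)
= -r\lp(u - \lambda A\,\vtanh\!\lp(\tfrac{\beta^*}{\lambda} u + h^*\rp)\rp),
\]
using $\tfrac{\beta^*}{\tilde\beta}(h^*-h) = \tfrac{\beta^*}{\lambda}u$. Hence
\[
\psi(h,\beta;h^*,\beta^*) = r^2\lp(\lambda^2 \|A\|_F^2 + \lp\|u - \lambda A\,\vtanh\!\lp(\tfrac{\beta^*}{\lambda} u + h^*\rp)\rp\|_2^2\rp),
\]
so the ratio in the supremum equals
\[
\frac{1}{n\lp(\|\lambda A\|_F^2 + \lp\|u - \lambda A\,\vtanh\!\lp(\tfrac{\beta^*}{\lambda} u + h^*\rp)\rp\|_2^2\rp)}.
\]
(The corner case $h=h^*$ contributes $0$ to the supremum and may be ignored; the limit $\lambda\to 0$ is handled naturally since $\vtanh$ is bounded.)

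Finally, as $(h,\beta)$ ranges over $\mathcal{H}\times[-B,B]$, the pair $(u,\lambda)$ ranges over a subset of $\{(h^*-h)/\|h^*-h\| \colon h \in \mathcal{H}\}\times\R$, because $\lambda$ is constrained by $|\beta-\beta^*|\le 2B$ and by $r$. Taking the supremum over this restricted set is bounded by the supremum over the larger set $\{h\in\mathcal{H}, \lambda\in\R\}$; equivalently, the infimum of the denominator over the restricted set is at least the infimum over the larger set. This yields exactly the claimed bound. The whole argument is a change of variables plus a relaxation of the domain, so there is no substantial obstacle; the main thing to check carefully is just the algebraic identity showing the $r^2$ factorization in $\psi$.
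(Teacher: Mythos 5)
Your proposal is correct and matches the paper's own proof: the paper likewise divides numerator and denominator of the ratio defining $\mathcal{C}_1'$ by $\|h-h^*\|_2^2$, identifies $\lambda$ with the rescaled $(\beta-\beta^*)/\|h-h^*\|$, and then relaxes to the infimum over all $\lambda\in\R$ and $h\in\mathcal{H}$. Your explicit factorization of $\psi$ as $r^2$ times the normalized expression is exactly the algebraic identity the paper uses, so there is nothing to add.
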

\begin{proof}
Let $\beta \in [-M,M]$ and $h \in \mathcal{H}$. Then,
\begin{equation}
\begin{split}
\frac{\|h-h^*\|_2^2}{\psi(h,\beta)}
&=\frac{\|h-h^*\|_2^2}{(\beta-\beta^*)^2\|A\|_F^2 + 
\lp\|h^* - \hat{h} + (\beta^*-\hat{\beta}) A \tanh\lp(\frac{\beta^*}{\hat{\beta}-\beta^*}(h^* - \hat{h}) + h^*\rp) \rp\|_2^2}\\
&= \frac{1}{\lp(\frac{\hat{\beta} - \beta^*}{\|\hat{h} - h^*\|}\rp)^2\|A\|_F^2 + 
\lp\|\frac{h^* - \hat{h}}{\|h^* - \hat{h}\|} + \frac{\beta^*-\hat{\beta}}{\|h^* - \hat{h}\|} A \tanh\lp(\frac{\beta^*}{\frac{\hat{\beta} - \beta^*}{\|h^* - \hat{h}\|}}\frac{h^* - \hat{h}}{\|h^* - \hat{h}\|} + h^*\rp) \rp\|^2} \\
&= \frac{1}{\inf_{\lambda \in \mathbb{R}}\|\lambda A\|_F^2 +
\lp\|\frac{h^* - \hat{h}}{\|h^* - \hat{h}\|} + \lambda A \tanh\lp(\frac{\beta^*}{\lambda}\frac{h^* - \hat{h}}{\|h^* - \hat{h}\|} + h^*\rp) \rp\|^2}.
\end{split}
\end{equation}
Dividing both sides of the inequality by $n$ and taking supremum over $\beta \in [-M,M]$ and $h \in \mathcal{H}$, the result follows.
\end{proof}

Now, we are ready to prove Theorem~\ref{thm:costis:linear theta only}. We present the statement in a slightly more detailed way below as Theorem~\ref{cor:random_logistic}.

\begin{theorem}\label{cor:random_logistic}
Suppose $\sigma$ is sampled according to Setting~\ref{setting:our maing setting}. We drop all assumptions about $\mathbf{X}$ and instead assume that we sample
 $\mathbf{X}_{ij} \sim N(0,1)$ independently for all $i \in [n],j \in [d]$. Let $\hat \theta,\hat \beta$ be the estimates of the MPLE. Then, with probability $1 - \delta$
$$
\|\hat \theta - \theta^*\| \leq C(M)\me^{\sqrt{\log n}} \frac{\sqrt{d \log n + \log(1/\delta)}}{\sqrt{n}} \max\lp(\frac{\|A\|_2\sqrt{ \log(1/\delta)+ \log n + d\log\log n + d\log( \log(1/\delta)) }}{\|A\|_F}, 1\rp)
$$
Consequently, $\hat \theta$ is an $\alpha_n$-consistent estimator, where 
$$
\alpha_n = \me^{\sqrt{\log n}}\sqrt{\frac{d\log n}{n}}\max\lp(\frac{\sqrt{d \log\log n + \log n}\|A\|_2}{\|A\|_F},1\rp)
$$
\end{theorem}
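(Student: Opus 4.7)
The plan is to invoke Theorem~\ref{thm:costis:general upper} but to avoid the loose upper bound $\mathcal{C}_1 \le 1/\|A\|_F^2$, instead controlling $\mathcal{C}_1$ directly through the finer expression of Lemma~\ref{l:aux_for_log}. First, three preparatory high-probability events on $\mathbf{X}$ are set aside: (i) $\lambda_{\min}(\mathbf{X}^\top \mathbf{X}/n) \gtrsim 1$ with probability $\ge 1-\delta/4$ by standard Gaussian singular value concentration, which lets us pass between $\|\hat\theta-\theta^*\|^2$ and $\|\mathbf{X}(\hat\theta-\theta^*)\|^2/n$; (ii) $\max_i \|x_i\|_2 \lesssim \sqrt{d + \log(n/\delta)}$, which together with the Lipschitz argument of Section~\ref{sec:pr:linear} yields $\log N(\mathcal{F}, X, \epsilon) \lesssim d\log(\sqrt{d+\log(n/\delta)}/\epsilon)$; choosing $\epsilon = 1/n$ inside Theorem~\ref{thm:costis:general upper} gives the term $d\log n + \log(1/\delta)$. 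After these reductions one obtains
\[
\|\hat\theta-\theta^*\|^2 \lesssim \bigl(d\log n + \log(1/\delta)\bigr)\cdot \mathcal{C}_1(\mathcal{F}, X, \theta^*,\beta^*).
\]

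The rest of the argument is devoted to bounding $\mathcal{C}_1$. By Lemma~\ref{l:aux_for_log} it suffices to prove that, simultaneously for every $\lambda \in \mathbb{R}$ and every $h$ in the linear hull $\mathcal{H} = \mathbf{X}\Theta$,
\[
\|\lambda A\|_F^2 + \Bigl\| v_h - \lambda A\,\vtanh\bigl(\tfrac{\beta^*}{\lambda} v_h + h^*\bigr)\Bigr\|^2 \;\gtrsim\; \frac{n}{\,1 + \|A\|_2^2(d\log\log n + \log(n/\delta))/\|A\|_F^2\,},
\]
where $v_h := (h^*-h)/\|h^*-h\|$ lies in the random $d$-dimensional column space $V := \mathrm{col}(\mathbf{X})$. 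I would split on the magnitude of $\lambda$. For $|\lambda| \ge \lambda_0$, with $\lambda_0$ chosen as $\sqrt{n}\cdot(\|A\|_F)^{-1}$ times the appropriate denominator factor, the Frobenius term $\lambda^2 \|A\|_F^2$ by itself already dominates the target. For $|\lambda| \le \lambda_0$, I would expand $\|v_h - \lambda A\vtanh(\cdots)\|^2 \ge 1 - 2|\lambda|\cdot|v_h^\top A \vtanh(\cdots)| + \lambda^2 \|A\vtanh(\cdots)\|^2$ and use that for any fixed unit vector $v$ and fixed $u\in\mathbb{R}^n$ the projection $v^\top u$ behaves like a Gaussian with standard deviation $\lesssim \|u\|/\sqrt{n/d}$ because $v$ is drawn from the uniform measure on a random $d$-subspace.

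The key technical obstacle is uniformity. The vector $u_{\lambda,h} := A\vtanh((\beta^*/\lambda)v_h + h^*)$ depends on both $\lambda$ and $v_h$, so the inner product $v_h^\top u_{\lambda, h}$ cannot simply be controlled by a single Gaussian tail bound. The remedy is a two-layer $\epsilon$-net: a scalar net on $\lambda$ over the dyadic range $[-\lambda_0,\lambda_0]$ (contributing a $\log n$ factor) and a net on unit vectors in the $d$-dimensional subspace $V$ (contributing a $d\log(1/\epsilon)$ factor, hence the $d\log\log n$ term in the statement). Using the Lipschitzness of $\vtanh$ and of the map $w \mapsto A\,\vtanh(\cdot)$, the sup over the continuous set is controlled by the sup over the net plus a discretization error chosen small enough to be absorbed. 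Because $v_h$ itself is, after normalization, the projection of a random Gaussian vector onto $V$, one obtains the Gaussian concentration $|v_h^\top u_{\lambda,h}| \lesssim \|u_{\lambda,h}\|\sqrt{(d\log\log n + \log(n/\delta))/n}$ uniformly, and noting $\|u_{\lambda,h}\| \le \|A\|_2 \sqrt{n}$ produces exactly the ratio $\|A\|_2/\|A\|_F$ after balancing.

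Finally, I would combine: the above yields $\mathcal{C}_1 \lesssim \frac{1}{n}\max\bigl(1,\, \|A\|_2^2(d\log\log n + \log(n/\delta))/\|A\|_F^2\bigr)$, and substituting into the reduction from the first paragraph gives the stated rate. The subpolynomial factor $e^{O(\sqrt{\log n})} = \xi(n,1/\delta)$ arises because the covering-number exponent and the Gaussian net exponent interact: matching $\epsilon n$ with $d\log(1/\epsilon)$ and propagating constants through the chain of Lipschitz estimates produces this slack, which I expect to be the most fiddly bookkeeping step but not conceptually hard. The harder conceptual step is the uniform net-over-directions argument described above, since it is what replaces the worst-case $1/\|A\|_F^2$ of Theorem~\ref{thm:costis:linear} with a data-adaptive quantity that remains bounded even when $\|A\|_F$ is small.
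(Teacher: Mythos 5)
Your reduction steps match the paper: invoke Theorem~\ref{thm:costis:general upper} with the refined complexity of Lemma~\ref{l:aux_for_log} rather than the crude $1/\|A\|_F^2$ bound, use the Gaussian singular-value bound on $\mathbf{X}^\top\mathbf{X}/n$ to pass between $\|\hat\theta-\theta^*\|$ and $\|\mathbf{X}(\hat\theta-\theta^*)\|/\sqrt n$, take the linear-class covering number for the numerator, split on the size of $\lambda$, and accept an $e^{O(\sqrt{\log n})}$ slack (in the paper this factor comes specifically from $\|\mathbf{X}\theta\|_\infty=O(\sqrt{\log n})$ entering the constant $C(M)$ exponentially, not from net bookkeeping). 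Where you genuinely diverge is the core probabilistic step, and there your sketch has a gap. You keep the worst-case bound $\|A\vtanh(\cdot)\|\le\|A\|_2\sqrt n$ and try to win by near-orthogonality, claiming $|v_h^\top u_{\lambda,h}|\lesssim\|u_{\lambda,h}\|\sqrt{(d\log\log n+\log(n/\delta))/n}$ uniformly because ``$v_h$ is a random direction.'' But $v_h$ and $u_{\lambda,h}=A\vtanh((\beta^*/\lambda)v_h+\mathbf{X}\theta^*)$ are strongly dependent: $v_h$ appears inside the $\tanh$, and $\mathbf{X}\theta^*$ is correlated with $\mathbf{X}(\theta^*-\theta)$. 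Your proposed remedy---a net over unit vectors of the random column space $V=\mathrm{col}(\mathbf{X})$---does not restore independence: the net points of a data-dependent subspace are themselves measurable functions of $\mathbf{X}$, so after fixing a net point there is no fresh randomness left to drive the asserted Gaussian tail. Moreover, even for a single fixed $\theta$, the functional $v_h^\top A\vtanh(\cdot)$ is neither linear nor Lipschitz in the Gaussian entries of $\mathbf{X}$ (it is a product of two dependent pieces of $\mathbf{X}$), so ``Gaussian concentration'' cannot simply be cited; and the Lipschitz constant in $v$ of the isolated cross term scales like $\lambda\|A\|_2\sqrt n$, so your net accounting would yield a $d\log n$ (not $d\log\log n$) factor even if the per-point bounds held.

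The paper's proof avoids all of this by never isolating the cross term: for each \emph{fixed} $\theta$ in a deterministic net of $\mathbb{R}^d$ it lower-bounds $\|\mathbf{X}(\theta^*-\theta)-\lambda A\vtanh(\cdot)\|$ by the triangle inequality, using (i) norm concentration of $\mathbf{X}(\theta^*-\theta)$ and (ii) the key structural fact that $\E\|A\vtanh((\beta^*/\lambda)(\mathbf{X}\theta^*-\mathbf{X}\theta)+\mathbf{X}\theta^*)\|_2^2\le\|A\|_F^2$, because the coordinates of the $\vtanh$ vector are independent and mean zero under the Gaussian design and $A$ has zero diagonal; this quantity \emph{is} a Lipschitz function of $\mathbf{X}$, so Gaussian isoperimetry gives the tails. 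The full expression is $O(1)$-Lipschitz in the direction (the $\lambda$ inside and outside the $\tanh$ cancel), so the net over directions can be taken at resolution $\Theta(\gamma)$ with $\gamma=\min(\|A\|_F/\|A\|_2\sqrt{\log(1/\delta')+\log n},1)$, which is exactly where the $d\log\log n+d\log\log(1/\delta)$ terms come from. If you want to salvage your orthogonality route, you would have to net over $\theta$ (deterministically), prove a tail bound for the dependent, non-Lipschitz cross term per fixed $\theta$ (e.g.\ by conditioning and exploiting $A_{ii}=0$), and keep the norm term intact when transferring along the net---at which point you have essentially reconstructed the paper's argument, with the additional claim that $\|A\vtanh(\cdot)\|\approx\|A\|_F$ replaced by a harder-to-justify substitute.
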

\begin{proof}
We use the rate for $\theta$ provided in Theorem~\ref{thm:costis:general upper}, combined with Lemma~\ref{l:aux_for_log}, which simplifies the rate.
This gives us that with probability $\geq 1 - \delta$:

\begin{equation}\label{eq:squared}
\|\mathbf{X}\hat\theta - \mathbf{X}\theta^*\|^2 \lesssim \frac{ \log \frac{n}{\delta}  + \inf_{\epsilon\geq 0} \lp(n \epsilon + \log\ N\lp(\mathcal{F}, X,\epsilon\rp)\rp) }{\inf_{\lambda \in \R, \|\theta\| \leq M}\lp(\|\lambda A\|_F^2 + \lp\|\frac{\mathbf{X}\theta^* - \mathbf{X}\theta}{\|\mathbf{X}\theta^* - \mathbf{X}\theta\|} - \lambda A \tanh\lp(\frac{\beta^*}{\lambda}\frac{\mathbf{X}\theta^* - \mathbf{X}\theta}{\|\mathbf{X}\theta^* - \mathbf{X}\theta\|} + \mathbf{X}\theta^*\rp)\rp\|^2\rp)}
\end{equation}
We will first show that the denominator is bounded with high probability. For convenience, since we want a bound for $\|\mathbf{X}\hat\theta - \mathbf{X}\theta^*\|$, we will work with the denominator without the squares. This is without loss of generality, since for $a,b>0$ we have $a^2 + b^2 = \Theta((a+b)^2)$.  
First, a simple scale invariance of the expression yields
\begin{align*}
&\inf_{\lambda \in \R, \|\theta\| \leq 1}\lp(\|\lambda A\|_F + \lp\|\frac{\mathbf{X}\theta^* - \mathbf{X}\theta}{\|\mathbf{X}\theta^* - \mathbf{X}\theta\|} - \lambda A \tanh\lp(\frac{\beta^*}{\lambda}\frac{\mathbf{X}\theta^* - \mathbf{X}\theta}{\|\mathbf{X}\theta^* - \mathbf{X}\theta\|} + \mathbf{X}\theta^*\rp)\rp\|\rp)\\
&\quad\quad\quad =  \frac{1}{\|\mathbf{X}\theta^* - \mathbf{X}\theta\|} \inf_{\lambda \in \R, \|\theta\| \leq 1}\lp(\|\lambda A\|_F + \lp\|\mathbf{X}\theta^* - \mathbf{X}\theta - \lambda A \tanh\lp(\frac{\beta^*}{\lambda}(\mathbf{X}\theta^* - \mathbf{X}\theta) + \mathbf{X}\theta^*\rp)\rp\|\rp)\numberthis \label{eq:scale}
\end{align*}
Let's focus on the infimum of the latter expression. In particular, we fix $\theta$ with bounded norm. 
Also, notice that we can replace $A$ by $A/\|A\|_2$ and $\beta^*$ by $\beta^* \|A\|$ and the probability model remains the same. 
This is done without loss of generality to ensure that $A$ has unit spectral norm, which will simplify the computations later on. Notice that since $\|A\|$ is bounded, $\beta^*$ remains bounded after the transformation. Also, notice that after this transformation $\|A\|_F = \Omega(1)$.

We will show that for all values of $\lambda$, at least one of the two terms is large.
First of all, by triangle inequality 
$$
\lp\|\mathbf{X}\theta^* - \mathbf{X}\theta - \lambda A \tanh\lp(\frac{\beta^*}{\lambda}(\mathbf{X}\theta^* - \mathbf{X}\theta) + \mathbf{X}\theta^*\rp)\rp\| \geq \|\mathbf{X}\theta^* - \mathbf{X}\theta\| - 
\lp\|\lambda A \tanh\lp(\frac{\beta^*}{\lambda}(\mathbf{X}\theta^* - \mathbf{X}\theta) + \mathbf{X}\theta^*\rp)\rp\|
$$
We will show that with high probability over the choice of $\mathbf{X}$, this difference is large.
Let's consider $\|\mathbf{X}(\theta^* - \theta)\|$ first. Notice that this each coordinate of the vector $\mathbf{X}(\theta^* - \theta)$ is independent from the rest and distributed as a gaussian $N(0,\|\theta - \theta^*\|^2)$. Thus, we can apply Theorem 3.1.1 of \cite{vershynin2018high}, which gives us that
\begin{equation}\label{eq:subg}
\lp\|\|\mathbf{X}(\theta^* - \theta)\| - \|\theta - \theta^*\|\sqrt{n}\rp\|_{\psi_2} \leq K \|\theta - \theta^*\|
\end{equation}
for some constant $K$. This means that the norm of the vector is well concentrated around $ \|\theta - \theta^*\|\sqrt{n}$.
Now, let's consider the second term. For simplicity, denote by $\mathbf{u} = \frac{\beta^*}{\lambda}(\mathbf{X}\theta^* - \mathbf{X}\theta) + \mathbf{X}\theta^*$. In the computations below, $\Exp$ refers to expectation with respect to the distribution of $\mathbf{X}$. 
\begin{align*}
\Exp \lp\| A \tanh\lp(\frac{\beta^*}{\lambda}(\mathbf{X}\theta^* - \mathbf{X}\theta) + \mathbf{X}\theta^*\rp)\rp\|^2 &= 
\Exp \sum_i \lp(\sum_j A_{ij} \tanh(\mathbf{u}_j)\rp)^2 \\
&= \Exp \sum_i \sum_j A_{ij}^2 \tanh(\mathbf{u}_j)^2 \leq \|A\|_F^2 
\end{align*}
We used the fact that the coordinates of $\mathbf{u}$ are all independent from each other and have mean $0$. This means that the coordinates of $\tanh(\mathbf{u})$ are also independent and have mean $0$. We conclude that
$$
\Exp \lp\| A \tanh\lp(\frac{\beta^*}{\lambda}(\mathbf{X}\theta^* - \mathbf{X}\theta) + \mathbf{X}\theta^*\rp)\rp\| \leq \sqrt{\Exp \lp\| A \tanh\lp(\frac{\beta^*}{\lambda}(\mathbf{X}\theta^* - \mathbf{X}\theta) + \mathbf{X}\theta^*\rp)\rp\|^2} \leq \|A\|_F
$$
We just bounded the mean of the function $f(\mathbf{X}) = \lp\| A \tanh\lp(\frac{\beta^*}{\lambda}(\mathbf{X}\theta^* - \mathbf{X}\theta) + \mathbf{X}\theta^*\rp)\rp\| $. The next step is to show that it is concentrated around that value. 
To do that, we will use the gaussian isoperimetric inequality, see e.g. Theorem 5.1 in \cite{boucheron2013concentration}.
We can view $\mathbf{X}$ as a $n\times d$ vector of independent normal gaussian variables. The $2$ norm of this vector is the frobenius norm of $\mathbf{X}$. Hence, to use the isoperimetric inequality, we need to bound the Lipschitzness of $f$ w.r.t. the frobenius norm of $\mathbf{X}$. Let $\mathbf{X},\mathbf{X}' \in \R^{n\times d}$ be two matrices. We denote by $\mathbf{X}_i$ as the $i$-th row of matrix $\mathbf{X}$.  Then, the difference in value is
\begin{align*}
|f(\mathbf{X}) - f(\mathbf{X}')| &\leq 
\lp\| A \tanh\lp(\frac{\beta^*}{\lambda}(\mathbf{X}\theta^* - \mathbf{X}\theta) + \mathbf{X}\theta^*\rp) - A \tanh\lp(\frac{\beta^*}{\lambda}(\mathbf{X}'\theta^* - \mathbf{X}'\theta) + \mathbf{X}'\theta^*\rp)\rp\|\\
&\leq \|A\| \lp\|  \tanh\lp(\frac{\beta^*}{\lambda}(\mathbf{X}\theta^* - \mathbf{X}\theta) + \mathbf{X}\theta^*\rp) -  \tanh\lp(\frac{\beta^*}{\lambda}(\mathbf{X}'\theta^* - \mathbf{X}'\theta) + \mathbf{X}'\theta^*\rp)\rp\|\\
&\leq \|A\| \lp\|  \frac{\beta^*}{\lambda}(\mathbf{X}\theta^* - \mathbf{X}\theta) + \mathbf{X}\theta^* -  \frac{\beta^*}{\lambda}(\mathbf{X}'\theta^* - \mathbf{X}'\theta) + \mathbf{X}'\theta^*\rp\|\\
&\leq \|A\|\lp(\frac{\beta^*}{|\lambda|} \sqrt{\sum_{i=1}^n ((\mathbf{X}_i - \mathbf{X}'_i)^\top(\theta^* - \theta))^2} +
\sqrt{\sum_{i=1}^n ((\mathbf{X}_i - \mathbf{X}'_i)^\top\theta^*)^2} \rp)\\
&\leq \|A\|\lp(\frac{\beta^*}{|\lambda|} \|\theta^* - \theta\| \|\mathbf{X} - \mathbf{X}'\|_F + \|\theta^*\|\|\mathbf{X} -  \mathbf{X}'\|_F\rp) \\
&= \|A\|\lp(\frac{\beta^*}{|\lambda|} \|\theta^* - \theta\|+ \|\theta^*\|\rp)\|\mathbf{X} -  \mathbf{X}'\|_F
\end{align*}

Now, since the entries of $\mathbf{X}$ are i.i.d. gaussians and $\|A\|,\|\theta^*\|$ are bounded by constants, from the gaussian isoperimetric inequality (\cite{vershynin2018high,boucheron2013concentration}) we get that 
with probability $1 - \me^{- t^2/2}$
\begin{align*}
\lp\| \lambda A \tanh\lp(\frac{\beta^*}{\lambda}(\mathbf{X}\theta^* - \mathbf{X}\theta) + \mathbf{X}\theta^*\rp)\rp\| &\leq \Exp \lp\|\lambda A \tanh\lp(\frac{\beta^*}{\lambda}(\mathbf{X}\theta^* - \mathbf{X}\theta) + \mathbf{X}\theta^*\rp)\rp\|  + ct (\|\theta^* - \theta\| + \lambda)\\
&\leq \|\lambda A\|_F +  ct (\|\theta^* - \theta\| + \lambda)
\end{align*}

Also, by \eqref{eq:subg} we have that with probability $1 - \me^{-t^2/2}$ 
$$
\|\mathbf{X}\theta^* - \mathbf{X}\theta\| \geq \|\theta^* - \theta\| \sqrt{n} - t K \|\theta - \theta^*\|
$$
Hence, we get that with probability $1 - \me^{-t^2}$
$$
\lp\|\mathbf{X}\theta^* - \mathbf{X}\theta - \lambda A \tanh\lp(\frac{\beta^*}{\lambda}(\mathbf{X}\theta^* - \mathbf{X}\theta) + \mathbf{X}\theta^*\rp)\rp\| \geq c_1\|\theta^* - \theta\| \sqrt{n} - \|\lambda A\|_F - t C  \|\theta - \theta^*\| - tc\lambda
$$
Now we distinguish cases for $\lambda$. First of all, suppose that 
$$
\lambda \leq C \sqrt{n}\|\theta- \theta^*\| \min\lp(\frac{1}{\sqrt{ \log(1/\delta') + \frac{\log n}{2}}}, \frac{1}{\|A\|_F}\rp) := \kappa
$$
for a suitable constant $C$. Then, applying the previous result, with probability at least $1 - (\delta'/\sqrt{n}) $ we have that
\begin{equation}\label{eq:second}
\lp\|\mathbf{X}\theta^* - \mathbf{X}\theta - \lambda A \tanh\lp(\frac{\beta^*}{\lambda}(\mathbf{X}\theta^* - \mathbf{X}\theta) + \mathbf{X}\theta^*\rp)\rp\| \geq C'\|\theta^* - \theta\| \sqrt{n} 
\end{equation}
On the other hand, if 
$$
\lambda \geq \kappa
$$
then the first term of the expression is 
$$
\|\lambda A\|_F \geq  C \sqrt{n}\|\theta- \theta^*\| \min\lp(\frac{\|A\|_F}{\sqrt{ \log(1/\delta')+ \frac{\log n}{2}}}, 1\rp)
$$
Note that this bound holds with probability $1$. Now denote 
$$
g(\lambda) = \lp\|\mathbf{X}\theta^* - \mathbf{X}\theta - \lambda A \tanh\lp(\frac{\beta^*}{\lambda}(\mathbf{X}\theta^* - \mathbf{X}\theta) + \mathbf{X}\theta^*\rp)\rp\|
$$
To get a bound for the infimum, we need the bound of \eqref{eq:second} to hold for all $|\lambda| \leq \kappa$. To do that, we notice that the expression of the infimum is Lipschitz with respect to $\lambda$ and the Lipschitz constant is at most $C''\sqrt{n}$ for some constant $C''$ that depends on $\|A\|_F,\|A\|_2$. Now suppose we pick an $\epsilon$-net for the set of $|\lambda| \leq \kappa$. Clearly, we can pick such a net that has at most $2\kappa/\epsilon$ elements. Suppose that for a point in the net $\lambda_1$ we have $g(\lambda_1) \geq C'\|\theta^* - \theta\| \sqrt{n}$ and consider another point $\lambda_2$ with $|\lambda_2 - \lambda_1|<\epsilon$. We then have
$$
f(\lambda_2) \geq f(\lambda_1) - C''\sqrt{n}\epsilon \geq C'\|\theta^* - \theta\| \sqrt{n} - C''\sqrt{n}\epsilon
$$
Hence, in order for to obtain a similar bound for $f(\lambda_2)$, we need to choose $\epsilon = \Theta(\|\theta^* - \theta\| )$. The size of this net will be $O(\kappa/\epsilon) = O(\sqrt{n})$. Hence, if \eqref{eq:second} holds for all points in this net, it will hold for all $|\lambda| \leq \kappa$. The probability that \eqref{eq:second} holds for all points in the net is at least $1-\delta'$, which easily follows from a union bound. 
Hence, with probability at least $1 - \delta'$ we have that

\begin{align*}
&\inf_{\lambda \in \R}\lp(\|\lambda A\|_F + \lp\|\frac{\mathbf{X}\theta^* - \mathbf{X}\theta}{\|\mathbf{X}\theta^* - \mathbf{X}\theta\|} - \lambda A \tanh\lp(\frac{\beta^*}{\lambda}\frac{\mathbf{X}\theta^* - \mathbf{X}\theta}{\|\mathbf{X}\theta^* - \mathbf{X}\theta\|} + \mathbf{X}\theta^*\rp)\rp\|\rp)\\
&\quad\quad\quad= \Omega\lp(\sqrt{n}\|\theta- \theta^*\| \min\lp(\frac{\|A\|_F}{\sqrt{ \log(1/\delta')+ \log n}}, 1\rp)\rp)\numberthis \label{eq:unscaled}
\end{align*}
Now, from \eqref{eq:scale} we also need to upper bound $\|\mathbf{X}\theta - \mathbf{X}\theta^*\|$. By the concentration of the norm, which we have already used, we get that with probability $1 - \me^{-t^2}$
$$
\|\mathbf{X}\theta^* - \mathbf{X}\theta\| \leq \|\theta^* - \theta\| \sqrt{n} + t K \|\theta - \theta^*\|
$$
Hence, with probability $1 - \delta'$
$$
\|\mathbf{X}\theta^* - \mathbf{X}\theta\| \leq C'''\|\theta^* - \theta\| (\sqrt{n} + \sqrt{ \log (1/\delta')})
$$
Combining this with \eqref{eq:unscaled}, we get that for any $\delta' \geq \me^{-n}$, with probability $1 - \delta'$
we have 
\begin{align*}
&\inf_{\lambda \in \R}\lp(\|\lambda A\|_F + \lp\|\frac{\mathbf{X}\theta^* - \mathbf{X}\theta}{\|\mathbf{X}\theta^* - \mathbf{X}\theta\|} - \lambda A \tanh\lp(\frac{\beta^*}{\lambda}\frac{\mathbf{X}\theta^* - \mathbf{X}\theta}{\|\mathbf{X}\theta^* - \mathbf{X}\theta\|} + \mathbf{X}\theta^*\rp)\rp\|\rp)\\ 
&\quad\quad\quad= \Omega\lp( \min\lp(\frac{\|A\|_F}{\sqrt{ \log(1/\delta')+ \log n}}, 1\rp)\rp)
\end{align*}
Now, to conclude the analysis of the denominator of the rate, we need to take the infimum over all $\theta$ with bounded norm.
First of all, consider the set
$$
\mathcal{U} = \lp\{\frac{\mathbf{X}\theta^* - \mathbf{X}\theta}{\|\mathbf{X}\theta^* - \mathbf{X}\theta\|} : \|\theta\| \leq 1\rp\}
$$
We will follow a covering argument for $\mathcal{U}$ similar to what we did previously. First of all, we prove that for $\mathbf{a}, \mathbf{a}' \in \mathcal{U}$, if these two vectors are close, then the value of the infimum does not change much. 
Notice that to prove that, it is enough to prove that for a fixed $\lambda$, the expression doesn't change much. For a fixed $\lambda$, we have
\begin{align*}
&\lp|\lp\|\mathbf{a} - \lambda A \tanh\lp(\frac{\beta^*}{\lambda}\mathbf{a} + \mathbf{X}\theta^*\rp)\rp\| - \lp\|\mathbf{a}' - \lambda A \tanh\lp(\frac{\beta^*}{\lambda}\mathbf{a}' + \mathbf{X}\theta^*\rp)\rp\| \rp| \\
&\quad\quad\quad\leq\|\mathbf{a} - \mathbf{a}'\| + |\lambda|\|A\|\lp\|\tanh\lp(\frac{\beta^*}{\lambda}\mathbf{a} + \mathbf{X}\theta^*\rp) - \tanh\lp(\frac{\beta^*}{\lambda}\mathbf{a}' + \mathbf{X}\theta^*\rp)\rp\|\\
&\quad\quad\quad\leq (1 + \beta^* \|A\|)\|\mathbf{a} - \mathbf{a}'\| 
\end{align*}
Thus, the infimum is a Lipschitz function of $\mathbf{a}$ and the Lipschitzness is a constant. Now let's denote 
$$
\gamma = \min\lp(\frac{\|A\|_F}{\sqrt{ \log(1/\delta')+ \log n}}, 1\rp)
$$
Consider an $\epsilon$-net on the set $\mathcal{U}$ with respect to the $2$-norm. 
If the value of the infimum is at least $\gamma$ for all the points in the net, then any other point has value at least $\gamma -(1 + \beta^*\|A\|) \epsilon$. Thus, if we choose $\epsilon = \Theta(\gamma)$, then we get that for all $\mathbf{a}\in \mathcal{U}$, the infimum is $\Omega(\gamma)$. To determine the probability of this event, we can simply take a
union bound over all $\mathcal{N}(\mathcal{U},\|\cdot\|_2,\gamma)$ points in the net. Thus, our task reduces to finding a bound for this covering number.
To do that, we need a few facts about the singular values of the matrix $\mathbf{X}$.
We use a result from \cite{vershynin2010introduction}. According to this, with probability at least $1-\me^{-ct^2}$ we have
$$
\lp\|\frac{1}{n}\mathbf{X}^\top \mathbf{X} - I\rp\| \leq \sqrt{\frac{d}{n}} + \frac{t}{\sqrt{n}}
$$
From now on, we will assume this is the case without explicitly mentioning the probability. 
This implies that with probability at least $1 - \me^{-cn}$, the eigenvalues of $\mathbf{X}^\top \mathbf{X}/n$ are upper and lower bounded by constants $\lambda_{min},\lambda_{max}$. 

By definition of the set $\mathcal{U}$, it is enough to assume without loss of generality that $\theta$ with $\|\theta^* - \theta\| = 1$ for all $\theta$ of interest(simply divide by the norm $\|\theta^* - \theta\|$. 
Now, consider $\theta_1,\theta_2$ with this property. We have that
\begin{align*}
\lp|\frac{\mathbf{X}\theta^* - \mathbf{X}\theta_1}{\|\mathbf{X}\theta^* - \mathbf{X}\theta_1\|} - \frac{\mathbf{X}\theta^* - \mathbf{X}\theta_2}{\|\mathbf{X}\theta^* - \mathbf{X}\theta_2\|}\rp| &\leq \frac{\|\mathbf{X} \theta_2 - \mathbf{X}\theta_1\|}{\min(\|\mathbf{X}\theta^* - \mathbf{X}\theta_1\|,\|\mathbf{X}\theta^* - \mathbf{X}\theta_2\|)}\\
&\leq \frac{\sqrt{n\lambda_{max}} \|\theta_2 - \theta_1\|}{\sqrt{n\lambda_{min}}}\\
&= O(\|\theta_2 - \theta_1\|)
\end{align*}
Thus, to get an $\epsilon$-net for $\mathcal{U}$, it suffices to construct an $O(\epsilon)$-net for $\theta^*+ B(0,1)$, where $B(0,1)$ is the unit sphere in $\R^d$. However, it is known that for this set, we can construct an $\epsilon$-net with $O((1/\epsilon)^d)$ elements. From our previous analysis, we should choose $\epsilon = O(\gamma)$. Thus, by applying a union bound over the points in the net and then using the Lipschitzness, we get that 
with probability at least $1 -  (1/\gamma)^d \delta'$, 
$$
\inf_{\lambda \in \R, \|\theta\| = O(1)}\lp(\|\lambda A\|_F + \lp\|\frac{\mathbf{X}\theta^* - \mathbf{X}\theta}{\|\mathbf{X}\theta^* - \mathbf{X}\theta\|} - \lambda A \tanh\lp(\frac{\beta^*}{\lambda}\frac{\mathbf{X}\theta^* - \mathbf{X}\theta}{\|\mathbf{X}\theta^* - \mathbf{X}\theta\|} + \mathbf{X}\theta^*\rp)\rp\|\rp) = \Omega(\gamma)
$$
We would like to choose $\delta'$ so that the probability of error is at most $\delta$. This gives us
\begin{align*}
(1/\gamma)^d\delta' < \delta \implies 
d \log (\gamma) + \log(1/ \delta') > \log (1/\delta) 
\end{align*}
If $\gamma = 1$, this inequality reduces to $\delta' < \delta$. Otherwise, since $\|A\|_F \geq 1$, it suffices to satisfy  the inequality
$$
\log(1/\delta') -\frac{d}{2} \log( \log(1/\delta') + \log n) > \log (1/\delta)
$$
We notice that the preceding inequality is satisfied if we set 
$$
\log(1/\delta') = \log(1/\delta) + d\log( \log(1/\delta) + \log n)
$$
Hence, we conclude that with probability at least $1 - \delta$, the infimum is at least
\begin{align*}
&\min\lp(\frac{\|A\|_F}{\sqrt{ \log(1/\delta)+ \log n + d\log( \log(1/\delta) + \log n) }}, 1\rp)\\
&\quad\quad\quad= \Omega\lp(\min\lp(\frac{\|A\|_F}{\sqrt{ \log(1/\delta)+ \log n + d\log\log n + d\log( \log(1/\delta)) }}, 1\rp)\rp)
\end{align*}
Now, all that remains is to compute the numerator of the rate. Notice that we are in the event where the eigenvalues of $\mathbb{X}^\top \mathbf{X}/n$ are bounded by positive constants. Hence, suppose $M$ is an upper bound for the eigenvalues. We have
\begin{align*}
d(f_{\theta_1}, f_{\theta_2}) &= \sqrt{\frac{\|\mathbf{X}\theta\|^2}{n}}\\
&\leq M\|\theta_2 - \theta_1\|
\end{align*}
This means that it is enough to find an $\epsilon$-net for the space of $\theta$'s with respect to the $l_2$-norm.
This computation was also done in Theorem~\ref{cor:logistic} and the value we found was
$$
\sqrt{d \log n + \log(1/\delta)}
$$
if we want error probability $\delta$.

Hence, we get that for any $\delta > \me^{-n}$, with probability $1-\delta$ we have:
$$
\|\mathbf{X}\hat\theta - \mathbf{X}\theta^*\| \lesssim  \sqrt{d \log n + \log(1/\delta)} \max\lp(\frac{\sqrt{ \log(1/\delta)+ \log n + d\log\log n + d\log( \log(1/\delta)) }}{\|A\|_F}, 1\rp)
$$
Now, we need to consider the constant $C(M)$ that is hidden with $\lesssim$. The result of Theorem~\ref{thm:costis:general upper} states that $C(M)$ depends single-exponentially on the bound $M$. In our case, the parameters are bounded. However, $\mathbf{X}\theta$ is a random quantity, hence it's maximum entry is not necessarily bounded. We address this issue now. We immediately notice that each 
coordinate of $\mathbf{X}\theta$ is a gaussian with $0$ mean and variance at most $M$. Also, the coordinate values are all independent, since the rows of $\mathbf{X}$ are all independent. Hence, the maximum coordinate is distributed as the maximum of $n$ independent gaussian variables, so it's expectation is at most $M\sqrt{2\log n}$.
Also, we know that the maximum is also well concentrated, which means that with probability at least $1-\delta$,
$$
\|\mathbf{X}\theta\|_\infty \leq M\sqrt{2\log n} + \sqrt{\log (1/\delta)}
$$
Now, if $\delta >1/ \text{poly}(n)$, we have that 
$$
\|\mathbf{X}\theta\|_\infty  = O(\sqrt{\log n})
$$
This implies that the constant in the rate is at most $O\lp(\me^{\sqrt{\log n}}\rp)$,
which is smaller than any polynomial of $n$.
The result now follows by recalling that we normalized $A$ by $\|A\|_2$ in the beginning.

\end{proof}

\begin{remark}
Since $\|A\|_F \geq \|A\|_2$, Corollary~\ref{cor:random_logistic} always guarantees a $\tilde O(d/\sqrt{n})$ consistency rate, regardless of the norms of the matrix $A$. We have ignored the $\exp(\sqrt{\log n})$ part of the rate, since it is smaller than any polynomial of $n$.
\end{remark}

\subsection{Sparse Logistic Regression}\label{sec:pr:sparse}

Another interesting application of the general Theorem~\ref{thm:costis:general upper} involves the case of Sparse Logistic Regression. In this case, we have the familiar Setting~\ref{setting:our maing setting} of logistic regression, with an unknown vector $\theta\in \R^d$ and a fixed matrix $\mathbf{X} \in \R^{n\times d}$. The extra assumption comes in the form of sparsity of the vector $\theta$. We will assume, as is standard in the literature, that $\theta$ is $l_1$-sparse. This also enables us to run MPLE efficiently. Clearly, we expect 
the rate to depend on $s$ in that case. The quantity that determines the rate is the metric entropy for sparse subsets of $\R^d$. For convenience, we restate it to include a more detailed description of the assumptions. 

\begin{theorem}
Let $\sigma$ be sampled according to setting ~\ref{setting:our maing setting}. Additionally, assume that $\|\theta^*\|_1 \leq s$. We also assume the restricted eigenvalue property that is stated in Setting~\ref{setting:our maing setting}, which means that for all $\|\theta\|_1 \leq s$ we have
$$
\|\mathbf{X} \theta\| \geq \kappa \sqrt{n}\|\theta\|
$$
where $\kappa$ is a constant.
Let $\hat\theta,\hat\beta$ be the estimates we get for MPLE constrained in the sparse region.
Then, w.pr. $\ge 1-\delta$,

\[
\|\hat{\theta}-\theta^*\|_2^2 + |\hat{\beta}-\beta^*|^2
\lesssim  \frac{(n^2 s\log(d))^{1/3} + \log(1/\delta)}{\|A\|_F^2}.
\]
\end{theorem}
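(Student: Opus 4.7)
The plan is to instantiate the general bound of Theorem~\ref{thm:costis:general upper} to the sparse linear class $\mathcal{F} = \{x \mapsto x^\top\theta : \|\theta\|_1 \le s\}$, and then translate the in-sample prediction error into an error on $\theta$ using the restricted eigenvalue assumption. Concretely, Theorem~\ref{thm:costis:general upper} combined with the bound $\mathcal{C}_1, \mathcal{C}_2 \le 1/\|A\|_F^2$ from Lemma~\ref{lem:C-to-Forb} yields, with probability $\ge 1-\delta$,
\[
\frac{1}{n}\|\mathbf{X}(\hat\theta-\theta^*)\|_2^2 + (\hat\beta-\beta^*)^2 \lesssim \frac{1}{\|A\|_F^2}\inf_{\epsilon\ge 0}\Bigl(\log\tfrac{n}{\delta} + \epsilon n + \log N(\mathcal{F},X,\epsilon)\Bigr).
\]

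Next, I would pass from $\frac{1}{n}\|\mathbf{X}(\hat\theta-\theta^*)\|_2^2$ to $\|\hat\theta-\theta^*\|_2^2$ via the restricted eigenvalue condition. Both $\hat\theta$ and $\theta^*$ are in the $\ell_1$-ball of radius $s$, so $\hat\theta - \theta^*$ has $\ell_1$-norm at most $2s$ and hence lies in a $(2s)$-restricted cone on which the lower bound $\|\mathbf{X}v\|_2 \ge \kappa\sqrt{n}\|v\|_2$ applies (perhaps after rescaling $\kappa$ by a constant, which is absorbed into the suppressed $1/\kappa$ factor of Setting~\ref{setting:our maing setting}). Therefore
\[
\|\hat\theta-\theta^*\|_2^2 \lesssim \frac{1}{n\kappa^2}\|\mathbf{X}(\hat\theta-\theta^*)\|_2^2.
\]

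The remaining step is to compute the covering number. The metric on $\mathcal{F}$ reduces to the normalized empirical $\ell_2$ metric on the image vectors $\mathbf{X}\theta$, so I would invoke the standard estimate for covering an $\ell_1$-ball under the empirical $\ell_2$ norm (Raskutti--Wainwright--Yu \cite{raskutti2009model}, using $\|x_i\|_2 \le M$ to bound $\|\mathbf{X}\|_{\infty,2}$):
\[
\log N(\mathcal{F},X,\epsilon) \lesssim \frac{s\log d}{\epsilon^2}.
\]
Plugging this into the infimum gives the balance condition $\epsilon n \asymp s\log d/\epsilon^2$, i.e.\ $\epsilon^* \asymp (s\log d/n)^{1/3}$, at which the infimum equals $(n^2 s\log d)^{1/3}$ up to constants. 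Combining with the $\log(1/\delta)$ term yields the claimed rate for both $\theta$ and $\beta$.

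The main obstacle I anticipate is the covering-number step: the bound $\log N \lesssim (s\log d)/\epsilon^2$ is the only ingredient whose derivation is not a one-line consequence of earlier results in the paper, and getting the correct dependence on $\|x_i\|_2$ and on $s$ (rather than a naive $s^2$) requires the Maurey-style sparsification argument of \cite{raskutti2009model}. The restricted-eigenvalue step is also slightly delicate because one must use that $\hat\theta-\theta^*$ is $(2s)$-$\ell_1$-bounded rather than sparse in the hard-sparsity sense, but since Setting~\ref{setting:our maing setting} defines $\kappa$ precisely as the minimum restricted eigenvalue over this cone, this reduces to a bookkeeping check.
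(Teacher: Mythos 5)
Your proposal is correct and follows essentially the same route as the paper's proof: apply Theorem~\ref{thm:costis:general upper} with $\mathcal{C}_1,\mathcal{C}_2 \le 1/\|A\|_F^2$, convert prediction error to parameter error via the restricted eigenvalue assumption, bound the covering number of the $\ell_1$-ball by $s\log d/\epsilon^2$ via Raskutti--Wainwright--Yu, and balance at $\epsilon \asymp (s\log d/n)^{1/3}$. Your remark that the RE condition must be applied to the difference $\hat\theta-\theta^*$ (with $\ell_1$-norm up to $2s$) is a small point the paper glosses over, but it changes nothing substantive.
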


\begin{proof}

Notice that since $\hat\theta,\theta^*$ are sparse, by the restricted eigenvalue property we get
\begin{align*}
\frac{\sum_{i=1}^n (f_{\hat\theta} - f_{\theta^*})^2}{n} &= \frac{\|\mathbf{X}(\hat{\theta} - \theta^*)\|^2}{n} \\
&\geq \kappa^2 \|\hat{\theta}- \theta^*\|
\end{align*}
Thus, we can immediately apply Theorem~\ref{thm:costis:general upper} to obtain with probability $1 - \delta$
$$
\|\hat{\theta}-\theta^*\|_2^2 + |\hat{\beta}-\beta^*|^2
\lesssim  \frac{\inf_{\epsilon \ge 0} \lp(\log \frac{n}{\delta} + \epsilon n + \log N(\mathcal{F},X,\epsilon)\rp)}{\|A\|_F^2}
$$

All that remains now is to bound the covering number $N(\mathcal{F},X,\epsilon)$. 
Since we are in Setting~\ref{setting:our maing setting}, the norms $\|x_i\|$ are uniformly bounded. 
This means that, similarly to Theorems~\ref{cor:logistic},~\ref{cor:random_logistic}, we have that

$$
d(f_{\theta_1},f_{\theta_2}) \leq M \|\theta_1 - \theta_2\|
$$
Thus, is suffices to compute the covering number of the space of $\theta$'s with respect to the $l_2$-norm. The space of $\theta$'s is $\mathcal{U} = \{\theta\in \R^d: \|\theta\|_1 \leq s\}$.
Thus, we need the covering numbers for this set. A result from \cite{raskutti2011minimax} states that 
$$
\log N(\mathcal{U}, \|\cdot\|_2,\epsilon) \leq C\frac{s \log d}{\epsilon^2}
$$
Thus, to find the optimal $\epsilon$, we should minimize the expression
$$
\epsilon n + \frac{s \log d}{\epsilon^2}.
$$
This is clearly minimized when the two terms are equal, which means
$$
\epsilon n = \frac{s \log d}{\epsilon^2} \implies \epsilon = \lp(\frac{s\log d}{n}\rp)^{1/3}
$$
Hence, the optimal value for the numerator is 
$$
\log (1/\delta) + (n^2s\log (d))^{1/3}
$$
The result follows. 
\end{proof}

\subsection{Neural Network Regression} \label{sec:pr:neural}
Another application of Theorem~\ref{thm:costis:general upper} involves the case where the regression function $f_\theta$ is not linear, but a neural network. Since neural networks are often overparametrized, we cannot hope to recover the parameter $\theta$ is this case. However, we can still apply Theorem~\ref{thm:costis:general upper}, 
which will yield guarantees for estimating the output of the network. 
We assume we are in Setting~\ref{set2}, which means that all the parameters of the network are bounded. Again, the crucial quantity is the metric entropy for the image of these neural networks, for which we have sufficiently strong guarantees from the literature. 

\neural*
\begin{proof}

To begin with, we can directly apply Theorem~\ref{thm:costis:general upper} to obtain with probability $\geq 1 - \delta$:
$$
\frac{\sum_{i=1}^n (f_{\hat\theta} - f_{\theta^*})^2}{n} + |\hat\beta - \beta^*|^2 \lesssim \frac{\inf_{\epsilon \ge 0} \lp(\log \frac{n}{\delta} + \epsilon n + \log N(\mathcal{F},X,\epsilon)\rp)}{\|A\|_F^2}
$$
Hence, we only have to bound the covering numbers of the set $\mathcal{F}$. By the way we have defined distance $d$, it is essentially the $l_2$-norm of the difference in the outputs of two networks, divided by $\sqrt{n}$. Hence, we define $\mathcal{H}_X$ to be the set of outputs of neural networks satisfying the bounds of Setting~\ref{set2}, when the input is the matrix $X$, composed of features $x_1,\ldots,x_n$. Then, it suffices to find an $\epsilon \sqrt{n}$-net for $\mathcal{H}_X$ with respect to the $l_2$-norm.
Such a bound is given in Theorem 3.3 of \cite{bartlett2017spectrally}, which gives us (using the fact that we have width bounded by $d$)
$$
\log N(\mathcal{F},X,\epsilon) \leq \frac{\log d \sum_{i=1}^n \|x_i\|^2}{(\sqrt{n}\epsilon)^2} R^2 = \frac{n K^2R^2 \log d}{n\epsilon^2}
$$
Clearly, to find the $\epsilon$ that minimizes the numerator, we set
$$
n\epsilon = \frac{K^2R^2 \log d}{\epsilon^2} \implies 
\epsilon = \lp(\frac{K^2R^2 \log d}{n}\rp)^{1/3}
$$
Using this value of $\epsilon$, the numerator becomes 
$$
\log(1/\delta) + (n^2K^2R^2\log d)^{1/3},
$$
which concludes the proof.
\end{proof}
\subsection{The Curie-Weiss model}
The power of obtaining separate rates for estimation of parameters $\theta,\beta$ can be easily showcased when considering the Curie Weiss model with external field. 
In this model, the external field has a fixed direction  $h \in \R^n$. Without loss of generality, we can assume that $A$ is the all $1/n$'s matrix, since the error introduced  by this modification is $O(1/n)$. The probability distribution becomes
\begin{equation}\label{eq:curie}
\Pr[\sigma = y] \propto \exp\lp(\theta^* \sum_{i=1}^n y_i h_i
+ \frac{\beta^*}{n} \sum_{i,j}  y_i y_j\rp)
\end{equation}
We will identify cases where it is possible to estimate $\theta$ or $\beta$. For simplicity, we will assume that the coordinates of $h$ lie in $\{-1,1\}$.
The result of \cite{daskalakis2019regression} does not give us any rate, since $\|A\|_F = 1$. 
However, as we will see, the property that determines the rate has to do with how much $h$ is close to being an eigenvector of $A$. 

\begin{corollary}
Let $\sigma$ be sampled according to \eqref{eq:curie} with $|\beta^*|,|\theta^*| \leq M$. Suppose the external field $h$ has a fraction $\alpha$ of coordinates equal to $1$ and a fraction $1-\alpha$ equal to $-1$. Let $\hat \theta,\hat \beta$ be the estimates when we run MPLE on $[-M,M]^2$. Then, with probability at least $1-\delta$
$$
|\hat{\theta} - \theta^*| \leq C\frac{\sqrt{\log n + \log(1/\delta)}}{\sqrt{4n(1 - (2\alpha -1)^2)}}
$$
\end{corollary}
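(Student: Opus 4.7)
The plan is to instantiate Theorem~\ref{thm:costis:general upper} with the one-dimensional function class $\mathcal{F} = \{x \mapsto \theta \cdot x : \theta \in [-M,M]\}$ evaluated at the known feature vector $h \in \{-1,+1\}^n$, so that $f_\theta(X) = \theta h$ and $\|f_\theta(X)-f_{\theta^*}(X)\|_2^2/n = (\theta-\theta^*)^2$. The covering number of this one-parameter class under $d(f,g)=\|f(X)-g(X)\|_2/\sqrt{n}$ is $\log N(\mathcal{F},X,\epsilon) \lesssim \log(M/\epsilon)$, so choosing $\epsilon=1/n$ makes the bracketed term in Theorem~\ref{thm:costis:general upper} be $O(\log(n/\delta))$. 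Thus, everything reduces to upper bounding $\mathcal{C}_1(\mathcal{F},X,\theta^*,\beta^*)$, and I will use its upper bound $\mathcal{C}_1'$ together with Lemma~\ref{l:aux_for_log}, which in our one-dimensional setting gives
\[
\mathcal{C}_1'(\mathcal{F},X,\theta^*,\beta^*) \;\le\; \frac{1}{n \cdot \inf_{\lambda \in \mathbb{R}} \Bigl(\|\lambda A\|_F^2 + \bigl\|\tfrac{h}{\sqrt{n}} - \lambda A\,\vtanh\bigl(\tfrac{\beta^*}{\lambda}\tfrac{h}{\sqrt{n}} + \theta^* h\bigr)\bigr\|_2^2\Bigr)}\enspace.
\]

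The key step is evaluating this infimum using the Curie--Weiss structure. Because $A$ is (up to $O(1/n)$ diagonal correction) the rank-one matrix $\tfrac{1}{n}\mathbf{1}\mathbf{1}^\top$, the vector $A u$ is, to leading order, $\bar u \cdot \mathbf{1}$ where $\bar u = \tfrac{1}{n}\sum_i u_i$. Next, the crucial observation is that since $h_i \in \{-1,+1\}$ and $\tanh$ is odd, for any scalar $c$ we have $\tanh(c h_i) = h_i \tanh(c)$, so the argument of $\vtanh$ above, which is $h \cdot \bigl(\tfrac{\beta^*}{\lambda\sqrt{n}} + \theta^*\bigr)$, yields $\vtanh(\cdots) = \tanh(c) h$ with $c = \tfrac{\beta^*}{\lambda\sqrt{n}} + \theta^*$. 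Writing $\mu := \lambda \tanh(c)(2\alpha-1)$ and using $\tfrac{1}{n}\sum_j h_j = 2\alpha - 1$, the second norm inside the infimum becomes (up to $O(1/n)$ error from the diagonal)
\[
\sum_{i=1}^n \bigl(\tfrac{h_i}{\sqrt{n}} - \mu\bigr)^2 = 1 - 2\mu\sqrt{n}(2\alpha-1) + n\mu^2,
\]
whose minimum over arbitrary $\mu \in \mathbb{R}$ is attained at $\mu = (2\alpha-1)/\sqrt{n}$ and equals $1-(2\alpha-1)^2$. Hence for every $\lambda$ the whole expression in the infimum is bounded below by $1-(2\alpha-1)^2$, giving $\mathcal{C}_1' \lesssim \tfrac{1}{n(1-(2\alpha-1)^2)}$.

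Combining this with Theorem~\ref{thm:costis:general upper} and the covering-number computation yields
\[
(\hat\theta-\theta^*)^2 \;\lesssim\; \frac{\log(n/\delta)}{n(1-(2\alpha-1)^2)}\enspace,
\]
from which the claimed bound follows by taking square roots. The main obstacle I anticipate is handling the $O(1/n)$ error introduced by $A$ having a zero diagonal (so $Ah$ is not exactly $(2\alpha-1)\mathbf{1}$), but as long as we track constants carefully this perturbation is subdominant: it shifts each term of the sum by $O(1/n)$ and therefore changes the infimum only by $O(1/n)$, which is absorbed into the constant $C$ once $n$ is large (and for small $n$ the claim is trivial since the right-hand side is $\Omega(1)$). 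A secondary, purely bookkeeping obstacle is justifying the $\lambda \to 0$ limit when $\beta^* \ne 0$: we only need the infimum over $\lambda \in \mathbb{R} \setminus \{0\}$ and the lower bound $1-(2\alpha-1)^2$ on the second term holds uniformly in $\lambda$, so this causes no issue.
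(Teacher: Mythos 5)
Your proposal is correct and follows essentially the same route as the paper: instantiate the general theorem with the one-dimensional class, bound the covering term by $O(\log(n/\delta))$, pass to $\mathcal{C}_1'$ via Lemma~\ref{l:aux_for_log}, and use the Curie--Weiss structure (the $\vtanh$ vector being proportional to $h$, so $A\vtanh(\cdot)$ is a scalar multiple of $\mathbf{1}$) to reduce the infimum to the squared distance of $h/\sqrt{n}$ from $\mathrm{span}(\mathbf{1})$, namely $1-(2\alpha-1)^2$. The paper phrases the same reduction via the two block values $\mu_1(\lambda),\mu_2(\lambda)$ and the projection of $h$ onto $\mathbf{1}$, so the two arguments coincide up to bookkeeping (signs of the direction vector and the $O(1/n)$ diagonal correction, which you handle adequately).
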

\begin{proof}
 Let $U,V\setminus U$ be the subsets of vertices that have $h_i = 1$ and $h_i = -1$ respectively. 
Let's start with acquiring a rate for $\theta^*$. 
 Since each parameter is one dimensional and bounded, according to Theorem REF, with probability $1-\delta$
 $$
 |\hat \theta - \theta^*|\|h\| \leq C\frac{\sqrt{\log n + \log (1/\delta)}}{\inf_{\lambda \in \R} \lp(|\lambda| + \lp\|\frac{h}{\|h\|} - \lambda A \tanh\lp(\frac{\beta^*}{\lambda} \frac{h}{\|h\|} + \theta^* h\rp)\rp\|\rp)}
 $$
 which implies that 
 $$
  |\hat \theta - \theta^*| \leq C\frac{\sqrt{\log n + \log (1/\delta)}}{\inf_{\lambda \in \R} \lp(|\lambda| + \lp\|h - \lambda A \tanh\lp(\frac{\beta^*}{\lambda} h + \theta^* h\rp)\rp\|\rp)}
 $$
 We have that
 $$
\inf_{\lambda \in \R} \lp(|\lambda| + \lp\|h - \lambda A \tanh\lp(\frac{\beta^*}{\lambda} h + \theta^* h\rp)\rp\|\rp)\geq \inf_{\lambda \in \R} \lp\|h - \lambda A \tanh\lp(\frac{\beta^*}{\lambda} h + \theta^* h\rp)\rp\| 
 $$

The $\tanh$ vector has a special structure. 
Specifically, coordinates corresponding to nodes in $U$ have a common value $\mu_1(\lambda)$ and  nodes in $V \setminus A$ have a common value $\mu_2(\lambda)$. Hence, if we denote by $\mathbf{1}$ the all ones vector, we have
\begin{align*}
\inf_{\lambda \in \R} \lp\|h - \lambda A \tanh\lp(\frac{\beta^*}{\lambda} h + \theta^* h\rp)\rp\|  &=
\inf_{\lambda \in \R}\lp( \lp\|h - \lambda \lp(\frac{\alpha n \mu_1(\lambda) + (1 - \alpha)n \mu_2(\lambda)}{n} \mathbf{1}\rp)\rp\|_2\rp)\\
&\geq
\inf_{\lambda \in \R}\lp( \|\lambda  \mathbf{1} +  h\|_2\rp)
\end{align*}
The latter quantity is the distance of $h$ from the subspace spanned by $\mathbf{1}$. Hence, the optimal $\lambda^*$ corresponds to the projection of $h$ onto this subspace. By standard calculations, we have that
$$
\lambda^* = - \frac{\langle h, \mathbf{1}\rangle}{\|\mathbf{1}\|^2} = - \frac{2\alpha n - n}{n} = 1 - 2\alpha
$$
and
$$
\inf_{\lambda \in \R}\lp( \|\lambda  \mathbf{1} +  h\|_2\rp) = 
\sqrt{\|h\|_2^2  - \frac{\langle h, \mathbf{1}\rangle^2}{\|\mathbf{1}\|^2}} =
\sqrt{4n(\alpha - \alpha^2)} = \sqrt{4n(1 - (2\alpha - 1)^2)}
$$
Therefore, we get that $\hat \theta$ is a
$\sqrt{\log n}/\sqrt{4n(1 - (2\alpha - 1)^2)S} $-consistent estimator. 
\end{proof}

\printbibliography

@article{pei2020geom,
  title={Geom-gcn: Geometric graph convolutional networks},
  author={Pei, Hongbin and Wei, Bingzhe and Chang, Kevin Chen-Chuan and Lei, Yu and Yang, Bo},
  journal={arXiv preprint arXiv:2002.05287},
  year={2020}
}

@inproceedings{ShaliziK13,
 author = {Shalizi, Cosma and Kontorovich, Aryeh},
 booktitle = {Advances in Neural Information Processing Systems},
 title = {Predictive PAC Learning and Process Decompositions},
 year = {2013}
}

@article{feng2020graph,
  title={Graph Random Neural Networks for Semi-Supervised Learning on Graphs},
  author={Feng, Wenzheng and Zhang, Jie and Dong, Yuxiao and Han, Yu and Luan, Huanbo and Xu, Qian and Yang, Qiang and Kharlamov, Evgeny and Tang, Jie},
  journal={Advances in Neural Information Processing Systems},
  volume={33},
  year={2020}
}

@article{kipf2016semi,
  title={Semi-supervised classification with graph convolutional networks},
  author={Kipf, Thomas N and Welling, Max},
  journal={arXiv preprint arXiv:1609.02907},
  year={2016}
}

@inproceedings{chen2020simple,
  title={Simple and deep graph convolutional networks},
  author={Chen, Ming and Wei, Zhewei and Huang, Zengfeng and Ding, Bolin and Li, Yaliang},
  booktitle={International Conference on Machine Learning},
  pages={1725--1735},
  year={2020},
  organization={PMLR}
}

@inproceedings{yang2016revisiting,
  title={Revisiting semi-supervised learning with graph embeddings},
  author={Yang, Zhilin and Cohen, William and Salakhudinov, Ruslan},
  booktitle={International conference on machine learning},
  pages={40--48},
  year={2016},
  organization={PMLR}
}

@article{manski1993identification,
  title={Identification of endogenous social effects: The reflection problem},
  author={Manski, Charles F},
  journal={The review of economic studies},
  volume={60},
  number={3},
  pages={531--542},
  year={1993},
  publisher={Wiley-Blackwell}
}

@article{raskutti2011minimax,
  title={Minimax rates of estimation for high-dimensional linear regression over l\_q -balls},
  author={Raskutti, Garvesh and Wainwright, Martin J and Yu, Bin},
  journal={IEEE transactions on information theory},
  volume={57},
  number={10},
  pages={6976--6994},
  year={2011},
  publisher={IEEE}
}

@article{bartlett2017spectrally,
  title={Spectrally-normalized margin bounds for neural networks},
  author={Bartlett, Peter and Foster, Dylan J and Telgarsky, Matus},
  journal={arXiv preprint arXiv:1706.08498},
  year={2017}
}

@inproceedings{daskalakis2019regression,
  title={Regression from dependent observations},
  author={Daskalakis, Constantinos and Dikkala, Nishanth and Panageas, Ioannis},
  booktitle={Proceedings of the 51st Annual ACM SIGACT Symposium on Theory of Computing},
  pages={881--889},
  year={2019}
}

@article{ghosal2018joint,
  title={Joint estimation of parameters in Ising model},
  author={Ghosal, Promit and Mukherjee, Sumit},
  journal={arXiv preprint arXiv:1801.06570},
  year={2018}
}

@article{dagan2020estimating,
  title={Learning ising models from one or several samples},
  author={Dagan, Yuval and Daskalakis, Constantinos and Dikkala, Nishanth and Kandiros, Anthimos Vardis},
  journal={arXiv preprint arXiv:2004.09370},
  year={2020}
}

@article{adamczak2019note,
  title={A note on concentration for polynomials in the Ising model},
  author={Adamczak, Rados{\l}aw and Kotowski, Micha{\l} and Polaczyk, Bart{\l}omiej and Strzelecki, Micha{\l}},
  journal={Electronic Journal of Probability},
  volume={24},
  year={2019},
  publisher={The Institute of Mathematical Statistics and the Bernoulli Society}
}

@article{chatterjee2007estimation,
  title={Estimation in spin glasses: A first step},
  author={Chatterjee, Sourav},
  journal={The Annals of Statistics},
  volume={35},
  number={5},
  pages={1931--1946},
  year={2007},
  publisher={Institute of Mathematical Statistics}
}

@phdthesis{chatterjee2005concentration,
  title={Concentration inequalities with exchangeable pairs},
  author={Chatterjee, Sourav},
  year={2005},
  school={Citeseer}
}

@article{bhattacharya2018inference,
  title={Inference in Ising models},
  author={Bhattacharya, Bhaswar B and Mukherjee, Sumit},
  journal={Bernoulli},
  volume={24},
  number={1},
  pages={493--525},
  year={2018},
  publisher={Bernoulli Society for Mathematical Statistics and Probability}
}

@book{vershynin2018high,
  title={High-dimensional probability: An introduction with applications in data science},
  author={Vershynin, Roman},
  volume={47},
  year={2018},
  publisher={Cambridge university press}
}

@book{boucheron2013concentration,
  title={Concentration inequalities: A nonasymptotic theory of independence},
  author={Boucheron, St{\'e}phane and Lugosi, G{\'a}bor and Massart, Pascal},
  year={2013},
  publisher={Oxford university press}
}

@article{vershynin2010introduction,
  title={Introduction to the non-asymptotic analysis of random matrices},
  author={Vershynin, Roman},
  journal={arXiv preprint arXiv:1011.3027},
  year={2010}
}

@article{gamarnik2003extension,
  title={Extension of the PAC framework to finite and countable Markov chains},
  author={Gamarnik, David},
  journal={IEEE Transactions on Information Theory},
  volume={49},
  number={1},
  pages={338--345},
  year={2003},
  publisher={IEEE}
}

@inproceedings{london2013collective,
  title={Collective stability in structured prediction: Generalization from one example},
  author={London, Ben and Huang, Bert and Taskar, Ben and Getoor, Lise},
  booktitle={International Conference on Machine Learning},
  pages={828--836},
  year={2013}
}

@article{london2016stability,
  title={Stability and generalization in structured prediction},
  author={London, Ben and Huang, Bert and Getoor, Lise},
  journal={The Journal of Machine Learning Research},
  volume={17},
  number={1},
  pages={7808--7859},
  year={2016},
  publisher={JMLR. org}
}

@article{sly2014counting,
  title={Counting in two-spin models on d-regular graphs},
  author={Sly, Allan and Sun, Nike},
  journal={The Annals of Probability},
  volume={42},
  number={6},
  pages={2383--2416},
  year={2014},
  publisher={Institute of Mathematical Statistics}
}

@article{Ellison93,
  author        = {Ellison, Glenn},
  title         = {Learning, Local Interaction, and Coordination},
  journal       = {Econometrica},
  volume        = {61},
  number        = {5},
  pages         = {1047--1071},
  year          = {1993},
  publisher     = {JSTOR}
}

@article{DaskalakisMR11,
  author        = {Daskalakis, Constantinos and Mossel, Elchanan and Roch, S{\'e}bastien},
  title         = {Evolutionary Trees and the {I}sing Model on the {B}ethe Lattice: A Proof of {S}teel's Conjecture},
  journal       = {Probability Theory and Related Fields},
  volume        = {149},
  number        = {1},
  pages         = {149--189},
  year          = {2011},
  publisher     = {Springer}
}

@book{Felsenstein04,
  author        = {Felsenstein, Joseph},
  title         = {Inferring Phylogenies},
  year          = {2004},
  publisher     = {Sinauer Associates Sunderland}
}

@inproceedings{GemanG86,
  author        = {Geman, Stuart and Graffigne, Christine},
  title         = {{M}arkov Random Field Image Models and their Applications to Computer Vision},
  booktitle     = {Proceedings of the International Congress of Mathematicians},
  year          = {1986},
  pages         = {1496--1517},
  publisher     = {American Mathematical Society}
}

@inproceedings{pestov2010predictive,
	title={Predictive PAC learnability: A paradigm for learning from exchangeable input data},
	author={Pestov, Vladimir},
	booktitle={Granular Computing (GrC), 2010 IEEE International Conference on},
	pages={387--391},
	year={2010},
	organization={IEEE}
}

@article{berti2009rate,
	title={Rate of convergence of predictive distributions for dependent data},
	author={Berti, Patrizia and Crimaldi, Irene and Pratelli, Luca and Rigo, Pietro},
	journal={Bernoulli},
	volume={15},
	number={4},
	pages={1351--1367},
	year={2009},
	publisher={Bernoulli Society for Mathematical Statistics and Probability}
}

@article{yu1994rates,
	title={Rates of convergence for empirical processes of stationary mixing sequences},
	author={Yu, Bin},
	journal={The Annals of Probability},
	pages={94--116},
	year={1994},
	publisher={JSTOR}
}

@inproceedings{mohri2009rademacher,
	title={Rademacher complexity bounds for non-iid processes},
	author={Mohri, Mehryar and Rostamizadeh, Afshin},
	booktitle={Advances in Neural Information Processing Systems},
	pages={1097--1104},
	year={2009}
}

@inproceedings{kuznetsov2015learning,
	title={Learning theory and algorithms for forecasting non-stationary time series},
	author={Kuznetsov, Vitaly and Mohri, Mehryar},
	booktitle={Advances in neural information processing systems},
	pages={541--549},
	year={2015}
}

@article{McDonald2017,
  title         = {Rademacher complexity of stationary sequences},
  author        = {Daniel J. McDonald and Cosma Rohilla Shalizi},
  journal       = {arXiv preprint arXiv:1106.0730},
  year          = {2017}
}

@article{valiant2019worstcase,
  author    = {Justin Y. Chen and
               Gregory Valiant and
               Paul Valiant},
  title     = {How bad is worst-case data if you know where it comes from?},
  journal   = {arXiv},
  volume    = {abs/1911.03605},
  year      = {2019}
}

@article{besag1974spatial,
  title={Spatial interaction and the statistical analysis of lattice systems},
  author={Besag, Julian},
  journal={Journal of the Royal Statistical Society: Series B (Methodological)},
  volume={36},
  number={2},
  pages={192--225},
  year={1974},
  publisher={Wiley Online Library}
}

@article{bresler2018optimal,
  title={Optimal single sample tests for structured versus unstructured network data},
  author={Bresler, Guy and Nagaraj, Dheeraj},
  journal={arXiv preprint arXiv:1802.06186},
  year={2018}
}

@inproceedings{daskalakis2017concentration,
	title={Concentration of multilinear functions of the Ising model with applications to network data},
	author={Daskalakis, Constantinos and Dikkala, Nishanth and Kamath, Gautam},
	booktitle={Advances in Neural Information Processing Systems},
	pages={12--23},
	year={2017}
}

@article{bramoulle2009identification,
  title={Identification of peer effects through social networks},
  author={Bramoull{\'e}, Yann and Djebbari, Habiba and Fortin, Bernard},
  journal={Journal of econometrics},
  volume={150},
  number={1},
  pages={41--55},
  year={2009},
  publisher={Elsevier}
}

@article{glaeser1996crime,
  title={Crime and social interactions},
  author={Glaeser, Edward L and Sacerdote, Bruce and Scheinkman, Jose A},
  journal={The Quarterly Journal of Economics},
  volume={111},
  number={2},
  pages={507--548},
  year={1996},
  publisher={MIT Press}
}

@article{bertrand2000network,
  title={Network effects and welfare cultures},
  author={Bertrand, Marianne and Luttmer, Erzo FP and Mullainathan, Sendhil},
  journal={The Quarterly Journal of Economics},
  volume={115},
  number={3},
  pages={1019--1055},
  year={2000},
  publisher={MIT Press}
}

@article{sacerdote2001peer,
  title={Peer effects with random assignment: Results for Dartmouth roommates},
  author={Sacerdote, Bruce},
  journal={The Quarterly journal of economics},
  volume={116},
  number={2},
  pages={681--704},
  year={2001},
  publisher={MIT Press}
}

@article{duflo2003role,
  title={The role of information and social interactions in retirement plan decisions: Evidence from a randomized experiment},
  author={Duflo, Esther and Saez, Emmanuel},
  journal={The Quarterly journal of economics},
  volume={118},
  number={3},
  pages={815--842},
  year={2003},
  publisher={MIT Press}
}

@article{trogdon2008peer,
  title={Peer effects in adolescent overweight},
  author={Trogdon, Justin G and Nonnemaker, James and Pais, Joanne},
  journal={Journal of health economics},
  volume={27},
  number={5},
  pages={1388--1399},
  year={2008},
  publisher={Elsevier}
}

@article{christakis2013social,
  title={Social contagion theory: examining dynamic social networks and human behavior},
  author={Christakis, Nicholas A and Fowler, James H},
  journal={Statistics in medicine},
  volume={32},
  number={4},
  pages={556--577},
  year={2013},
  publisher={Wiley Online Library}
}

@article{mohri2010stability,
  title={Stability bounds for stationary $\varphi$-mixing and $\beta$-mixing processes},
  author={Mohri, Mehryar and Rostamizadeh, Afshin},
  journal={Journal of Machine Learning Research},
  volume={11},
  number={Feb},
  pages={789--814},
  year={2010}
}

@inproceedings{dagan2019learning,
	title={Learning from Weakly Dependent Data under Dobrushin’s Condition},
	author={Dagan, Yuval and Daskalakis, Constantinos and Dikkala, Nishanth and Jayanti, Siddhartha},
	booktitle={Conference on Learning Theory},
	pages={914--928},
	year={2019}
}

@inproceedings{raskutti2009model,
	title={Model Selection in Gaussian Graphical Models: High-Dimensional Consistency of $l_1$-regularized MLE},
	author={Raskutti, Garvesh and Yu, Bin and Wainwright, Martin J and Ravikumar, Pradeep K},
	booktitle={Advances in Neural Information Processing Systems},
	pages={1329--1336},
	year={2009}
}

\end{document}